\newcommand{\sgs}{\texttt{SGS-GNN}\xspace}
\newcommand{\mlp}{\texttt{MLP}\xspace}
\newcommand{\edgemlp}{\texttt{EdgeMLP}\xspace}
\newcommand{\gnn}{\texttt{GNN}\xspace}
\newcommand{\relu}{\texttt{ReLU}\xspace}
\theoremstyle{plain}
\newtheorem{theorem}{Theorem}[section]
\newtheorem{lemma}[theorem]{Lemma}
\theoremstyle{definition}
\theoremstyle{remark}
\def\ceil#1{\lceil #1 \rceil}
\def\floor#1{\lfloor #1 \rfloor}
\def\1{\bm{1}}
\def\rX{{\textnormal{X}}}
\def\vh{{\bm{h}}}
\def\vw{{\bm{w}}}
\def\vx{{\bm{x}}}
\def\vy{{\bm{y}}}
\def\mA{{\bm{A}}}
\def\mD{{\bm{D}}}
\def\mH{{\bm{H}}}
\def\mW{{\bm{W}}}
\def\mX{{\bm{X}}}
\def\mY{{\bm{Y}}}
\def\tmA{{\tilde{\mA}}}
\def\tmH{{\tilde{\mH}}}
\DeclareMathAlphabet{\mathsfit}{\encodingdefault}{\sfdefault}{m}{sl}
\SetMathAlphabet{\mathsfit}{bold}{\encodingdefault}{\sfdefault}{bx}{n}
\def\gE{{\mathcal{E}}}
\def\gF{{\mathcal{F}}}
\def\gG{{\mathcal{G}}}
\def\gH{{\mathcal{H}}}
\def\gL{{\mathcal{L}}}
\def\gN{{\mathcal{N}}}
\def\gV{{\mathcal{V}}}
\newcommand{\E}{\mathbb{E}}
\newcommand{\sigmoid}{\sigma}
\DeclareRobustCommand{\bigO}{%
  \text{\usefont{OMS}{cmsy}{m}{n}O}%
}
\newtheorem*{rep@theorem}{\rep@title}
\newcommand{\newreptheorem}[2]{%
	\newenvironment{rep#1}[1]{%
		\def\rep@title{\textbf{#2} \ref{##1}}%
		\begin{rep@theorem}}%
		{\end{rep@theorem}}}
\DeclarePairedDelimiter{\abs}{\lvert}{\rvert}
\def\normLtwo#1{\| #1 \|_{2}}
\icmltitlerunning{SGS-GNN: A Supervised Graph Sparsification method for Graph Neural Networks}
\begin{document}

\twocolumn[
\icmltitle{SGS-GNN: A Supervised Graph Sparsifier  for Graph Neural Networks}



\icmlsetsymbol{equal}{*}

\begin{icmlauthorlist}
\icmlauthor{Siddhartha Shankar Das}{purdue}
\icmlauthor{Naheed Anjum Arafat}{indep}
\icmlauthor{Muftiqur Rahman}{agni}
\icmlauthor{S M Ferdous}{pnnl}
\icmlauthor{Alex Pothen}{purdue}
\icmlauthor{Mahantesh M Halappanavar}{pnnl}
\end{icmlauthorlist}

\icmlaffiliation{purdue}{Department of Computer Science, Purdue University,  West Lafayette, IN 47907, USA}
\icmlaffiliation{indep}{Independent Researcher, USA}
\icmlaffiliation{agni}{Islamic University of Technology, Dhaka, Bangladesh}
\icmlaffiliation{pnnl}{Pacific Northwest National Lab, Richland, WA, USA}

\icmlcorrespondingauthor{Siddhartha Shankar Das}{das90@purdue.edu}
\icmlcorrespondingauthor{Naheed Anjum Arafat}{naheed\_anjum@u.nus.edu}

\icmlkeywords{Machine Learning, ICML}

\vskip 0.3in
 ]



\printAffiliationsAndNotice{}  

\begin{abstract}
We propose \sgs,  a novel supervised graph sparsifier that learns the sampling probability distribution of edges and samples sparse subgraphs of a user-specified size to reduce the computational costs required by GNNs for inference tasks on large graphs.  
\sgs employs regularizers in the loss function to enhance homophily in sparse subgraphs, boosting the accuracy of GNNs on heterophilic graphs, where a significant number of the neighbors of a node have dissimilar labels.  
\sgs also supports conditional updates of the probability distribution learning module based on a prior, which helps narrow the search space for sparse graphs.
\sgs requires fewer epochs to obtain high accuracies since it learns the search space of subgraphs more effectively than methods using fixed distributions such as random sampling. 
Extensive experiments using $33$ homophilic and heterophilic graphs demonstrate the following: 
$(i)$ with only $20\%$ of edges retained in the sparse subgraphs, \sgs improves the F1-scores by a geometric mean of $4\%$ relative to the original graph; on heterophilic graphs, the prediction accuracy is better up to $30\%$.  $(ii)$ \sgs outperforms state-of-the-art methods with improvement in F1-scores of $4-7\%$ in geometric mean with similar sparsities in the sampled subgraphs, and $(iii)$ compared to sparsifiers that employ fixed distributions, \sgs requires about half the number of epochs to converge.
\end{abstract}
\section{Introduction}
\label{sec:Intro}

Given a graph $\gG \triangleq (\gV, \gE, \mX)$ with node features $\mX$, the goal of a graph neural network (GNN) with hidden dimension $H$ is to learn an encoding $\vh_v \in \mathbb{R}^H$ of each node $v \in \gV$ that encodes the neighborhood information of $v$. The encoding $\vh_v$ is used to compute an output $\vy_v$ for a variety of predictive tasks (detailed in \S\ref{sec:prelim}). 
GNNs \cite{GNNBook2022,zhou2020graph} are effective tools for learning from graph-structured data, utilized in areas like social networks~\cite{wu2021self}, biological networks~\cite{muzio2021biological}, computational physics~\cite{jessicafinite}, and recommender systems~\cite{yu2022graph}. However, their computational and memory demands can be substantial, particularly for large graphs. 
The two primary steps in a Graph Neural Network (GNN) are \textit{aggregation} and \textit{update}. For each node, the aggregation step involves accumulating embeddings from neighboring nodes using sparse matrix operations such as sparse-matrix dense-matrix multiplication (\texttt{SpMM}) and sampled dense-matrix dense-matrix multiplication (\texttt{SDDMM}) \cite{zhang2024graph}. During the update step, a node updates its own embedding based on the aggregated information using dense matrix operations (\texttt{MatMul}) \cite{zhang2024graph}. 
Approximately \textbf{70\%} of the total cost is attributed to the \texttt{SpMM} operations \cite{liu2023dspar}.
Thus, by systematically removing nonessential edges, \textit{graph sparsification}  \cite{chen2023demystifying, hashemi2024comprehensive} reduces computational costs and memory requirements,  and significantly speeds up GNN training and inference.



A sparse subgraph $\tilde{\gG} \triangleq (\gV, \tilde{\gE},\mX)$ contains at most $q\%$ of original edges, $\tilde{\gE} \subseteq \gE$ (defined in \S\ref{sec:prelim}; $q$ is an input parameter).
A graph can be sparsified using two broad approaches: unsupervised and supervised.
Unsupervised graph sparsification methods, such as spectral sparsification~\cite{batson2013spectral} and spanners~\cite{dragan2011spanners}, only focus on the structural characteristics of graphs, neglecting downstream tasks and the differences between homophilic (similar nodes connect) and heterophilic graphs (dissimilar nodes connect). This oversight can result in sparsified graphs that do not effectively support tasks like node classification or link prediction~\cite{zheng2020robust}. 
In contrast, supervised graph sparsification methods~\cite{zheng2020robust} aim to reduce graph complexity while maintaining relevance to downstream tasks~\cite{luo2021learning,wu2023alleviating,sparsegat}. Methods such as SparseGAT~\cite{sparsegat} and SuperGAT~\cite{kim2022find} are considered \emph{implicit sparsifiers} as they do not create a standalone sparse subgraph for independent use, and thus they do not reduce GNN memory requirements. In contrast, NeuralSparse~\cite{zheng2020robust} is an \emph{explicit sparsifier} that constructs a subgraph based on a neighborhood size, but this approach can lack precise control over sparsity and may retain unnecessary edges. 
The vast search space for sparse subgraphs makes it hard for supervised sparsifiers to find an optimal sampling distribution within limited iterations. Further, sparsifier modules can be compute-intensive with limited support for batch processing.

\begin{figure}[t]
\centering
\includegraphics[width=\columnwidth]{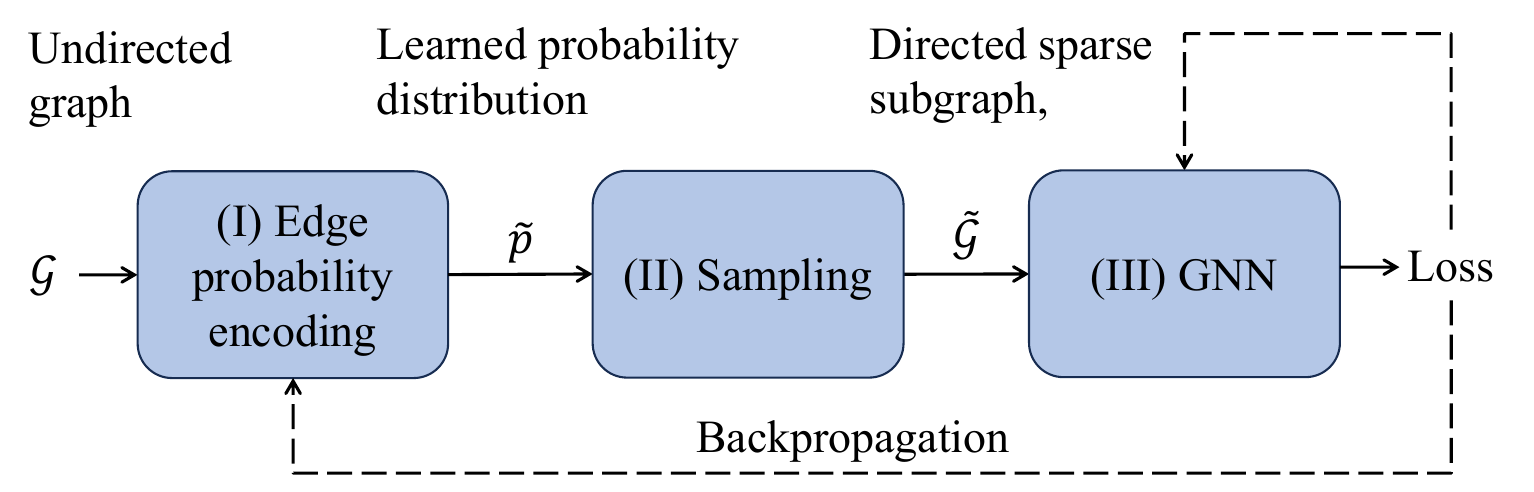}
\caption{Simplified architecture of 
\sgs.}
\label{fig:simplesgsgnn}
\end{figure}

To address these limitations, we propose \sgs (\S\ref{sec:Method}), a fast and lightweight supervised graph sparsifier that learns the edge probability distribution of an input graph to construct a sparse subgraph for downstream GNNs. \sgs consists of three components (Fig.~\ref{fig:simplesgsgnn}): (i) edge probability encoding, (ii) sparse subgraph sampler, and (iii) downstream GNN. The edge probability encoding, called \edgemlp, maps associated node features of edges into probabilities indicating the likelihood of specific edges to exist in the learned sparse subgraph. The subgraph sampler then samples a subgraph based on these learned probabilities. \emph{Finally, the GNN can be any message passing through a neural network.} 
\sgs provides a unique advantage by allowing the downstream GNN to operate solely on a reduced sparse graph instead of the input graph and supports batch processing for large graphs.
In addition to the task-specific loss, one of the regularizers promotes homophily in sampled subgraphs, enhancing prediction quality in heterophilic scenarios.
The key contributions of our work are:
\begin{enumerate}[wide, labelindent=2pt,itemsep=1pt,topsep=1pt]
    \item We propose a novel graph sparsification technique, \sgs, that works for both homophilic and heterophilic graphs (\S\ref{sec:Method}). With \sgs, users can control the amount of sparsity, efficiently learn sampling probabilities, and scale to large graphs with batch processing. \sgs incorporates degree-proportionate edge weight as \emph{prior} sampling distribution to guide the search for sparse subgraphs. 
    The conditional update module in \sgs encodes edge probabilities to further optimize performance and adaptability.

    \item We provide theoretical bounds and empirical validation for \sgs, ensuring quality of the learned embeddings.
    
    \item Extensive experiments on 33 benchmark graphs (21 heterophilic, 12 homophilic) show that \sgs outperforms almost all original dense heterophilic graphs with up to $30\%$ improvement in F1-scores (on a scale of 100) using only $20\%$ of the edges (\S\ref{subsubsec:fixedsampler}). In homophilic graphs, \sgs remains competitive. 
    In similar settings, \sgs outperforms sparsification-based GNN methods such as GraphSAINT~\cite{zeng2019graphsaint}, NeuralSparse~\cite{zheng2020robust}, SparseGAT~\cite{sparsegat}, MOG~\cite{zhang2024graph}, DropEdge~\cite{rong2019dropedge}, with geometric mean improvement of $4-7\%$  in F1-scores. 
    
    \item Additionally, \edgemlp surpasses fixed distribution sparsifiers and requires fewer epochs. \sgs is significantly faster than related supervised sparsifiers (\S\ref{subsubsec:runtime}).
\end{enumerate}
\section{Preliminaries}
\label{sec:prelim}
$\gG \triangleq (\gV, \gE, \mX)$ is an undirected graph with its set of nodes and edges denoted by $\gV$ and $\gE$ respectively.  The node feature matrix $\mX \in \mathbb{R}^{|\gV| \times F}$ contains node feature $\vx_v \in \mathbb{R}^F$ as a row vector for every node $v \in \gV$. The adjacency matrix $\mA_{\gG}$ of size $|\gV| \times |\gV|$ captures the neighborhood of each node in $\gG$. In node classification, the goal is to predict a label $y_v \in C$ for each node $v \in \gV$ among the $\abs{C}$ possible class labels. The training uses labeled nodes $\gV_L \subset \gV$, while the unlabeled nodes $\gV_U = \gV \setminus \gV_L$ are used for validation and testing. A single-layer Graph Convolutional Network (GCN)~\cite{kipf2016semi} is defined as:
\begin{equation}
\mH^{(l+1)} = \sigma(\hat{\mA}_\gG\mH^{(l)}\mW^{(l)}),
\label{eq:gcnlayer}
\end{equation}
where $\mH^{(l)}$ represents the node embedding at layer $l$, with $\mH^{(0)}=\mX$, and $\mW^{(l)}$ is the learnable weight matrix in layer $l$. $\hat{\mA}_\gG$ denotes the normalized adjacency matrix and $\sigma$ is an activation function such as \relu. The predicted probabilities can be expressed as
$\hat{\mY} = \texttt{Softmax}(f_{\text{GNN}, \theta}(\gG))$ where $f_{\text{GNN}, \theta}(\gG)$ is a GCN model with $L$ layers and learnable parameters $\theta$. The dimension of $\hat{\mY}$ is $|\gV| \times \abs{C}$. The training objective is to find parameters $\theta$ that minimize the cross-entropy loss,
\vspace{-6pt}
\begin{equation}
\label{eq:loss}
\mathcal{L}_\mathrm{CE} = - \frac{1}{|\gV_L|} \sum_{v \in \gV_L} \sum_{c = 1}^{\abs{C}} Y_{vc} \log \hat{Y}_{vc},
\end{equation}
where $Y_{vc}$ indicates the true probability of node $v$ belonging to class $c \in C$. 

The \emph{homophily} of a graph characterizes the likelihood that nodes with the same labels are neighbors. Two commonly used measures are \emph{Node homophily} $\gH_n$~\citep{pei2020geom} and \emph{Edge homophily} $\gH_e$~\cite{zhu2020beyond} and defined as
\vspace{-3pt}
\begin{align}
\gH_{n} = & \frac{1}{|\gV|} \sum_{u\in \gV} \frac{| \{v\in \gN(u) : y_v = y_u\}|}{|\gN(u)|},\\
\gH_{e} = & \frac{(u,v)\in \gE: y_u = y_v}{|\gE|}.
\end{align}\vspace{-3pt}
The values of $\gH_n$ and $\gH_e$ range from $0$ to $1$, where a value close to $1$ indicates strong homophily, and a value close to $0$ indicates strong heterophily.

\subsection{Problem Statement and Theoretical Motivation}
Given $q>0$, this paper aims to construct a sparse subgraph $\tilde{\gG} \triangleq (\gV, \tilde{\gE},\mX)$ with $q\%$ of original edges in $\gE$. Let $\gG_q$ be the space of all distinct sparse subgraphs of $\gG$ where every subgraph contains exactly $k = \floor{\frac{q|\gE|}{100}}$ edges, $\gG_q = \{\tilde{\gE} \subset \gE : \abs{\tilde{\gE}} = k\}$.
The objective of our supervised sparse graph construction is to find the parameters $\theta$ along with a sparse subgraph $\tilde{\gG} \in \gG_q$ that minimize $\gL_\mathrm{CE}$.

We can define a probability space $(\Omega,\gF,p)$ by considering $\gE$ as the sample space, $\gF = \gG_q \subseteq 2^{\Omega}$ as the event space and a suitable probability measure $p: \gF \rightarrow [0,1]$. 
The probability measure is determined by which subgraphs result in a node representation that minimizes the loss in Eq.~\ref{eq:loss}, which, in turn, depends on the downstream task. As a result, it is unknown which probability distribution is suitable as a choice for $p$. Specifically, we can perform the following decomposition to predict the probability that a node $v$ belongs to a class $c\in C$,
%
\vspace{-3pt}
\begin{align}
\label{eq:theo1}
P(\tilde{Y}_{vc}|\gG) &= \sum_{\tilde{\gG} \in \gG_q} P(\tilde{Y}_{vc}|\tilde{\gG})  P(\tilde{\gG}|\gG).
\end{align}
There are two issues with the above decomposition. First, it requires enumerating all possible candidate subgraphs $\tilde{\gG} \in \gG_q$. This is computationally challenging because there are $\binom{|\gE|}{k}$ subgraphs, and it is not possible to estimate the probabilities $P(\tilde{Y}_{vc}|\tilde{\gG})$,  $P(\tilde{\gG}|\gG)$ due to their dependence on the downstream task under consideration.

We address these issues by encoding $P(\tilde{\gG}|\gG)$ as a learnable neural network module $\tilde{p} = f_{\edgemlp,\phi}(\gG)$ that explicitly learns to estimate the probability measure $p$ for every edge based on the downstream task. The neural network searches the space $\gG_q$ by adjusting its learned probability estimate $\tilde{p}$ based on the gradient of the loss. Finally, we model $P(\tilde{Y}_{vc}|\tilde{\gG})$ as a GNN that takes the sparsified sample $\tilde{\gG}$ sampled from the learned distribution $\tilde{p}$. Hence, Eq.~\ref{eq:theo1} can be approximated by,
\vspace{-3pt}
\begin{equation}
\label{eq:theo_motiv}
P(\tilde{Y}_{vc}|\gG) \approx \E_{\tilde{\gG}\sim f_{\edgemlp,\phi}(\gG)} [f_{\gnn,\theta}(\tilde{\gG})f_{\edgemlp,\phi} (\gG)].
\end{equation}
Equation~\ref{eq:theo_motiv} follows since 
instead of directly searching over the space $\gG_q$, we rely on the distribution $\tilde{p}$ approximated via a neural network. Once this distribution is learned, the law of large numbers indicates that a sufficient number of samples from $\tilde{\gG} \sim \tilde{p}$ can estimate $P(\tilde{Y})$. Thus the summation can be replaced with the expected value from a learned GNN model using sparse subgraph samples. 
\begin{figure*}[t]
	\centering
	\includegraphics[width=1.0\linewidth]{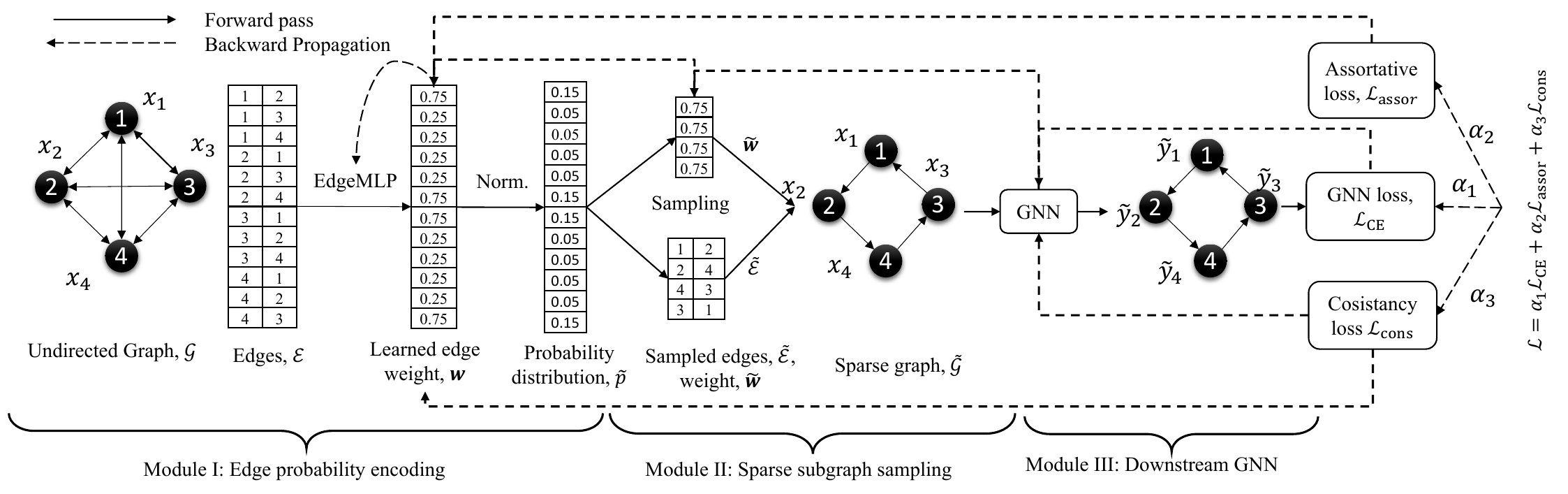}
	\caption{Illustration of the three modules in \sgs. The edge probability encoding module computes a probability distribution, the sampler module samples the subgraph, and downstream GNN makes predictions using that sparse subgraph.
	}
	\label{fig:sgsarchitecture}
\end{figure*}
\section{Related Work}
\label{sec:related}


\noindent\textbf{Unsupervised Graph Sparsification.} 
Effective resistance (ER)~\cite{spielman2011graph} based sampling generates spectral sparse subgraphs while bounding the eigenvalues of the original graph's Laplacian. FastGAT~\cite{srinivasa2020fast} uses ER to improve GNN efficiency, but the high computational cost of ER makes it impractical for large graphs.  However, a major advantage of ER is its ability to produce multiple sparse subgraphs, minimizing information loss and improving GNN performance compared to single sparse graph methods such as $k$-NN.
The random sparsifier is the fastest approach to get sparse subgraphs and is widely used in GNNs such as DropEdge~\cite{rong2019dropedge} and GraphSAGE~\cite{hamilton2017inductive}. GraphSAINT~\cite{zeng2019graphsaint} uses the normalized \emph{degree} as edge weight to assign low sampling probability to edges in denser clusters. Later, it is used for sampling subgraphs and often produces better results than random sparsifiers. 
Spanner (e.g., $t$-spanner)~\cite{dragan2011spanners} is a topology-based sparsifier that preserves distances between nodes in a sparse subgraph by a factor of $t$. 
Spanning Tree and Forest are useful sparsifiers, as they preserve node connectivity, which helps message propagation in GNNs. Although both of these lack control over sparsity, the notion of connectivity is essential. 
Some other topology-based sparsifiers include  Rank Degree Sparsifier~\cite{voudigari2016rank}, Local Degree Sparsifier~\cite{hamann2016structure},
Forest Fire~\cite{leskovec2007graph}, and Degree-based sparsification~\cite{su2024generic, liu2023dspar}.
Another class of unsupervised sparsifiers first computes similarities between two nodes as edge weight and then 
samples. It could be structural similarity, such as \emph{Jaccard distance} on a portion of shared neighbors (SCAN~\cite{xu2007scan}) or feature similarity (SimSparse~\cite{wu2023alleviating}, AGS-GNN~\cite{das2024ags}).


\noindent\textbf{Supervised Graph Sparsification.}
Supervised graph sparsification may have computational overhead due to their training phase but is often compensated by prediction quality in noisy or heterophilic graphs. Methods like SparseGAT~\cite{sparsegat} and SuperGAT~\cite{kim2022find} use the entire graph information during GNN training and learn sparse subgraphs through regularizers. 
SGCN~\cite{li2020sgcn} is another sparsification method that optimizes the runtime by alternating the sparsifier's and GNN's learning. 
NeuralSparse~\cite{zheng2020robust}, PDTNet~\cite{luo2021learning}, explicitly learn to sample sparse subgraphs. Ours \sgs falls into a similar category with distinctions. NeuralSparse samples $k$-neighbors from each node to generate the sparse graphs, whereas \sgs globally learns the sampling distribution of all edges and then samples from that distribution, which is significantly faster. 
\sgs uses a prior probability distribution to narrow the sparse subgraph search space; this notion of prior is useful~\cite{wang2024probability} and recent work like Mixture of Graphs (MOG)~\cite{zhang2024graph} uses \textit{Jaccard similarity}, \textit{gradient magnitude}, and \textit{effective resistance} as prior for sparse subgraph selection.
Additionally, LAGCN~\cite{chen2020label} employs an edge classifier to modify graphs based on training nodes, while our \edgemlp coupled with regularizer uses training edges to foster homophily in the sampled subgraph.
%
%
%
%
%
\section{Proposed method: \sgs}
\label{sec:Method}
Figure~\ref{fig:sgsarchitecture} depicts our proposed method \sgs. In the following, we discuss its major components.
\subsection{Module I: Edge Probability Encoding}
Given input $\gG$, the edge probability encoding module (\edgemlp) maps the node features to edge weights in the range $[0,1]$ followed by normalization to turn the learned weights into probabilities. The learned edge weights represent the model's unnormalized confidence in the existence of each edge. 
\edgemlp learns the edge weights of $(u,v)$ as a function of node embeddings $\vh_u,\vh_v$:
\begin{equation}
\label{eq:w_uv}
w(e_{uv}) = \sigmoid(\mlp_{\phi}((\vh_u - \vh_v) \oplus (\vh_u \odot \vh_v))).
\end{equation}
Here, $\sigmoid$ refers to \texttt{Sigmoid} activation function, $\oplus$ indicates concatenation, and $\odot$ represents element-wise multiplication. Let us assume $\vh_u$ indicates the node embedding in matrix $\mH$ corresponding to node $u$. Thus the node embedding matrix $\mH$ can be computed from an \mlp in the following manner: 
$\mH = \relu(\mlp_\mW(\mX))$, where $\mlp_\mW$ is an MLP with weights $\mW$. 
\mlp is computationally efficient; however, it does not exploit the graph structural information. As a result, \mlp is not necessarily the most effective choice, as we have shown later in the Ablation study (section \ref{app:ablationstudy}).

A common way to incorporate graph structural information  is to use graph convolutions such as vanilla GCN~\cite{kipf2016semi} or SAGE convolution~\cite{hamilton2017inductive}.
For instance, one can compute graph-structure aware node embedding matrix using a single-layer \texttt{GCN} as the following: $\mH = \sigma(\hat{\mA}_\gG\mX\mW)$,  where, $\mW$ is the learnable weight matrix. However, considering the entire graph $\mA_\gG$ is memory intensive for large graphs. 

Thus \sgs takes a length $\floor{\frac{q|\gE|}{100})}$ subset of edges $\gE_\mathrm{sp} \subseteq \gE$ following a fixed prior probability distribution $p_\mathrm{prior}$ and uses the induced subgraph $\gG[\gE_\mathrm{sp}]$ for computing node embedding $\mH$. In order to maintain good connectivity in $\gG[\gE_{sp}]$, the prior distribution is defined as the following:
$\forall_{(u,v) \in \gE}~p_\mathrm{prior}(u,v) \propto (\frac{1}{d_u} + \frac{1}{d_v})$, where $d_u,d_v$ are the degrees of nodes $u,v$.
%


\noindent\textbf{Normalization.} 
Normalization turns the learned edge weights into a valid probability distribution. One simple choice is \emph{sum-normalization} computed as following: $\tilde{p}(e_{uv}) = {w(e_{uv})}/{\sum_{(u,v)\in \gE} w(e_{uv})}$.
Another choice is \emph{softmax-normalization} with temperature annealing,
\begin{equation}
\label{eq:softmaxnorm}
\tilde{p}(e_{uv}) = \frac{\exp(w(e_{uv})/T)}{\sum_{(u,v)\in \gE} \exp(w(e_{uv})/T)}.
\end{equation}
Here, $T>0$ is the temperature parameter.
When $T$ is large, the learned distribution $\tilde{p}$ approaches uniform distribution over edges, whereas the learned distribution $\tilde{p}$ approaches categorical distribution when $T$ is small~\cite{jang2016categorical}. 
%
As the learned distribution approaches uniform distribution, the model tends to explore more diverse subgraphs from subgraph space $\gG_q$. On the other hand, as the learned distribution approaches categorical distribution, the model tends to explore less in $\gG_q$.
Hence, we vary $T$ as a function of training iterations such that in early iterations, the algorithm explores more while narrowing down to its preferred search space later on. 
We execute such an annealing mechanism with the following equation:
\begin{equation}
T = \max (T_\mathrm{min},T_0 - \mathrm{epoch} \cdot r),
\end{equation}
where $r = (T_0 - T_\mathrm{min})/{\mathrm{max\_epochs}}$ is the annealing rate, $T_\mathrm{min}$ is the minimum allowable temperature, and $T_0$ is the initial temperature. The temperature linearly decreases from the initial value $T_0$ to its final value $T_\mathrm{min}$ with the epochs. We keep track of the $T$-value that gives the best validation accuracy and use it later during inference.
 
Alg.~\ref{alg:edgmlp} shows the pseudocode for \edgemlp. 

\begin{algorithm}[!hbt]
\caption{\edgemlp Module}
\begin{algorithmic}[1] 
\small
\STATE \textbf{Input:} $\gG (\gV, \gE, \mX)$, sample \% $q$, \#layers $L$, $\mathrm{epoch}$,  $\mathrm{max\_epochs}$
\STATE $\forall_{(u,v)\in\gE}~p_\mathrm{prior}(u,v) \gets \frac{1/d_u + 1/d_v}{\sum_{i,j\in \gE} (1/d_i + 1/d_j)}$
\STATE $\gE_\mathrm{sp} \gets \text{Multinomial}(\gE, p_\mathrm{prior}, \floor{\frac{q|\gE|}{100}})$ 
\STATE $\mH \gets \texttt{GCN}_\mW(\gE_\mathrm{sp},\mX, L)$
\STATE $\forall_{(u,v)\in \gE}~\vw(u,v) = \sigma(\mlp_{\phi}((\vh^{(i)}_u - \vh^{(i)}_v) \oplus (\vh^{(i)}_u \odot \vh^{(i)}_v))$
\STATE $T \gets \max (T_\mathrm{min},T_0 - \mathrm{epoch} \cdot \frac{T_0 - T_\mathrm{min}}{\mathrm{max}\_\mathrm{epochs}})$
\STATE $\tilde{p} \gets \mathrm{Softmax}(\vw/T)$    
\STATE \textbf{Return} $\tilde{p}, \vw$
\end{algorithmic}
\label{alg:edgmlp}
\end{algorithm}
%


\subsection{Module II: Sparse Subgraph Sampling}
Given the learned distribution $\tilde{p}$ over the edges of the input graph, sparse subgraph sampling aims to construct a sparse graph with the user-given sparsity constraint $q$. We do not know which discrete distribution has $\tilde{p}$ as parameters. A natural choice is to construct $\tilde{\gG} = (\gV,\tilde{\gE},\mX)$ by assuming that $\tilde{p}$ is a parameter of a \emph{Multinomial} distribution. Hence we can sample $k=\floor{\frac{q|\gE|}{100}}$ edges as
$\tilde{\gE} \sim \text{Multinomial}(\tilde{p},k)$.

We can also construct $\tilde{\gG}$ by assuming that $\tilde{p}$ is a parameter of some categorical distribution and use \emph{Gumbel Softmax trick}~\cite{jang2016categorical}. The idea is to induce \emph{Gumbel noise} $g_{uv}\sim Gumbel(0,1)$ to the edges and select Top-$K$ edges with the highest probabilities.
In order to sample edges according to categorical distribution, we replace our softmax-normalization (Equation~\ref{eq:softmaxnorm}) with the following:
\begin{equation}
\tilde{p}(e_{uv}) = \frac{\exp(({\log w(e_{uv})+g_{uv}})/{T})}{\sum_{(u,v)\in \gE} \exp({(\log w(e_{uv})+g_{uv})}/{T
})}.    
\end{equation}
Adding noise ensures that we are taking different samples at each time, and with low temperatures ($T=0.1, T=0.5$), the samples become identical to samples from a categorical distribution~\cite{jang2016categorical}. 

\noindent\textbf{Theoretical analysis I.} 
Let $\mathcal{E}^*$ and $\mathcal{\tilde{E}}$ denote the ordered collection of edges sampled by the idealized learning ORACLE according to true distribution $p^*$ and by \sgs according to learned probability $\tilde{p}$ respectively. For analytical convenience, let us assume that the algorithm samples $k$ edges with replacement. We have the following theorem that lower-bounds the \#edges common between sampled subgraphs from \sgs and idealized learning ORACLE.

\begin{theorem}[Lower-bound] The expected number of edges sampled by both \sgs and idealized learning ORACLE satisfies
\vspace{-10pt}
\begin{equation} 
\mathbb{E}[|\mathcal{E}^* \cap \mathcal{\tilde{E}}|] \geq k \sum_{j=1}^{|\mathcal{E}|} \frac{(p^*_j + \tilde{p}_j - \epsilon)^2}{4},
\end{equation}
where $k = \floor{q|\mathcal{E}|/100}$ with $0 \leq q \leq 100$ as a user-specified parameter and $\epsilon\in [0,1]$ is the error.
\end{theorem}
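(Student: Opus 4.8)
\emph{Proof plan.} Because the $k$ edges are drawn \emph{with replacement}, I would model each sampler as $k$ independent draws from its distribution: draw $i$ of the ORACLE picks edge $j$ with probability $p^*_j$, draw $i$ of \sgs picks edge $j$ with probability $\tilde{p}_j$, and the two samplers are independent of one another. I would read $|\mathcal{E}^* \cap \tilde{\mathcal{E}}|$ as the multiset overlap $\sum_j \min(N^*_j, \tilde{N}_j)$, where $N^*_j$ and $\tilde{N}_j$ count how often edge $j$ is drawn by the ORACLE and by \sgs. The key reduction is to bound this overlap below by the number of \emph{positionwise agreements} $\sum_{i=1}^{k} \mathbf{1}\{\text{draw } i \text{ agrees}\}$: an agreement at position $i$ forces the chosen edge into both multisets, so for each $j$ the positionwise count never exceeds $\min(N^*_j,\tilde{N}_j)$, and the positionwise total has a clean expectation.

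First I would compute the per-draw collision probability. Fixing a draw index $i$ and conditioning on the ORACLE's choice, independence of the two samplers gives $P(\text{agreement at draw } i) = \sum_{j=1}^{|\mathcal{E}|} p^*_j \tilde{p}_j$. Summing over the $k$ draws and using linearity of expectation yields
\begin{equation}
\E[|\mathcal{E}^* \cap \tilde{\mathcal{E}}|] \;\geq\; k \sum_{j=1}^{|\mathcal{E}|} p^*_j \tilde{p}_j .
\end{equation}

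The rest is algebra, and this is where $\epsilon$ enters: I would take $\epsilon$ to be a uniform bound on the per-edge approximation error of the learned distribution, i.e. $|p^*_j - \tilde{p}_j| \leq \epsilon$ for all $j$, which is the precise sense in which $\tilde{p}$ approximates $p^*$. Using the identity $p^*_j \tilde{p}_j = \tfrac14\big[(p^*_j + \tilde{p}_j)^2 - (p^*_j - \tilde{p}_j)^2\big]$ together with $(p^*_j - \tilde{p}_j)^2 \leq \epsilon^2$ gives $p^*_j \tilde{p}_j \geq \tfrac14\big[(p^*_j + \tilde{p}_j)^2 - \epsilon^2\big]$. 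Factoring the difference of squares as $(p^*_j + \tilde{p}_j - \epsilon)(p^*_j + \tilde{p}_j + \epsilon)$ and dropping the larger factor $p^*_j + \tilde{p}_j + \epsilon \geq p^*_j + \tilde{p}_j - \epsilon$ gives $p^*_j \tilde{p}_j \geq \tfrac14(p^*_j + \tilde{p}_j - \epsilon)^2$, which I would substitute termwise into the displayed bound to finish.

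The main obstacle is not the computation but fixing the probabilistic model so the two clean facts hold: I must pin down the meaning of the intersection of two ordered sampled collections and verify that the positionwise-agreement count underestimates the multiset overlap, and I must state the definition of $\epsilon$ explicitly (a uniform bound on $|p^*_j - \tilde{p}_j|$) rather than leaving it vague. A secondary subtlety is the final factoring step, which requires $p^*_j + \tilde{p}_j - \epsilon \geq 0$; for edges whose combined mass is below $\epsilon$ this can fail termwise, so I would either restrict to the regime where $\epsilon$ does not exceed the typical edge mass, or establish the inequality at the level of the full sum (equivalently, check $\sum_j (p^*_j - \tilde{p}_j)^2 \leq 4\epsilon - |\mathcal{E}|\epsilon^2$), which is the cleaner fallback if a strictly termwise argument is undesirable.
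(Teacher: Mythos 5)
Your proposal follows essentially the same route as the paper's proof: both compute the per-draw collision probability $\sum_{j} p^*_j \tilde{p}_j$ from the independence of the two samplers, multiply by $k$ via linearity of expectation over the $k$ draws with replacement, and then lower-bound $p^*_j \tilde{p}_j$ by $\tfrac{1}{4}(p^*_j + \tilde{p}_j - \epsilon)^2$ using the uniform per-edge bound $|p^*_j - \tilde{p}_j| \le \epsilon$ (which the paper justifies via a universal-approximation argument, whereas you simply take it as the definition of $\epsilon$). The remaining differences are minor --- the paper reads $|\mathcal{E}^* \cap \tilde{\mathcal{E}}|$ directly as the positionwise agreement count and reaches the same algebraic bound via $p^*_j \tilde{p}_j \ge \min(p^*_j,\tilde{p}_j)^2 = \tfrac{1}{4}(p^*_j+\tilde{p}_j-|p^*_j-\tilde{p}_j|)^2$ --- and the termwise sign caveat you flag (the step needs $p^*_j+\tilde{p}_j \ge \epsilon$) is a genuine subtlety that the paper's own proof shares but does not acknowledge.
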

The proof is in Appendix~\ref{theo:commonedges}. The implications are:
\begin{enumerate}[wide, labelwidth=!, labelindent=2pt,itemsep=1pt,topsep=1pt]
    \item Let the true distribution be uniform. In the best-case scenario $\epsilon \rightarrow 0$ and $\tilde{p} = p^* = \frac{1}{|\mathcal{E}|}$. Then there are at least $\frac{k}{|\mathcal{E}|}$ common edges between $\tilde{\gG}$ and $\gG^*$. 
    Since $k << \abs{\mathcal{E}}$, this specific scenario suggests that the learned sparse subgraph may not overlap much with the true one even after we have learned the true distribution. When the true distribution is uniform, every subgraph from $\gG_q$ is a global minimizer of the task-specific loss $\mathcal{L}_{CE}$. Otherwise, the learning ORACLE would have put more mass on certain edges and the distribution $p^*$ would not have been uniform. As individual subgraphs are indistinguishable in terms of performance, this case beats the purpose of supervised sparsification.

    \item Let the true distribution be one-hot. In other words, suppose $\tilde{p} = p^* = \delta_{ij}$, where $\delta_{ij}$ is the \emph{Kronecker-delta}. In this case, as $\epsilon \rightarrow 0$, the lower bound reduces to 
    \vspace{-8pt}
    \begin{equation*}
    \mathbb{E}[|\mathcal{E}^* \cap \mathcal{\tilde{E}}|] \geq k \sum_{j=1}^{|\mathcal{E}|} (\tilde{p}_j)^2 = k.
    \end{equation*}
    This identity suggests that the sampled edges are expected to completely overlap with the true sparse subgraph. 
    
\end{enumerate}

For strong heterophilic graphs ($\gH_n$ is small), the true distribution is less likely to be uniform. Because a uniform edge sample would retain a similar node homophily as in the input graph, and such a subgraph would not be able to minimize $\mathcal{L}_{CE}$~\cite{das2024ags}. Thus, it is important for the learned probability distribution to approximate $p^*$ so that the sampled subgraph is close enough to the true one.
 
We have analyzed $\tilde{\gG}$ generated by \sgs on a synthetic graph in Appendix~\ref{app:toymoon}.

\subsection{Module III: Downstream GNN and Loss Functions}
At this stage, we input the sampled subgraph to a downstream GNN that supports edge weights as computed in Equation~\ref{eq:w_uv}; since the edge weights of the sampled edges are one of the ways we optimize \edgemlp via backpropagation. An example \gnn would be 
\begin{equation}
\hat{\mY} = \texttt{Softmax}(f_{\gnn,\theta}(\gV, \tilde{\gE}, \mX, \tilde{\vw})),
\end{equation}
where $\tilde{\gE}$ refers to the edges of the sampled sparse subgraph $\gG'$ and $\tilde{\vw} = \vw[\tilde{\gE}]$ contains the edge weights. 


\noindent\textbf{Loss functions.} 
We introduce two regularizers to engrain various inductive biases to \sgs and combined these functions with the Cross-Entropy loss $\gL_\mathrm{CE}$ as follows:
\begin{equation}
\mathcal{L} = \alpha_1\mathcal{L}_\mathrm{CE} + \alpha_2 \mathcal{L}_\mathrm{assor} + \alpha_3 \mathcal{L}_\mathrm{cons},
\end{equation}
where $0 \leq \alpha_1,\alpha_2,\alpha_3 \leq 1$ are regularizer coefficients.

The \textbf{Assortativity loss} $\mathcal{L}_\mathrm{assor}$ uses the labels of the training nodes to force nodes with similar labels to have higher edge weights while forcing dissimilarly labeled nodes to have a small nonzero weight. This regularizer encourages edge homophily in the sampled sparse graph.
\vspace{-7pt}
\begin{equation}
\small
     \mathcal{L}_\mathrm{assor} \triangleq -\sum_{(u,v) \in \gE:u \land \gV_L \land v \in \gV_L} \mathbb{I}(y_u=y_v)\cdot \log w(e_{uv}),
\end{equation}
where $\mathbb{I}(.)$ is an indicator function that returns $0$ or $1$. 

The \textbf{Consistency loss} defined below encourages learned edge probabilities to reflect the similarity between node embeddings or features:
\vspace{-8pt}
\begin{equation}
\mathcal{L}_\mathrm{cons} \triangleq \sum_{(u,v) \in \tilde{\gE}} \|w(e_{uv}) - \mathrm{cosine}(\vh_u^l,\vh_v^l)\|,
\end{equation}
where $\mathrm{cosine}(\vh_u^l,\vh_v^l) = {\vh_u^l\cdot \vh_v^l}/{\|\vh_u^l\|\|\vh_v^l\|}$ is the cosine similarity of the learned GNN embeddings $\vh_u^l,\vh_v^l$ of nodes $u$, $v$ from layer $l$, and $w(e_{uv})$ is the learned probability for edge $(u,v)$ in the sparse graph $\tilde{\gG}$. This mechanism aligns the edge probabilities with the global graph structure and ensures that the sparsifier learns to preserve edges consistent with the broader graph relationships. 
%

\noindent\textbf{Theoretical analysis II.} 
We consider vanilla GCN as a downstream GNN to examine how the sparse subgraph, $\tilde{\gG}$ from \sgs, affects node embeddings compared to the ideal subgraph $\gG^*$ from a learning ORACLE. Suppose an $L$-layer GCN produces embeddings $\tmH^{(L)}$ and $\mH^{*(L)}$ when taking $\tilde{\gG}$ and $\gG^*$ as input, respectively.
%
%
Is there an upper bound of the difference in the downstream node encodings $\mathbb{E}[\normLtwo{\tmH^{(L)} - \mH^{*(L)}}]$, due to the use of a learned subgraph?

To that end, we assume for all $l\in L$, $\normLtwo{\mW} \leq \alpha < 1$ where $\alpha$ is a constant. This is reasonable since each $\mW^{(l)}$ is typically controlled during training using regularization techniques, e.g., weight decay. As input features in $\mX$ are bounded, we also assume that there exists a constant $\beta$ such that $\forall l>0$, $\normLtwo{\mH}^{(l)} \leq \beta$. We also assume that $\sigma$ is \textit{Lipschitz continuous} with \textit{Lipschitz constant} $L_\sigma$. 
We assume \relu activation to simplify our analysis since \relu has a Lipschitz constant $L_\sigma = 1$. Under these assumptions, we have the following theorem (proof in Appendix~\ref{theo:gcnembed}). 

\begin{theorem}[Error in GCN encodings]
For sufficiently deep L-layer GCN, the error in node embeddings  

\vspace{-15pt}
{\scriptsize
\[
\mathbb{E}[\lim_{L \to \infty} \normLtwo{\tmH^{(L)} - \mH^{*(L)}}] < \frac{\beta}{1-\alpha}\sqrt{2k (1 - \sum_{j=1}^{|\mathcal{E}|} \frac{(p^*_j + \tilde{p}_j - \epsilon)^2}{4})}.
\]
}
\vspace{-15pt}
\end{theorem}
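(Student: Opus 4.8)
The plan is to control how the discrepancy between the two embeddings propagates through the layers and then convert the resulting operator-norm term into an edge count that the Lower-bound theorem already controls. Write $\Delta^{(l)} \triangleq \normLtwo{\tmH^{(l)} - \mH^{*(l)}}$ and note that both GCNs share the weights $\mW^{(l)}$ and the input $\mH^{(0)} = \mX$, so $\Delta^{(0)} = 0$. Inserting and subtracting the cross term $\hat{\tmA}(\tmH^{(l)} - \mH^{*(l)})\mW^{(l)}$ inside layer $l+1$, I would decompose $\hat{\tmA}\tmH^{(l)}\mW^{(l)} - \hat{\mA}^*\mH^{*(l)}\mW^{(l)} = \hat{\tmA}(\tmH^{(l)} - \mH^{*(l)})\mW^{(l)} + (\hat{\tmA} - \hat{\mA}^*)\mH^{*(l)}\mW^{(l)}$, apply the $L_\sigma = 1$ Lipschitz bound for \relu, submultiplicativity of the operator norm, and the assumptions $\normLtwo{\hat{\tmA}} \leq 1$ (the symmetrically normalized adjacency has spectrum in $[-1,1]$), $\normLtwo{\mW^{(l)}} \leq \alpha$, and $\normLtwo{\mH^{*(l)}} \leq \beta$. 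This yields the linear recursion $\Delta^{(l+1)} \leq \alpha\,\Delta^{(l)} + \alpha\beta\,\normLtwo{\hat{\tmA} - \hat{\mA}^*}$.

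Next I would unroll the recursion from $\Delta^{(0)} = 0$ to obtain the geometric sum $\Delta^{(L)} \leq \alpha\beta\,\normLtwo{\hat{\tmA} - \hat{\mA}^*}\sum_{i=0}^{L-1}\alpha^{i}$. Because $\alpha < 1$, the sum converges and $\lim_{L\to\infty}\Delta^{(L)} \leq \frac{\alpha\beta}{1-\alpha}\normLtwo{\hat{\tmA} - \hat{\mA}^*} \leq \frac{\beta}{1-\alpha}\normLtwo{\hat{\tmA} - \hat{\mA}^*}$, where the last step drops the extra factor $\alpha \leq 1$ to match the constant in the statement. Since $\Delta^{(L)}$ is nondecreasing in $L$, the monotone convergence theorem lets me pull $\lim_L$ inside $\E[\cdot]$, so it remains to bound $\E[\normLtwo{\hat{\tmA} - \hat{\mA}^*}]$.

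The crucial structural step is to translate the operator norm into the number of edges on which the two sparse subgraphs disagree. Both $\tilde{\gG}$ and $\gG^*$ carry exactly $k$ edges and share $|\mathcal{E}^* \cap \tilde{\mathcal{E}}|$ of them, so their symmetric difference consists of $2\big(k - |\mathcal{E}^* \cap \tilde{\mathcal{E}}|\big)$ edges. Bounding the operator norm by the Frobenius norm and using that every entry of a normalized adjacency matrix is at most one, each disagreeing edge contributes a bounded amount, giving $\normLtwo{\hat{\tmA} - \hat{\mA}^*}^2 \leq 2\big(k - |\mathcal{E}^* \cap \tilde{\mathcal{E}}|\big)$. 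Taking expectations and applying Jensen's inequality ($\E\sqrt{X} \leq \sqrt{\E X}$) gives $\E[\normLtwo{\hat{\tmA} - \hat{\mA}^*}] \leq \sqrt{2\big(k - \E[|\mathcal{E}^* \cap \tilde{\mathcal{E}}|]\big)}$; substituting the Lower-bound theorem estimate $\E[|\mathcal{E}^* \cap \tilde{\mathcal{E}}|] \geq k\sum_{j}(p^*_j + \tilde{p}_j - \epsilon)^2/4$ collapses this to $\sqrt{2k\big(1 - \sum_{j}(p^*_j + \tilde{p}_j - \epsilon)^2/4\big)}$. Chaining the two estimates yields the stated bound.

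The main obstacle I anticipate is the edge-counting step under degree normalization: $\hat{\tmA}$ and $\hat{\mA}^*$ use different degree sequences, so even the shared edges produce slightly different entries, and the clean identity "symmetric difference equals $2(k - |\mathcal{E}^*\cap\tilde{\mathcal{E}}|)$ disagreeing entries" must be replaced by a bound that simultaneously accounts for the added/removed edges and the reweighting of common edges. Making the inequality $\normLtwo{\hat{\tmA} - \hat{\mA}^*}^2 \leq 2\big(k - |\mathcal{E}^* \cap \tilde{\mathcal{E}}|\big)$ rigorous (rather than heuristic) is the delicate part; by comparison, the layer recursion, the geometric-series limit with the monotone-convergence interchange, and the Jensen substitution are routine.
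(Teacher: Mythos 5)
Your proposal follows essentially the same route as the paper's own proof: the same layer-wise decomposition (up to the mirror-image choice of cross term, $\hat{\tmA}(\tmH^{(l)}-\mH^{*(l)})\mW^{(l)} + (\hat{\tmA}-\hat{\mA}^*)\mH^{*(l)}\mW^{(l)}$ versus the paper's $(\hat{\tmA}-\hat{\mA}^*)\tmH^{(l)}\mW^{(l)} + \hat{\mA}^*(\tmH^{(l)}-\mH^{*(l)})\mW^{(l)}$), the same geometric-series unrolling from $D^{(0)}=0$, and the same conversion of $\E[\normLtwo{\hat{\tmA}-\hat{\mA}^*}]$ into an expected symmetric-difference edge count via the Frobenius norm, Jensen's inequality, and the Lower-bound theorem. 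The ``delicate part'' you flag at the end --- that degree renormalization perturbs even the common-edge entries, so $\normLtwo{\hat{\tmA}-\hat{\mA}^*}^2 \leq 2\bigl(k - |\mathcal{E}^*\cap\tilde{\mathcal{E}}|\bigr)$ is not rigorous as stated --- is precisely the step the paper itself elides: its lemma proves the edge-count identity only for the \emph{unnormalized} matrices $\tmA,\mA^*$ and then silently asserts $\normLtwo{\hat{\tmA}-\hat{\mA}^*} < \normLtwo{\tmA-\mA^*}$ with no justification, so your proof is no less complete than the paper's, and arguably more honest about where the gap lies.
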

 %
%
\subsection{\sgs Training and Additional Details}
\label{subsec:largescale}
\begin{algorithm}[!ht]
\caption{\sgs Training}
\begin{algorithmic}[1] 
\small
\STATE \textbf{Input:} $\gG (\gV, \gE, \mX)$, sample \% $q$, \#layers $L$, METIS Parts $n$
\STATE $p_\mathrm{prior}(u,v) \gets \frac{1/d_u + 1/d_v}{\sum_{i,j\in \gE} (1/d_i + 1/d_j)}$

\STATE $\gG_\mathrm{parts} \gets \{\gG_1,\gG_2,\cdots,\gG_n\}= \mathrm{METIS} (\gG(\gV,\gE, p_\mathrm{prior}), n)$

\FOR{$\mathrm{epoch}$ in $\mathrm{max\_epochs}$}

    \FOR {$\gG_i(\gV_i,\gE_i,\mX_i,p^i_\mathrm{prior}) \in \gG_\mathrm{parts}$}
        \STATE $\tilde{p}, \vw \gets \edgemlp(\gE_i, \mX_i, L)$ \COMMENT{\textbf{Algorithm~\ref{alg:edgmlp}}}    
        \STATE $\tilde{p}_a \gets \lambda \tilde{p}+(1-\lambda)p^i_\mathrm{prior}$/*\textbf{Augmenting $\tilde{p}$ with prior}*/
        \STATE $\tilde{\gE}, \tilde{\vw} \gets \mathrm{Sample}(\tilde{p}_a, \vw, \floor{\frac{q|\gE|}{100}})$   \COMMENT{\textbf{Module II}}
        \STATE $\hat{\mY}, \tilde{\mH} \gets \mathrm{GNN}_\theta(\tilde{\gE},\mX_i,\tilde{\vw})$ \COMMENT{\textbf{Module III}}

        \STATE Compute $\gL_{CE}, \gL_\mathrm{assor}$, and $\gL_\mathrm{cons}$ using $\hat{\mY},\tilde{\mH}$
        
        \STATE $\gL \gets \alpha_1\cdot \gL_\mathrm{CE}+ \alpha_2\cdot \gL_\mathrm{assor}+ \alpha_3\cdot \gL_\mathrm{cons}$
        \STATE Backward Propagate through $\gL$
    \ENDFOR
    
\ENDFOR
\end{algorithmic}
\label{alg:sgstraining}
\end{algorithm}

Alg.~\ref{alg:sgstraining} outlines the pseudocode for training \sgs. \sgs starts with two precomputation steps:
i) computing the degree-proportionate edge weight as a \emph{prior} to enhance the learned distribution $\tilde{p}$ (line 1), and
ii) partitioning the input graph using METIS~\cite{karypis1997metis} for batch processing (line 2). Towards computing the loss for every partition at each iteration,  \sgs executes Edge probability encoding, Learned distribution augmentation with a prior, Sparse subgraph sampling and node embedding via GNN. Finally, the loss is backpropagated, the update pathways of which have been illustrated in Figure~\ref{fig:sgsarchitecture} earlier.

\textbf{Batch processing.} 
We can use \edgemlp from Alg.~\ref{alg:edgmlp} to compute edge weights in large-scale graphs, but efficient batch processing on edges is necessary for stochastic training of GNNs so as to reduce the risk of getting stuck in local minima.
It is crucial to select a batch of edges that have high locality, preferably from within a cluster, and we utilize METIS to achieve this. We could have made partitions small enough to fit GPUs and then applying GNN without any sparsification, similar to ClusterGCN~\cite{chiang2019cluster}. However, certain edges, such as task-irrelevant edges, may negatively impact performance, particularly in heterophilic or noisy graphs. In such cases, a high-quality learned sparse subgraph performs better than full graph, as validated in our experiments (\S\ref{subsubsec:fixedsampler}). 

\textbf{Augmenting $\tilde{p}$ with prior.} 
While $\tilde{p}$ can be directly used to sample sparse subgraphs, the resulting subgraph may be suboptimal for message passing due to missing bridge edges connecting low-degree node pairs.
Thus augmenting the sampler with  $p_\mathrm{prior}$, which favors such edges, results in better quality sparse subgraph. $p_\mathrm{prior}$ is defined as
\vspace{-8pt}
\begin{equation}
\label{eq:prior}
 p_\mathrm{prior}(u,v) \triangleq \frac{1/d_u + 1/d_v}{\sum_{i,j\in \gE} (1/d_i + 1/d_j)},
\end{equation}
where $d_u,d_v$ are degrees of nodes $u,v$. We control the emphasis of prior on the learned distribution with a parameter $\lambda \in [0,1]$, resulting in the \emph{augmented probability distribution}: $\tilde{p}_{a}(u,v) = \lambda \tilde{p}(u,v) + (1-\lambda) p_\mathrm{prior}(u,v)$ (line 7). The impact of $p_\mathrm{prior}$ on \sgs is discussed in Appendix~\ref{app:parameters}.

Another enhancement we consider is the \textbf{conditional updates} to \edgemlp. Since backpropagation is computationally expensive, we only update \edgemlp when the training F1-score from the learned sparse subgraph exceeds the baseline subgraph from $p_\mathrm{prior}$. The detailed algorithm for \sgs with conditional updates is in Appendix~\ref{app:algorithm}.

During inference, we use the learned probability distribution from \edgemlp, sample an ensemble of sparse subgraphs, and mean-aggregate their representations to produce final prediction on a test node. The pseudocode for inference (Alg.~\ref{alg:sgsinference}) is in Appendix~\ref{app:algorithm}.

\textbf{Computational Complexity.}
Suppose the number of hidden dimension $H\approx F$, where $F$ is the dimension of the node features. The cost of an $L$-layer GCN is $\bigO(L(|\gE|\cdot H + |\gV| \cdot H^2))$~\cite{chiang2019cluster}. 
The cost of Alg~\ref{alg:edgmlp} is $\bigO(L(|\gE_\mathrm{sp}|\cdot H + |\gV| \cdot H^2)+ \abs{\gE}\cdot H^2)$, since computing node-embedding (line 4, Alg.~\ref{alg:edgmlp}) using sparse graph $\gE_\mathrm{sp}$ costs $\bigO(L(|\gE_\mathrm{sp}|\cdot H + |\gV| \cdot H^2))$, and edge weight computation using \mlp (line 5, Alg.~\ref{alg:edgmlp}) costs $\bigO(\abs{\gE}\cdot H^2)$. 
With an $L$-layer GCN used as downstream GNN acting on the sparse subgraph $\tilde{\gE}$, the downstream GNN costs $\bigO(L(|\tilde{\gE}|\cdot H + |\gV| \cdot H^2)$. 
Since, $\abs{\gE_\mathrm{sp}}=\abs{\tilde{\gE}}$ the total complexity of \sgs  (Alg.~\ref{alg:sgstraining}) is $\bigO(L(|\tilde{\gE}|\cdot H + |\gV| \cdot H^2)+ \abs{\gE}\cdot H^2)$.

\textbf{Space complexity.} Let $n$ partitions from METIS have similar sizes. The memory requirement for \sgs with $L$-layer GCN is  $\bigO\left(\frac{|\gE| + |\gV|\cdot H}{n} + L \cdot H^2 \right)$. 



\section{Experiments}
\label{sec:experiment}

We experimented with $21$ heterophilic and $12$ homophilic benchmark datasets of varying sizes and homophily.
Details about the datasets are provided in Appendix~\ref{app:dataset}.
All experiments are carried out $10$ times on a 24GB NVIDIA A10 Tensor Core GPU with 500GB internal memory, and with a data split of $20\%/40\%/40\%$ (train/validation/test) unless specified otherwise. For baseline models, we follow the settings set by respective authors. We use $2$ message passing layers for \sgs and a hidden layer dimension of $H=256$ for both $\edgemlp_\phi$ and $\texttt{GNN}_\theta$. We set a dropout rate of $0.2$ and use the Adam optimizer with a learning rate of $0.001$. We trained all models for a maximum epoch of $500$ with early stopping.
The edge batch size is $500K$, and the number of partitions for METIS is $n=\ceil{{|\gE|}/{500K}}$. The percentage of edge sample is set to $q=20\%$ following DropEdge~\cite{rong2019dropedge}. We take $10$ samples of sparse subgraphs during inference. The model with the best validation F1-score is selected for testing. 
Our source codes are anonymously provided on Github\footnote{\textcolor{blue}{\url{https://github.com/anonymousauthors001/SGS-GNN/}}}.

\noindent\textbf{Baselines.} We use ClusterGCN~\cite{chiang2019cluster} to evaluate performance on the original large graphs. DropEdge~\cite{rong2019dropedge}, and GraphSAINT (GSAINT-E)~\cite{zeng2019graphsaint} are used as fixed distribution samplers. For Mixture of Graph (MOG)~\cite{zhang2024graph}, we use $3$ experts and their recommended settings. We adjust the neighborhood sample size of NeuralSparse~\cite{zheng2020robust} to have a similar sparsity as ours for a fair comparison. Additionally, we included SparseGAT~\cite{sparsegat} as another supervised method for generating sparse graphs.

\subsection{Key Findings}
\label{subsec:results}
In this section, we discuss the key findings supporting \sgs. Empirical evaluation of various design choices in \sgs and sensitivity of \sgs to different values of hyperparameters are presented in Appendix~\ref{app:ablationstudy}.

\subsubsection{\sgs vs. Fixed distr. sparsifiers} 
\label{subsubsec:fixedsampler}
We compare \sgs with fixed edge distribution sparsifiers like \textit{Random} from DropEdge, \textit{Edge} sampler from GraphSAINT, and \textit{Effective resistance (ER)}. Table~\ref{tab:hetero_homo_graphs} presents F1-scores, with the last row summarizing overall performance. The \textit{Org. Graph} represents the original dense graph's performance computed using ClusterGCN. Our learnable sampler \edgemlp significantly outperforms fixed distribution samplers and original dense graphs in heterophilic datasets. 
%
%
In homophilic graphs, we observe a smaller margin of improvement, but \sgs still outperforms other baselines. 

\begin{table}[t]
\caption{Mean F1-scores (in \%) $\pm$ std. dev. of various fixed distribution samplers using $20\%$ edges. \textbf{Bold} indicates best-performing sampler excluding \textit{Org. graph}.}
\label{tab:hetero_homo_graphs}
\centering
\begin{sc}
\resizebox{1.0\linewidth}{!}
{
\def\arraystretch{1.0}
\begin{tabular}{@{}l|c|cccc@{}}
\toprule
\textbf{Dataset} & \textbf{Org. Graph}     & \textbf{Random}   & \textbf{Edge}              & 
\textbf{ER} & \textbf{SGS-GNN}           \\\midrule
Cornell & 43.78 $\pm$ 4.32 & 49.19 $\pm$ 4.65 & 46.49 $\pm$ 2.65 & 43.78 $\pm$ 3.97 & \textbf{74.59 $\pm$ 1.32} \\
Texas & 61.62 $\pm$ 1.08 & 55.14 $\pm$ 5.01 & 69.19 $\pm$ 2.76 & 61.08 $\pm$ 2.76 & \textbf{76.22 $\pm$ 2.02} \\
Wisconsin & 51.76 $\pm$ 5.49 & 61.96 $\pm$ 4.40 & 66.27 $\pm$ 2.29 & 58.82 $\pm$ 2.15 & \textbf{76.08 $\pm$ 3.14} \\
reed98 & 61.35 $\pm$ 0.84 & 54.92 $\pm$ 2.64 & 54.51 $\pm$ 1.98 & 59.69 $\pm$ 2.03 & \textbf{64.15 $\pm$ 2.28} \\
amherst41 & 61.83 $\pm$ 0.30 & 57.49 $\pm$ 0.81 & 57.40 $\pm$ 0.58 & 50.60 $\pm$ 1.14 & \textbf{72.75 $\pm$ 0.59} \\
penn94 & 73.23 $\pm$ 0.05 & 68.52 $\pm$ 0.34 & 68.35 $\pm$ 0.36 & 70.74 $\pm$ 0.39 & \textbf{75.65 $\pm$ 0.41} \\
Roman-empire & 44.25 $\pm$ 0.16 & 43.08 $\pm$ 0.61 & 42.91 $\pm$ 0.63 & 58.18 $\pm$ 1.25 & \textbf{64.69 $\pm$ 0.12} \\
cornell5 & 65.13 $\pm$ 0.31 & 63.36 $\pm$ 0.50 & 63.47 $\pm$ 0.46 & 63.89 $\pm$ 0.35 & \textbf{69.15 $\pm$ 0.33} \\
Squirrel & 48.38 $\pm$ 0.65 & 42.40 $\pm$ 0.96 & 42.44 $\pm$ 1.10 & 43.86 $\pm$ 0.42 & \textbf{52.35 $\pm$ 0.35} \\
johnshopkins55 & 68.71 $\pm$ 0.28 & 63.40 $\pm$ 0.53 & 62.80 $\pm$ 0.74 & 62.14 $\pm$ 2.51 & \textbf{73.80 $\pm$ 0.33} \\
Actor & 28.42 $\pm$ 0.23 & 32.37 $\pm$ 0.78 & 30.53 $\pm$ 0.59 & 32.03 $\pm$ 0.27 & \textbf{33.88 $\pm$ 0.42} \\
Minesweeper & 79.56 $\pm$ 0.03 & 79.73 $\pm$ 0.12 & 79.84 $\pm$ 0.06 & 80.02 $\pm$ 0.03 & \textbf{80.00 $\pm$ 0.00} \\
Questions & 97.05 $\pm$ 0.01 & 97.07 $\pm$ 0.03 & \textbf{97.08 $\pm$ 0.01} & 97.02 $\pm$ 0.00 & 97.05 $\pm$ 0.01 \\
Chameleon & 64.43 $\pm$ 0.43 & 57.98 $\pm$ 1.39 & 57.11 $\pm$ 1.22 & 59.78 $\pm$ 0.85 & \textbf{62.37 $\pm$ 0.98} \\
Tolokers & 79.03 $\pm$ 0.15 & 78.59 $\pm$ 0.16 & 78.59 $\pm$ 0.21 & 78.10 $\pm$ 0.06 & \textbf{79.98 $\pm$ 0.17} \\
Amazon-ratings & 46.72 $\pm$ 0.20 & 45.70 $\pm$ 0.21 & 45.75 $\pm$ 0.35 & 44.39 $\pm$ 0.12 & \textbf{50.15 $\pm$ 0.34} \\
genius & 80.80 $\pm$ 0.02 & 81.99 $\pm$ 0.09 & 81.60 $\pm$ 0.03 & 82.25 $\pm$ 0.86 & \textbf{82.59 $\pm$ 0.00} \\
pokec & 62.05 $\pm$ 0.37 & 60.30 $\pm$ 0.27 & 60.17 $\pm$ 0.17 & 58.76 $\pm$ 0.59 & \textbf{60.49 $\pm$ 0.10} \\
arxiv-year & 39.05 $\pm$ 0.08 & 36.96 $\pm$ 0.01 & 37.06 $\pm$ 0.04 & 36.62 $\pm$ 0.33 & \textbf{38.42 $\pm$ 0.10} \\
snap-patents & 35.38 $\pm$ 0.15 & 34.57 $\pm$ 0.08 & 34.48 $\pm$ 0.16 & 33.13 $\pm$ 0.37 & \textbf{35.41 $\pm$ 0.10} \\
ogbn-proteins & 93.15 $\pm$ 0.00 & 93.15 $\pm$ 0.00 & 93.15 $\pm$ 0.00 & 93.15 $\pm$ 0.00 & 93.15 $\pm$ 0.00 \\\midrule \midrule
Cora & 67.29 $\pm$ 0.51 & 61.20 $\pm$ 7.76 & 57.63 $\pm$ 14.73 & \textbf{66.90 $\pm$ 0.17} & 65.58 $\pm$ 0.69 \\
DBLP & 83.92 $\pm$ 0.04 & 81.00 $\pm$ 0.27 & 81.19 $\pm$ 0.33 & \textbf{81.81 $\pm$ 0.10} & 80.37 $\pm$ 0.16 \\
Computers & 90.19 $\pm$ 0.18 & 90.34 $\pm$ 0.29 & 90.37 $\pm$ 0.25 & 89.87 $\pm$ 0.96 & \textbf{90.97 $\pm$ 0.31} \\
PubMed & 86.73 $\pm$ 0.07 & 87.58 $\pm$ 0.22 & 87.62 $\pm$ 0.14 & \textbf{87.70 $\pm$ 0.11} & 87.52 $\pm$ 0.15 \\
Cora\_ML & 86.29 $\pm$ 0.51 & 85.39 $\pm$ 0.35 & 85.29 $\pm$ 0.60 & \textbf{85.63 $\pm$ 0.51} & 83.99 $\pm$ 0.53 \\
SmallCora & 80.28 $\pm$ 0.37 & 75.82 $\pm$ 0.54 & 76.44 $\pm$ 1.21 & 75.90 $\pm$ 1.25 & \textbf{76.94 $\pm$ 0.76} \\
CS & 92.79 $\pm$ 0.10 & 94.07 $\pm$ 0.14 & 94.09 $\pm$ 0.09 & 93.77 $\pm$ 0.17 & \textbf{94.25 $\pm$ 0.15} \\
Photo & 92.41 $\pm$ 2.01 & 93.54 $\pm$ 0.14 & 93.63 $\pm$ 0.25 & 93.42 $\pm$ 0.10 & \textbf{93.99 $\pm$ 0.25} \\
Physics & 96.08 $\pm$ 0.03 & 96.20 $\pm$ 0.07 & 96.23 $\pm$ 0.12 & 96.22 $\pm$ 0.07 & \textbf{96.27 $\pm$ 0.09} \\
CiteSeer & 91.44 $\pm$ 0.29 & 86.38 $\pm$ 0.26 & 86.75 $\pm$ 0.22 & 86.24 $\pm$ 0.26 & \textbf{86.78 $\pm$ 0.28} \\
wiki & 80.07 $\pm$ 0.21 & 80.10 $\pm$ 0.13 & 80.19 $\pm$ 0.16 & 80.32 $\pm$ 0.13 & \textbf{81.49 $\pm$ 0.31} \\
Reddit & 91.43 $\pm$ 0.07 & 91.39 $\pm$ 0.06 & 91.35 $\pm$ 0.08 & 91.00 $\pm$ 0.06 & \textbf{91.45 $\pm$ 0.06} \\\bottomrule
\rowcolor[HTML]{EFEFEF} 
\textbf{Geom. Mean}	 & 67.30	& 66.01	& 66.20	& 66.51 & 71.55
\end{tabular}
}
\end{sc}
\end{table}


\subsubsection{\sgs vs other GNN based Sparsifiers}
\label{subsubsec:relatedsparsifier}
Table~\ref{tab:relatedsparse} compares \sgs with related sparsification-based GNNs. We use GCN as the GNN module in our \sgs. 
\sgs significantly outperforms competing methods with a geometric mean improvement of $4-5\%$ with only $20\%$ of edges. Under similar settings, \sgs significantly outperforms in heterophilic graphs and remains competitive in homophilic graphs.
We do not include results with large-scale graphs here since MOG and NeuralSparse goes out of memory in our computing environment. However, \sgs can handle large graphs and the corresponding results are provided in Table~\ref{tab:hetero_homo_graphs}. 

\begin{table}[t]
\caption{Mean F1-scores (in \%) $\pm$ std. dev. of baseline sparsifiers using $20\%$ edges. \textbf{Bold} = best-performing method. OOM = out of memory.
The geometric mean is computed across that dataset where all methods have results.}
\label{tab:relatedsparse}
\begin{sc}
\resizebox{\columnwidth}{!}
{
\def\arraystretch{1.0}
\begin{tabular}{@{}l|cccccc@{}}
\toprule
\textbf{Dataset} & \textbf{GSAINT-E} & \textbf{DropEdge} & \textbf{MOG} & \textbf{SparseGAT} & \textbf{NeuralSparse} & \textbf{SGS-GNN} \\\midrule
Cornell & 46.49 $\pm$ 2.65 & 43.24 $\pm$ 2.20 & 42.16 $\pm$ 3.08 & 51.35 $\pm$ 0.10 & 72.43 $\pm$ 6.48 & \textbf{74.59 $\pm$ 1.32} \\
Texas & 69.19 $\pm$ 2.76 & 54.95 $\pm$ 3.37 & 57.30 $\pm$ 4.83 & 66.66 $\pm$ 1.27 & \textbf{84.44 $\pm$ 1.53} & 76.22 $\pm$ 2.02 \\
Wisconsin & 66.27 $\pm$ 2.29 & 48.36 $\pm$ 0.92 & 53.33 $\pm$ 1.64 & 56.86 $\pm$ 0.00 & 52.83 $\pm$ 47.00 & \textbf{76.08 $\pm$ 3.14} \\
reed98 & 54.51 $\pm$ 1.98 & 60.03 $\pm$ 0.05 & 55.75 $\pm$ 2.61 & 55.69 $\pm$ 0.25 & 58.54 $\pm$ 1.60 & \textbf{64.15 $\pm$ 2.28} \\
amherst41 & 57.40 $\pm$ 0.58 & 59.00 $\pm$ 18.80 & 56.78 $\pm$ 2.05 & 49.00 $\pm$ 0.20 & 56.85 $\pm$ 75.00 & \textbf{72.75 $\pm$ 0.59} \\
penn94 & 68.35 $\pm$ 0.36 & 65.87 $\pm$ 0.30 & OOM & 65.00 $\pm$ 0.10 & OOM & \textbf{75.65 $\pm$ 0.41} \\
Roman-empire & 42.91 $\pm$ 0.63 & 46.00 $\pm$ 1.20 & 39.27 $\pm$ 0.60 & 41.18 $\pm$ 0.07 & 44.91 $\pm$ 5.79 & \textbf{64.69 $\pm$ 0.12} \\
cornell5 & 63.47 $\pm$ 0.46 & 61.40 $\pm$ 1.45 & OOM & 53.77 $\pm$ 0.39 & OOM & \textbf{69.15 $\pm$ 0.33} \\
\textit{Squirrel} & 42.44 $\pm$ 1.10 & 48.60 $\pm$ 0.00 & 27.67 $\pm$ 0.51 & 29.50 $\pm$ 0.00 & 38.24 $\pm$ 0.00 & \textbf{52.35 $\pm$ 0.35} \\
johnshopkins55 & 62.80 $\pm$ 0.74 & 64.14 $\pm$ 1.75 & OOM & 57.57 $\pm$ 0.29 & 57.56 $\pm$ 1.06 & \textbf{73.80 $\pm$ 0.33} \\
Actor & 30.53 $\pm$ 0.59 & 34.64 $\pm$ 1.40 & 27.74 $\pm$ 0.96 & 25.05 $\pm$ 0.60 & 27.85 $\pm$ 0.19 & \textbf{33.88 $\pm$ 0.42} \\
\textit{Minesweeper} & 79.84 $\pm$ 0.06 & \textbf{80.00 $\pm$ 0.00} & \textbf{80.00 $\pm$ 0.00} & \textbf{80.00 $\pm$ 0.00} & \textbf{80.00 $\pm$ 0.10} & \textbf{80.00 $\pm$ 0.00} \\
Questions & \textbf{97.08 $\pm$ 0.01} & 97.00 $\pm$ 0.01 & 97.04 $\pm$ 0.01 & \textbf{97.08 $\pm$ 0.01} & 97.02 $\pm$ 0.01 & 97.05 $\pm$ 0.01 \\
\textit{Chameleon} & 57.11 $\pm$ 1.22 & 50.60 $\pm$ 0.04 & 53.25 $\pm$ 0.63 & 60.60 $\pm$ 0.15 & 60.52 $\pm$ 0.78 & \textbf{62.37 $\pm$ 0.98} \\
Tolokers & 78.59 $\pm$ 0.21 & 78.40 $\pm$ 0.20 & 78.49 $\pm$ 0.28 & 78.20 $\pm$ 0.72 & 78.16 $\pm$ 0.00 & \textbf{79.98 $\pm$ 0.17} \\
Amazon-ratings & 45.75 $\pm$ 0.35 & 43.87 $\pm$ 0.67 & 41.18 $\pm$ 0.49 & 44.23 $\pm$ 0.05 & 47.05 $\pm$ 0.47 & \textbf{50.15 $\pm$ 0.34} \\
Cora & 57.63 $\pm$ 14.73 & 65.09 $\pm$ 0.44 & \textbf{67.26 $\pm$ 1.11} & 61.07 $\pm$ 0.39 & 56.68 $\pm$ 0.32 & 65.58 $\pm$ 0.69 \\
DBLP & 81.19 $\pm$ 0.33 & 87.20 $\pm$ 0.15 & 72.37 $\pm$ 0.63 & \textbf{84.68 $\pm$ 1.00} & 73.39 $\pm$ 0.67 & 80.37 $\pm$ 0.16 \\
Computers & 90.37 $\pm$ 0.25 & 60.65 $\pm$ 4.66 & OOM & 88.91 $\pm$ 0.00 & 75.32 $\pm$ 4.11 & \textbf{90.97 $\pm$ 0.31} \\
PubMed & \textbf{87.62 $\pm$ 0.14} & 86.00 $\pm$ 1.21 & 83.84 $\pm$ 0.58 & 75.30 $\pm$ 0.35 & 73.97 $\pm$ 0.40 & 87.52 $\pm$ 0.15 \\\midrule \midrule 
Cora\_ML & \textbf{85.29 $\pm$ 0.60} & 84.80 $\pm$ 0.20 & OOM & 80.60 $\pm$ 0.40 & 79.30 $\pm$ 0.87 & 83.99 $\pm$ 0.53 \\
SmallCora & 76.44 $\pm$ 1.21 & 76.47 $\pm$ 0.31 & 78.43 $\pm$ 0.73 & 75.70 $\pm$ 0.43 & 75.79 $\pm$ 9.00 & \textbf{76.94 $\pm$ 0.76} \\
CS & 94.09 $\pm$ 0.09 & 93.30 $\pm$ 0.32 & 72.88 $\pm$ 0.32 & 92.35 $\pm$ 0.06 & 94.36 $\pm$ 0.40 & \textbf{94.25 $\pm$ 0.15} \\
Photo & 93.63 $\pm$ 0.25 & 80.47 $\pm$ 59.10 & 83.84 $\pm$ 0.58 & \textbf{95.30 $\pm$ 0.02} & 94.16 $\pm$ 0.25 & 93.99 $\pm$ 0.25 \\
Physics & 96.23 $\pm$ 0.12 & 97.28 $\pm$ 0.70 & OOM & 95.96 $\pm$ 0.06 & 96.38 $\pm$ 0.04 & \textbf{96.27 $\pm$ 0.09} \\
CiteSeer & 86.75 $\pm$ 0.22 & 77.40 $\pm$ 0.10 & 78.43 $\pm$ 0.73 & 58.75 $\pm$ 0.10 & 65.40 $\pm$ 1.20 & \textbf{86.78 $\pm$ 0.28} \\
wiki & 80.19 $\pm$ 0.16 & 80.19 $\pm$ 0.16 & 72.88 $\pm$ 0.32 & 79.26 $\pm$ 0.11 & 73.03 $\pm$ 1.10 & \textbf{81.49 $\pm$ 0.31} \\
\bottomrule
\rowcolor[HTML]{EFEFEF} 
\textbf{Geom. Mean*}	 & 64.88 & 63.58 & 59.49 & 61.07  & 64.10 & 71.93\\
\end{tabular}
}
\end{sc}
\end{table}

\subsubsection{Sparsity vs. Accuracy vs. Homophily}  
We analyzed the performance of \sgs across varying homophily levels using synthetic and benchmark graphs in Fig.~\ref{fig:sparsevshomophily}. Synthetic graphs were generated with node homophily $\gH_n$ at degree $d$ by connecting $\ceil{d\gH_n}$ edges to same-type neighbors and $d-\ceil{d\gH_n}$ edges randomly. Results in Fig.~\ref{subfig:sparsityvshomophily} show that high homophily yields good performance regardless of sparsity, while high sparsity benefits heterophily, with optimal performance seen at $30\%-40\%$ edge retention for heterophily levels $0.3-0.4$. Furthermore, Fig.~\ref{subfig:sparsityvsaccuray} indicates that while more edges improves accuracy on homophilic graphs, high sparsity can be advantageous on heterophilic graphs such as \texttt{reed98} and \texttt{amherst41}, where we observe best performance at $\sim20\%$ sparsity.

\begin{figure}[!htbp]
    \centering
    \subfigure{\includegraphics[width=0.48\linewidth]{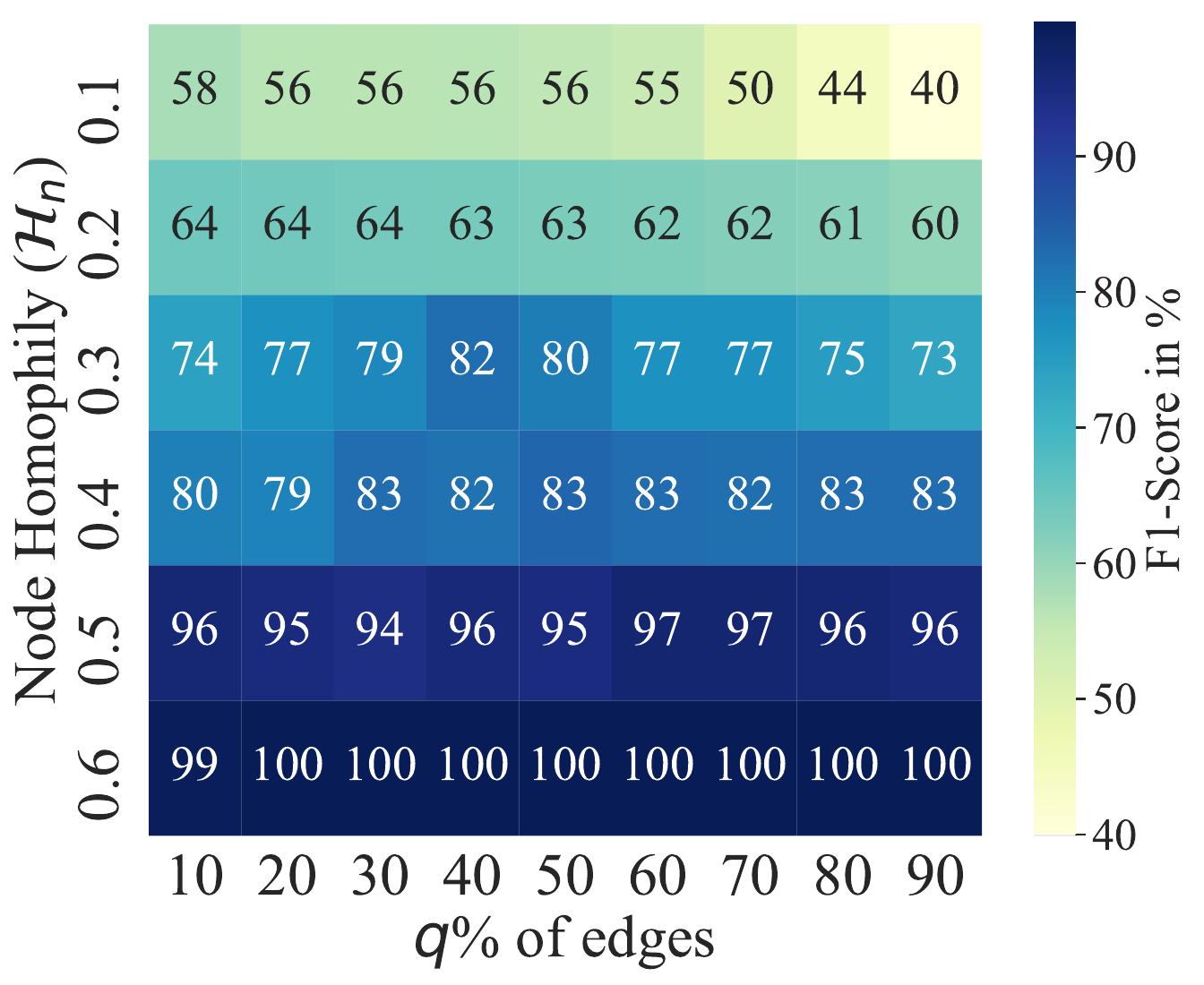}
    \label{subfig:sparsityvshomophily}} 
    \hfill
    \subfigure{\includegraphics[width=0.48\linewidth]{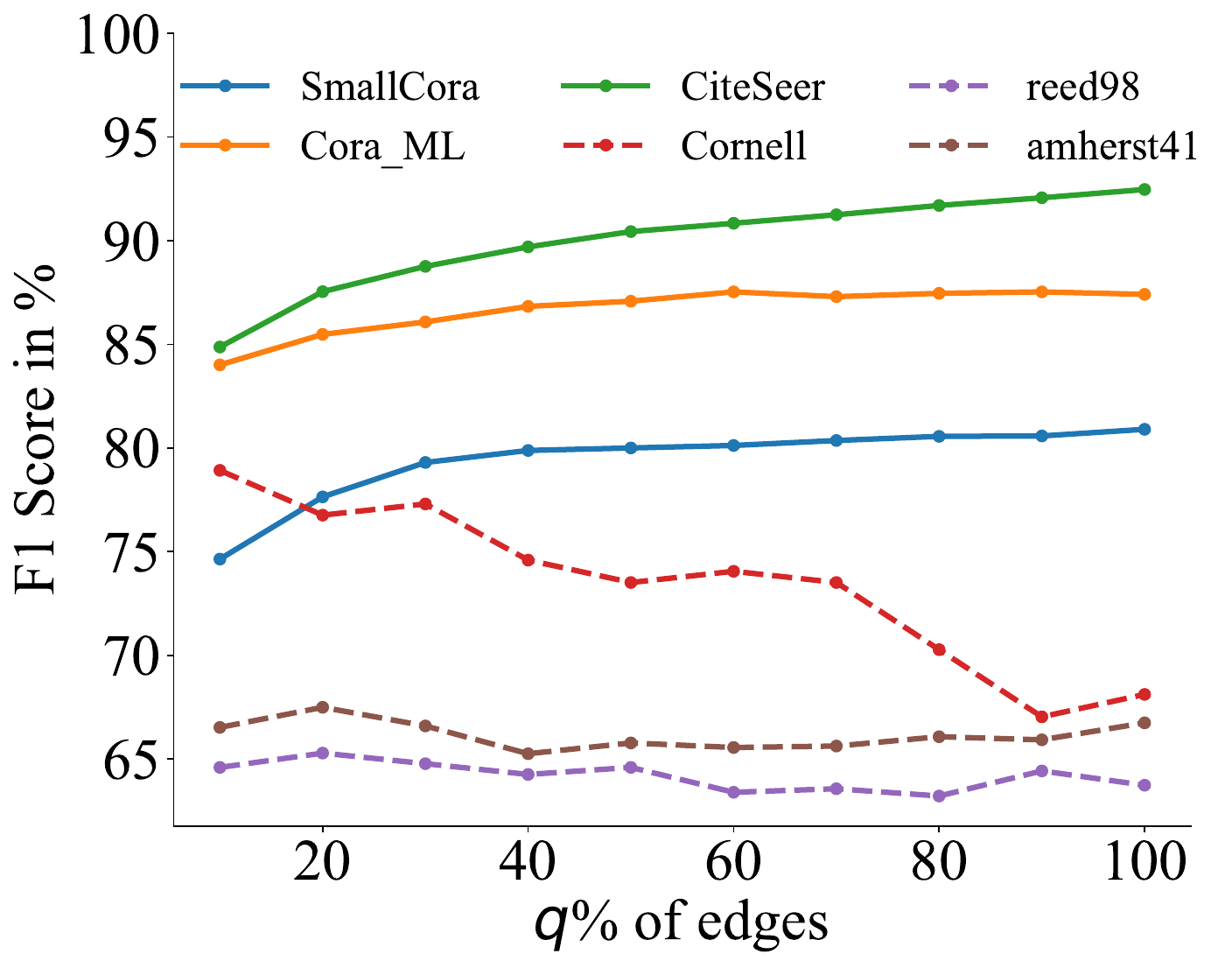}
     \label{subfig:sparsityvsaccuray}}
    \caption{(Left) Heatmap of F1 scores (in \%) at different homophily and sparsity levels ($q$) in \texttt{Cora} synthetic graphs. (Right) F1-scores of \sgs at different sparsity for homophilic (solid line) and heterophilic (dashed line) graphs.}
    \label{fig:sparsevshomophily}
\end{figure}


Fig.~\ref{fig:edgehomophily} shows that the sampled subgraph of \sgs has a higher \emph{edge homophily} than those by the fixed distribution sparsifiers. This is expected due to our $\gL_\mathrm{assor}$ regularizer.

\begin{figure}[!htbp]
    \centering
    \includegraphics[width=1.0\linewidth]{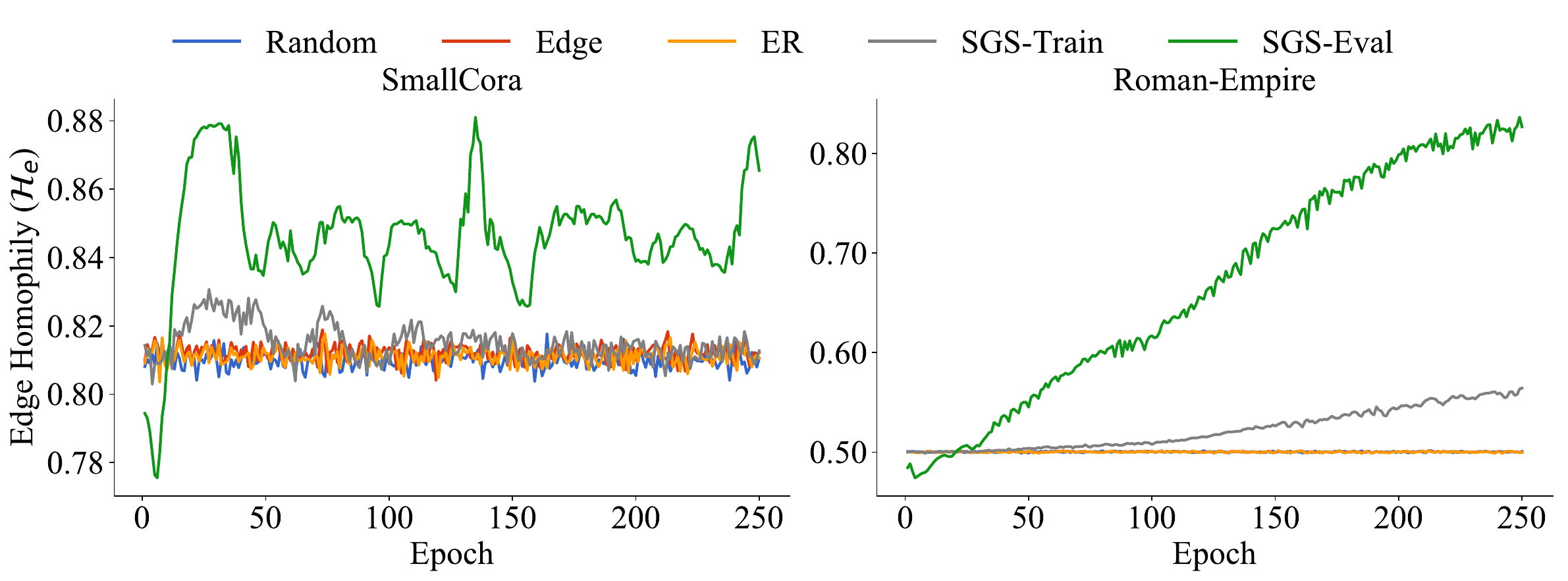}    
    \caption{Edge homophily of selected subgraphs from different fixed distribution samplers vs. subgraphs from training and evaluation phase of \sgs.}
    \label{fig:edgehomophily}
\end{figure}

\subsubsection{Convergence}
\label{subsubsec:convergence}
To compare the sparsifiers in terms of convergence, we terminate training when the std. dev of loss in five consecutive epochs is $\leq 1e^{-3}$. Fig.~\ref{fig:convergence} shows the bar plot of the number of epochs required for the methods to converge. \sgs requires fewer iterations than other fixed distribution sparsifiers highlighting the benefit of learning the distribution.

\begin{figure}[!htbp]
    \centering    \includegraphics[width=1.0\linewidth]{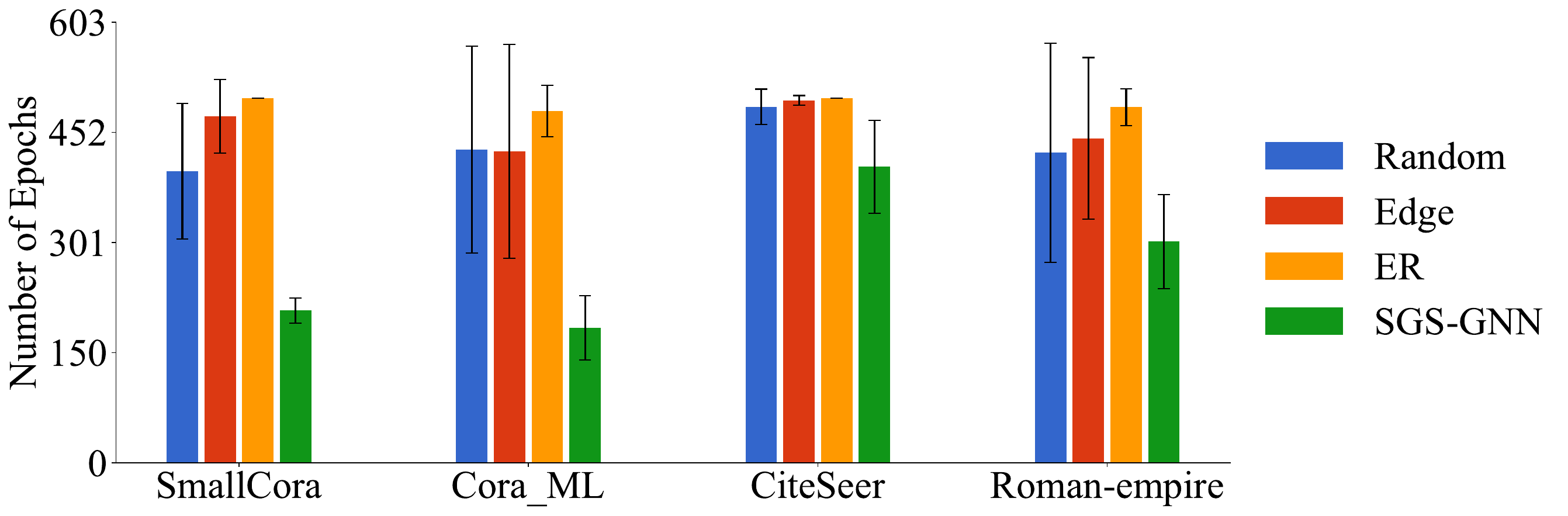}
    \caption{Number of epochs required by \sgs to converge compared to other samplers under the same settings.}
    \label{fig:convergence}
\end{figure}


\subsubsection{Efficiency}
\label{subsubsec:runtime}
Fig.~\ref{fig:runtime} compares the training times per epoch of \sgs with other GNN based sparsifier baselines. Under similar conditions, \sgs is more efficient than NeuralSparse, MOG, and competitive with SparsGAT. SparseGAT is an implicit sparsifier; thus, unlike \sgs, it is not memory efficient and cannot handle large graphs. Unsupervised sparsifiers such as DropEdge and GraphSAINT are more efficient but not always as effective as \sgs (c.f. Tab.~\ref{tab:relatedsparse}). 

\begin{figure}[!htbp]
\centering
\includegraphics[width=1.0\linewidth]{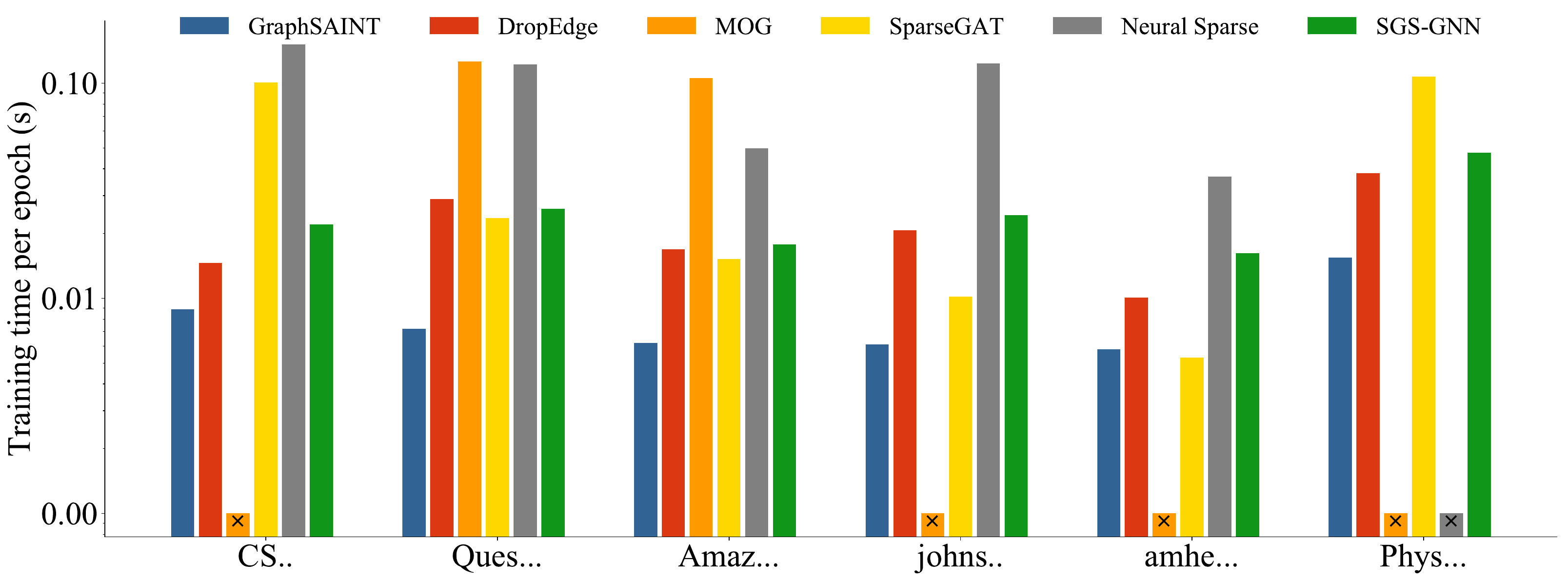}
\caption{The bar shows the mean training time (s) per epoch ($\log$ scale) of \sgs and related methods. The $\times$ in the bar indicates out of memory. Under similar conditions, \sgs is faster than NeuralSparse and MOG and competitive with SparseGAT.}
\label{fig:runtime}
\end{figure}
%
%
%
\section{Conclusion}


We proposed \sgs, a supervised graph sparsifier that produces a sparse subgraph with user-prescribed sparsity facilitating GNNs on large-scale graphs. We provided a theoretical analysis of \sgs in terms of the quality of embedding it produces compared to an idealized, oracle sparsifier. Finally, we empirically validated the effectiveness, efficiency, and convergence of \sgs over several baselines across homophilic and heterophilic graphs of various sizes. In the future, we plan to explore 
robustness aspect of our sparsifier in defense against adversarial noise.

%
%

%
\section*{Impact Statement}

This paper presents a scalable supervised graph sparsification method that aims to support large-scale graph machine learning for Graph Neural Networks (GNNs).
There are many potential societal consequences of our work. Positive consequences include making AI more accessible to smaller organizations and researchers with limited resources, as well as reducing the carbon footprint of AI models. 
\section*{Acknowledgement}
This work was supported in part by the U.S. Department of Energy, Office of Science, Office of Advanced Scientific Computing Research (ASCR) grant SC-0022260 and Computer Science Competitive Portfolios program at Pacific Northwest National Laboratory (PNNL); and by the Laboratory Directed Research and Development Program at PNNL. PNNL is a multi-program national laboratory operated for the U.S. Department of Energy (DOE) by Battelle Memorial Institute under Contract No. DE-AC05-76RL01830.
%
%
\bibliography{GNNbib1}

\begin{thebibliography}{52}
\providecommand{\natexlab}[1]{#1}
\providecommand{\url}[1]{\texttt{#1}}
\expandafter\ifx\csname urlstyle\endcsname\relax
  \providecommand{\doi}[1]{doi: #1}\else
  \providecommand{\doi}{doi: \begingroup \urlstyle{rm}\Url}\fi

\bibitem[Augustine(2024)]{augustine2024survey}
Augustine, M.~T.
\newblock A survey on universal approximation theorems.
\newblock \emph{arXiv preprint arXiv:2407.12895}, 2024.

\bibitem[Batson et~al.(2013)Batson, Spielman, Srivastava, and
  Teng]{batson2013spectral}
Batson, J., Spielman, D.~A., Srivastava, N., and Teng, S.-H.
\newblock Spectral sparsification of graphs: theory and algorithms.
\newblock \emph{Communications of the ACM}, 56\penalty0 (8):\penalty0 87--94,
  2013.

\bibitem[Chen et~al.(2020)Chen, Xu, Huang, Deng, Huang, Wang, He, and
  Li]{chen2020label}
Chen, H., Xu, Y., Huang, F., Deng, Z., Huang, W., Wang, S., He, P., and Li, Z.
\newblock Label-aware graph convolutional networks.
\newblock In \emph{Proceedings of the 29th ACM International Conference on
  Information \& Knowledge Management}, pp.\  1977--1980, 2020.

\bibitem[Chen et~al.(2023)Chen, Ye, Vedula, Bronstein, Dreslinski, Mudge, and
  Talati]{chen2023demystifying}
Chen, Y., Ye, H., Vedula, S., Bronstein, A., Dreslinski, R., Mudge, T., and
  Talati, N.
\newblock Demystifying graph sparsification algorithms in graph properties
  preservation.
\newblock \emph{Proceedings of the VLDB Endowment}, 17\penalty0 (3):\penalty0
  427--440, 2023.

\bibitem[Chiang et~al.(2019)Chiang, Liu, Si, Li, Bengio, and
  Hsieh]{chiang2019cluster}
Chiang, W.-L., Liu, X., Si, S., Li, Y., Bengio, S., and Hsieh, C.-J.
\newblock Cluster-{GCN}: An efficient algorithm for training deep and large
  graph convolutional networks.
\newblock In \emph{Proceedings of the 25th ACM SIGKDD International Conference
  on Knowledge Discovery \& Data Mining}, pp.\  257--266, 2019.

\bibitem[Cybenko(1989)]{cybenko1989approximation}
Cybenko, G.
\newblock Approximation by superpositions of a sigmoidal function.
\newblock \emph{Mathematics of control, signals and systems}, 2\penalty0
  (4):\penalty0 303--314, 1989.

\bibitem[Das et~al.(2024)Das, Ferdous, Halappanavar, Serra, and
  Pothen]{das2024ags}
Das, S.~S., Ferdous, S., Halappanavar, M.~M., Serra, E., and Pothen, A.
\newblock {AGS-GNN}: Attribute-guided sampling for graph neural networks.
\newblock In \emph{Proceedings of the 30th ACM SIGKDD Conference on Knowledge
  Discovery and Data Mining}, pp.\  538--549, 2024.

\bibitem[Dragan et~al.(2011)Dragan, Fomin, and Golovach]{dragan2011spanners}
Dragan, F.~F., Fomin, F.~V., and Golovach, P.~A.
\newblock Spanners in sparse graphs.
\newblock \emph{Journal of Computer and System Sciences}, 77\penalty0
  (6):\penalty0 1108--1119, 2011.

\bibitem[Fu et~al.(2020)Fu, Zhang, Meng, and King]{fu2020magnn}
Fu, X., Zhang, J., Meng, Z., and King, I.
\newblock Magnn: Metapath aggregated graph neural network for heterogeneous
  graph embedding.
\newblock In \emph{Proceedings of The Web Conference 2020}, pp.\  2331--2341,
  2020.

\bibitem[Giles et~al.(1998)Giles, Bollacker, and Lawrence]{giles1998citeseer}
Giles, C.~L., Bollacker, K.~D., and Lawrence, S.
\newblock Citeseer: An automatic citation indexing system.
\newblock In \emph{Proceedings of the third ACM Conference on Digital
  Libraries}, pp.\  89--98, 1998.

\bibitem[Hamann et~al.(2016)Hamann, Lindner, Meyerhenke, Staudt, and
  Wagner]{hamann2016structure}
Hamann, M., Lindner, G., Meyerhenke, H., Staudt, C.~L., and Wagner, D.
\newblock Structure-preserving sparsification methods for social networks.
\newblock \emph{Social Network Analysis and Mining}, 6:\penalty0 1--22, 2016.

\bibitem[Hamilton et~al.(2017)Hamilton, Ying, and
  Leskovec]{hamilton2017inductive}
Hamilton, W., Ying, Z., and Leskovec, J.
\newblock Inductive representation learning on large graphs.
\newblock In \emph{Advances in Neural Information Processing Systems}, pp.\
  1024--1034, 2017.

\bibitem[Hashemi et~al.(2024)Hashemi, Gong, Ni, Fan, Prakash, and
  Jin]{hashemi2024comprehensive}
Hashemi, M., Gong, S., Ni, J., Fan, W., Prakash, B.~A., and Jin, W.
\newblock A comprehensive survey on graph reduction: Sparsification,
  coarsening, and condensation.
\newblock \emph{arXiv:2402.03358}, 2024.

\bibitem[He et~al.(2022)He, Wei, and Wen]{he2022convolutional}
He, M., Wei, Z., and Wen, J.-R.
\newblock Convolutional neural networks on graphs with chebyshev approximation,
  revisited.
\newblock \emph{Advances in Neural Information Processing Systems},
  35:\penalty0 7264--7276, 2022.

\bibitem[Jang et~al.(2016)Jang, Gu, and Poole]{jang2016categorical}
Jang, E., Gu, S., and Poole, B.
\newblock {Categorical reparameterization with Gumbel-Softmax}.
\newblock \emph{arXiv:1611.01144}, 2016.

\bibitem[Jessica et~al.(2024)Jessica, Arafat, Lim, Chan, and
  Kong]{jessicafinite}
Jessica, L. S.~E., Arafat, N.~A., Lim, W.~X., Chan, W.~L., and Kong, A. W.~K.
\newblock Finite volume features, global geometry representations, and residual
  training for deep learning-based cfd simulation.
\newblock In \emph{Proceedings of the 41st International Conference on Machine
  Learning}, ICML'24. JMLR.org, 2024.

\bibitem[Karypis(1997)]{karypis1997metis}
Karypis, G.
\newblock Metis: Unstructured graph partitioning and sparse matrix ordering
  system.
\newblock \emph{Technical report}, 1997.

\bibitem[Kim \& Oh(2022)Kim and Oh]{kim2022find}
Kim, D. and Oh, A.
\newblock How to find your friendly neighborhood: Graph attention design with
  self-supervision.
\newblock \emph{arXiv:2204.04879}, 2022.

\bibitem[Kipf \& Welling(2016)Kipf and Welling]{kipf2016semi}
Kipf, T.~N. and Welling, M.
\newblock Semi-supervised classification with graph convolutional networks.
\newblock \emph{arXiv:1609.02907}, 2016.

\bibitem[Leskovec et~al.(2007)Leskovec, Kleinberg, and
  Faloutsos]{leskovec2007graph}
Leskovec, J., Kleinberg, J., and Faloutsos, C.
\newblock Graph evolution: Densification and shrinking diameters.
\newblock \emph{ACM transactions on Knowledge Discovery from Data (TKDD)},
  1\penalty0 (1), 2007.

\bibitem[Li et~al.(2020)Li, Zhang, Tian, Jin, Fardad, and Zafarani]{li2020sgcn}
Li, J., Zhang, T., Tian, H., Jin, S., Fardad, M., and Zafarani, R.
\newblock Sgcn: A graph sparsifier based on graph convolutional networks.
\newblock In \emph{Pacific-Asia Conference on Knowledge Discovery and Data
  Mining}, pp.\  275--287. Springer, 2020.

\bibitem[Lim et~al.(2021)Lim, Hohne, Li, Huang, Gupta, Bhalerao, and
  Lim]{lim2021large}
Lim, D., Hohne, F., Li, X., Huang, S.~L., Gupta, V., Bhalerao, O., and Lim,
  S.~N.
\newblock Large scale learning on non-homophilous graphs: New benchmarks and
  strong simple methods.
\newblock \emph{Advances in Neural Information Processing Systems},
  34:\penalty0 20887--20902, 2021.

\bibitem[Liu et~al.(2023)Liu, Zhou, Jiang, Li, Chen, Choi, and
  Hu]{liu2023dspar}
Liu, Z., Zhou, K., Jiang, Z., Li, L., Chen, R., Choi, S.-H., and Hu, X.
\newblock Dspar: An embarrassingly simple strategy for efficient gnn training
  and inference via degree-based sparsification.
\newblock \emph{Transactions on Machine Learning Research}, 2023.

\bibitem[Luo et~al.(2021)Luo, Cheng, Yu, Zong, Ni, Chen, and
  Zhang]{luo2021learning}
Luo, D., Cheng, W., Yu, W., Zong, B., Ni, J., Chen, H., and Zhang, X.
\newblock Learning to drop: Robust graph neural network via topological
  denoising.
\newblock In \emph{Proceedings of the 14th ACM International Conference on Web
  Search and Data Mining}, pp.\  779--787, 2021.

\bibitem[Muzio et~al.(2021)Muzio, O’Bray, and Borgwardt]{muzio2021biological}
Muzio, G., O’Bray, L., and Borgwardt, K.
\newblock Biological network analysis with deep learning.
\newblock \emph{Briefings in bioinformatics}, 22\penalty0 (2):\penalty0
  1515--1530, 2021.

\bibitem[Namata et~al.(2012)Namata, London, Getoor, Huang, and
  Edu]{namata2012query}
Namata, G., London, B., Getoor, L., Huang, B., and Edu, U.
\newblock Query-driven active surveying for collective classification.
\newblock In \emph{10th International Workshop on Mining and Learning with
  Graphs}, volume~8, pp.\ ~1, 2012.

\bibitem[Pei et~al.(2020)Pei, Wei, Chang, Lei, and Yang]{pei2020geom}
Pei, H., Wei, B., Chang, K. C.-C., Lei, Y., and Yang, B.
\newblock Geom-{GCN}: Geometric graph convolutional networks.
\newblock \emph{arXiv:2002.05287}, 2020.

\bibitem[Platonov et~al.(2022)Platonov, Kuznedelev, Babenko, and
  Prokhorenkova]{platonov2022characterizing}
Platonov, O., Kuznedelev, D., Babenko, A., and Prokhorenkova, L.
\newblock Characterizing graph datasets for node classification: Beyond
  homophily-heterophily dichotomy.
\newblock \emph{arXiv:2209.06177}, 2022.

\bibitem[Platonov et~al.(2023)Platonov, Kuznedelev, Diskin, Babenko, and
  Prokhorenkova]{platonov2023critical}
Platonov, O., Kuznedelev, D., Diskin, M., Babenko, A., and Prokhorenkova, L.
\newblock A critical look at the evaluation of gnns under heterophily: Are we
  really making progress?
\newblock \emph{arXiv:2302.11640}, 2023.

\bibitem[Rong et~al.(2019)Rong, Huang, Xu, and Huang]{rong2019dropedge}
Rong, Y., Huang, W., Xu, T., and Huang, J.
\newblock Dropedge: Towards deep graph convolutional networks on node
  classification.
\newblock \emph{arXiv:1907.10903}, 2019.

\bibitem[Rozemberczki et~al.(2021)Rozemberczki, Allen, and
  Sarkar]{rozemberczki2021multi}
Rozemberczki, B., Allen, C., and Sarkar, R.
\newblock Multi-scale attributed node embedding.
\newblock \emph{Journal of Complex Networks}, 9\penalty0 (2):\penalty0 cnab014,
  2021.

\bibitem[Sen et~al.(2008)Sen, Namata, Bilgic, Getoor, Galligher, and
  Eliassi-Rad]{sen2008collective}
Sen, P., Namata, G., Bilgic, M., Getoor, L., Galligher, B., and Eliassi-Rad, T.
\newblock Collective classification in network data.
\newblock \emph{AI magazine}, 29\penalty0 (3):\penalty0 93--93, 2008.

\bibitem[Shchur et~al.(2018)Shchur, Mumme, Bojchevski, and
  G{\"u}nnemann]{shchur2018pitfalls}
Shchur, O., Mumme, M., Bojchevski, A., and G{\"u}nnemann, S.
\newblock Pitfalls of graph neural network evaluation.
\newblock \emph{arXiv:1811.05868}, 2018.

\bibitem[Spielman \& Srivastava(2011)Spielman and
  Srivastava]{spielman2011graph}
Spielman, D.~A. and Srivastava, N.
\newblock Graph sparsification by effective resistances.
\newblock \emph{SIAM Journal on Computing}, 40\penalty0 (6):\penalty0
  1913--1926, 2011.

\bibitem[Srinivasa et~al.(2020)Srinivasa, Xiao, Glass, Romberg, and
  Sun]{srinivasa2020fast}
Srinivasa, R.~S., Xiao, C., Glass, L., Romberg, J., and Sun, J.
\newblock Fast graph attention networks using effective resistance based graph
  sparsification.
\newblock \emph{arXiv:2006.08796}, 2020.

\bibitem[Su et~al.(2024)Su, Liu, Kurths, and Meyerhenke]{su2024generic}
Su, Z., Liu, Y., Kurths, J., and Meyerhenke, H.
\newblock Generic network sparsification via degree-and subgraph-based edge
  sampling.
\newblock \emph{Information Sciences}, 679:\penalty0 121096, 2024.

\bibitem[Veli{\v{c}}kovi{\'c} et~al.(2017)Veli{\v{c}}kovi{\'c}, Cucurull,
  Casanova, Romero, Lio, and Bengio]{velivckovic2017graph}
Veli{\v{c}}kovi{\'c}, P., Cucurull, G., Casanova, A., Romero, A., Lio, P., and
  Bengio, Y.
\newblock Graph attention networks.
\newblock \emph{arXiv:1710.10903}, 2017.

\bibitem[Voudigari et~al.(2016)Voudigari, Salamanos, Papageorgiou, and
  Yannakoudakis]{voudigari2016rank}
Voudigari, E., Salamanos, N., Papageorgiou, T., and Yannakoudakis, E.~J.
\newblock Rank degree: An efficient algorithm for graph sampling.
\newblock In \emph{International Conference on Advances in Social Networks
  Analysis and Mining (ASONAM)}, pp.\  120--129. IEEE, 2016.

\bibitem[Wang et~al.(2024)Wang, He, and Liu]{wang2024probability}
Wang, Z., He, Y., and Liu, B.
\newblock Probability passing for graph neural networks: Graph structure and
  representations joint learning.
\newblock \emph{arXiv:2407.10688}, 2024.

\bibitem[Wu et~al.(2023)Wu, Lin, Zhuang, and Qiao]{wu2023alleviating}
Wu, G., Lin, S., Zhuang, Y., and Qiao, J.
\newblock Alleviating over-smoothing via graph sparsification based on vertex
  feature similarity.
\newblock \emph{Applied Intelligence}, 53\penalty0 (17):\penalty0 20223--20238,
  2023.

\bibitem[Wu et~al.(2021)Wu, Wang, Feng, He, Chen, Lian, and Xie]{wu2021self}
Wu, J., Wang, X., Feng, F., He, X., Chen, L., Lian, J., and Xie, X.
\newblock Self-supervised graph learning for recommendation.
\newblock In \emph{Proceedings of the 44th international ACM SIGIR conference
  on research and development in information retrieval}, pp.\  726--735, 2021.

\bibitem[Wu et~al.(2022)Wu, Cui, Pei, and Zhao]{GNNBook2022}
Wu, L., Cui, P., Pei, J., and Zhao, L.
\newblock \emph{Graph Neural Networks: Foundations, Frontiers, and
  Applications}.
\newblock Springer Singapore, Singapore, 2022.

\bibitem[Xie(2024)]{xie2024distributionally}
Xie, R.
\newblock \emph{Distributionally Robust Optimization and its Applications in
  Power System Energy Storage Sizing}.
\newblock Springer Nature, 2024.

\bibitem[Xu et~al.(2018)Xu, Hu, Leskovec, and Jegelka]{xu2018powerful}
Xu, K., Hu, W., Leskovec, J., and Jegelka, S.
\newblock How powerful are graph neural networks?
\newblock \emph{arXiv:1810.00826}, 2018.

\bibitem[Xu et~al.(2007)Xu, Yuruk, Feng, and Schweiger]{xu2007scan}
Xu, X., Yuruk, N., Feng, Z., and Schweiger, T.~A.
\newblock {SCAN}: A structural clustering algorithm for networks.
\newblock In \emph{Proceedings of the 13th ACM SIGKDD international conference
  on knowledge discovery and data mining}, pp.\  824--833, 2007.

\bibitem[Ye \& Ji(2021)Ye and Ji]{sparsegat}
Ye, Y. and Ji, S.
\newblock Sparse graph attention networks.
\newblock \emph{IEEE Transactions on Knowledge and Data Engineering},
  35\penalty0 (1):\penalty0 905--916, 2021.

\bibitem[Yu et~al.(2022)Yu, Yin, Xia, Chen, Cui, and Nguyen]{yu2022graph}
Yu, J., Yin, H., Xia, X., Chen, T., Cui, L., and Nguyen, Q. V.~H.
\newblock Are graph augmentations necessary? simple graph contrastive learning
  for recommendation.
\newblock In \emph{Proceedings of the 45th international ACM SIGIR conference
  on research and development in information retrieval}, pp.\  1294--1303,
  2022.

\bibitem[Zeng et~al.(2019)Zeng, Zhou, Srivastava, Kannan, and
  Prasanna]{zeng2019graphsaint}
Zeng, H., Zhou, H., Srivastava, A., Kannan, R., and Prasanna, V.
\newblock {GraphSAINT}: Graph sampling based inductive learning method.
\newblock In \emph{International Conference on Learning Representations}, 2019.

\bibitem[Zhang et~al.(2024)Zhang, Sun, Yue, Jiang, Wang, Chen, and
  Pan]{zhang2024graph}
Zhang, G., Sun, X., Yue, Y., Jiang, C., Wang, K., Chen, T., and Pan, S.
\newblock {Graph Sparsification via Mixture of Graphs}.
\newblock \emph{arXiv:2405.14260}, 2024.

\bibitem[Zheng et~al.(2020)Zheng, Zong, Cheng, Song, Ni, Yu, Chen, and
  Wang]{zheng2020robust}
Zheng, C., Zong, B., Cheng, W., Song, D., Ni, J., Yu, W., Chen, H., and Wang,
  W.
\newblock Robust graph representation learning via neural sparsification.
\newblock In \emph{International Conference on Machine Learning}, pp.\
  11458--11468. PMLR, 2020.

\bibitem[Zhou et~al.(2020)Zhou, Cui, Hu, Zhang, Yang, Liu, Wang, Li, and
  Sun]{zhou2020graph}
Zhou, J., Cui, G., Hu, S., Zhang, Z., Yang, C., Liu, Z., Wang, L., Li, C., and
  Sun, M.
\newblock Graph neural networks: A review of methods and applications.
\newblock \emph{AI open}, 1:\penalty0 57--81, 2020.

\bibitem[Zhu et~al.(2020)Zhu, Yan, Zhao, Heimann, Akoglu, and
  Koutra]{zhu2020beyond}
Zhu, J., Yan, Y., Zhao, L., Heimann, M., Akoglu, L., and Koutra, D.
\newblock Beyond homophily in graph neural networks: Current limitations and
  effective designs.
\newblock \emph{Advances in Neural Information Processing Systems},
  33:\penalty0 7793--7804, 2020.

\end{thebibliography}
\bibliographystyle{icml2025}

\appendix
\onecolumn
\section{Theoretical Analysis}
\subsection{Notations}
We dedicate Table~\ref{tab:Notation} to index the notations used in this paper. Note that every notation is also defined when it is introduced.
\begin{table*}[h!]
\caption{Notations.}\label{tab:Notation}
\centering  
\begin{tabular}{l l l}
\toprule
 $\gG$ &$\triangleq$ & Input graph with a vertex set $\gV$, an edge set $\gE$, and features $\mX$\\
 $\boldsymbol{A}$ &$\triangleq$ & Adjacency matrix of $\gG$\\
 $\gE$ & $\triangleq$ & Edges of $\gG$\\
 $\gV$ & $\triangleq$ & Nodes of $\gG$\\
 $\mX$ & $\triangleq$ & Matrix containing node features of $\gG$\\
 $\vy$ & $\triangleq$ & Vector of node labels of $\gG$\\
 $C$ & $\triangleq$ & An ordered set containing all possible node labels of $\gG$\\
 $F$ & $\triangleq$ & Dimension of node features in $\gG$\\
 $L$ & $\triangleq$ & Number of GNN layers\\
 $H$ & $\triangleq$ & Node embedding dimension\\ 
 ${\mH}$ & $\triangleq$ & Node embedding matrix\\
 $\vh_u$ & $\triangleq$ & Embedding of node u\\
 $\vw$ & $\triangleq$ & Vector of edge weights in  $\gG$\\
 $q$ & $\triangleq$ & Ratio of \# edges in sparse graph and \# edges in input graph in \%\\
 $k$ & $\triangleq$ & \# edges in the sparse graph, $k\triangleq\floor{\frac{q|\gE|}{100}}$\\ 
 $\tilde{p}$ & $\triangleq$ & Learned probability distribution by \sgs \\
 $\tilde{\gE}$ & $\triangleq$ & Set of edges sampled from $\gE$ by \sgs following $\tilde{p}$\\
$\tilde{\gG}$ &$\triangleq$ & Sparse subgraph $(\gV,\tilde{\gE},\mX)$ constructed by \sgs \\  
 $\mA_{\tilde{\gG}}$ or $\tmA$ & $\triangleq$ & Adjacency matrix of $\tilde{\gG}$\\
 $\tilde{\vw}$ & $\triangleq$ & Edge weight of sparse graph learned by \sgs \\
 $p_\mathrm{prior}$ & $\triangleq$ & Probability distribution of a fixed prior on $\gG$ \\
  $\tilde{p}_a$ & $\triangleq$ & Augmented learned probability distribution  \\
 $p^*$ & $\triangleq$ & True probability distribution known by the idealized learning ORACLE\\
 ${\gE^*}$ & $\triangleq$ & Set of edges sampled from $\gE$ by the learning ORACLE following distribution $p^*$\\   
 $\gG^*$ &$\triangleq$ & True sparse subgraph $(\gV,\gE^*,\mX)$ constructed by the learning ORACLE \\
 $\mA_{\gG^*}$ or $\mA^*$ & $\triangleq$ & Adjacency matrix of $\gG^*$\\
 $\gL_\mathrm{CE}$ & $\triangleq$ & Cross entropy loss\\
 $\gL_\mathrm{assor}$ & $\triangleq$ & Assortative loss\\
 $\gL_\mathrm{cons}$ & $\triangleq$ & Consistency loss\\
$\gL$ & $\triangleq$ & Total loss\\

 \bottomrule
\end{tabular}
\end{table*}
\subsection{Bounding \#common edges wrt. true subgraph}
\label{theo:commonedges}
Let $\mathcal{E}^*$ and $\mathcal{\tilde{E}}$ denote the ordered collection of edges sampled by the idealized learning ORACLE according to true distribution $p^*$ and by \sgs according to learned probability $\tilde{p}$ respectively. For analytical convenience, let us assume that both learning algorithms sample $k = \floor{q|\mathcal{E}|/100}$ edges with replacement independently.
 
First, we will prove lemma~\ref{lem:singleedge}, which show that the probability of an edge chosen by \sgs coincides with that chosen by the ORACLE has a lower bound. Finally, we will prove one of the main results (Theorem~\ref{theo:commonedges}), which shows that given $q \in [0,100]$, we can lower-bound the expected number of common edges between \sgs and the learning ORACLE. 

\begin{lemma} 
\label{lem:singleedge}
For any arbitrarily chosen $i \in \{1,2,\ldots, k\}$
\[
\mathbf{Pr}(\mathcal{E}_i^* = \mathcal{\tilde{E}}_i) \geq \sum_{j=1}^{|\mathcal{E}|} \frac{(p^*_j + \tilde{p}_j - \epsilon)^2}{4},
\]
where $k = \floor{q|\mathcal{E}|/100}$ and $0 \leq q \leq 100$ is a user-specified parameter and $\epsilon\in [0,1]$ is the error.
\end{lemma}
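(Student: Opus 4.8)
The plan is to reduce the collision probability between the two independent sampling processes to a sum of per-edge terms, and then to control the arithmetic–geometric-mean gap of each term by the error $\epsilon$. First I would fix an index $i \in \{1,\dots,k\}$ and exploit independence: the ORACLE draws $\mathcal{E}_i^*$ according to $p^*$, \sgs draws $\mathcal{\tilde{E}}_i$ according to $\tilde{p}$, and because the two sampling procedures are run independently, partitioning over which edge $e_j$ is commonly drawn yields
\[
\mathbf{Pr}(\mathcal{E}_i^* = \mathcal{\tilde{E}}_i) = \sum_{j=1}^{|\mathcal{E}|} \mathbf{Pr}(\mathcal{E}_i^* = e_j)\,\mathbf{Pr}(\mathcal{\tilde{E}}_i = e_j) = \sum_{j=1}^{|\mathcal{E}|} p^*_j\,\tilde{p}_j.
\]
This identity reduces the lemma to the term-by-term inequality $p^*_j \tilde{p}_j \ge (p^*_j + \tilde{p}_j - \epsilon)^2/4$, after which summing over $j$ closes the argument.

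Next I would prove the per-edge inequality through the exact arithmetic–geometric-mean identity
\[
2\sqrt{p^*_j \tilde{p}_j} = (p^*_j + \tilde{p}_j) - \big(\sqrt{p^*_j} - \sqrt{\tilde{p}_j}\big)^2,
\]
which exhibits the gap between the arithmetic and geometric means of $p^*_j$ and $\tilde{p}_j$ as the squared Hellinger-type deviation $\epsilon_j \triangleq (\sqrt{p^*_j}-\sqrt{\tilde{p}_j})^2$. Interpreting $\epsilon$ as a uniform upper bound on these per-edge deviations, $\epsilon \ge \epsilon_j$ for all $j$, and using that $x \mapsto (p^*_j + \tilde{p}_j - x)^2$ is non-increasing on $x \le p^*_j + \tilde{p}_j$, I would conclude
\[
\tfrac{1}{4}(p^*_j + \tilde{p}_j - \epsilon)^2 \le \tfrac{1}{4}(p^*_j + \tilde{p}_j - \epsilon_j)^2 = p^*_j \tilde{p}_j,
\]
which is exactly the desired per-edge bound. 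Summing over $j$ then gives the single-draw lower bound claimed in the lemma.

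The main obstacle is bookkeeping around $\epsilon$ and the sign condition, rather than any deep inequality. The monotonicity step only preserves the inequality direction while $p^*_j + \tilde{p}_j - \epsilon \ge 0$, so I would have to handle the regime $\epsilon > p^*_j + \tilde{p}_j$ separately — either by restricting $\epsilon$ to the range where the bound is meaningful, or by noting that for such terms the contribution $(p^*_j+\tilde{p}_j-\epsilon)^2/4$ is small and the inequality can still be arranged to hold — and to verify that $\epsilon \in [0,1]$ is consistent with being an upper bound on $\epsilon_j \le 1$. Once the lemma is established, the factor $k$ appearing in Theorem~\ref{theo:commonedges} follows immediately by linearity of expectation, since the $k$ draws are i.i.d.\ and the expected overlap is $k\cdot\mathbf{Pr}(\mathcal{E}_i^* = \mathcal{\tilde{E}}_i)$.
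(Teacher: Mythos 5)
Your proof is correct in substance and follows the same skeleton as the paper's: fix a draw $i$, use independence of the two samplers to write $\mathbf{Pr}(\mathcal{E}_i^* = \mathcal{\tilde{E}}_i) = \sum_{j} p^*_j \tilde{p}_j$, then lower-bound each product term by $(p^*_j + \tilde{p}_j - \epsilon)^2/4$; the factor $k$ in the theorem follows by linearity of expectation exactly as you say. Where you differ is the per-edge step and, implicitly, the meaning of $\epsilon$. The paper bounds $p^*_j \tilde{p}_j \ge \min(p^*_j,\tilde{p}_j)^2 = \tfrac{1}{4}\bigl(p^*_j + \tilde{p}_j - |p^*_j - \tilde{p}_j|\bigr)^2$ and then replaces $|p^*_j - \tilde{p}_j|$ by $\epsilon$, where $\epsilon$ is the sup-norm approximation error of \edgemlp justified by a universal-approximation argument (Part~1 of the paper's proof, which your write-up omits entirely). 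You instead use the exact arithmetic--geometric-mean identity $p^*_j\tilde{p}_j = \tfrac{1}{4}\bigl(p^*_j+\tilde{p}_j - \epsilon_j\bigr)^2$ with $\epsilon_j = (\sqrt{p^*_j}-\sqrt{\tilde{p}_j})^2$, taking $\epsilon$ to be a uniform bound on these Hellinger-type gaps. The two are compatible: since $(\sqrt{a}-\sqrt{b})^2 \le |a-b|$, the paper's hypothesis $|p^*_j - \tilde{p}_j|\le\epsilon$ implies yours, so your argument goes through under the paper's definition of $\epsilon$ and is in fact marginally sharper, your per-edge relation being an identity where the paper's is an inequality.

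One caution on the caveat you raise. You are right that replacing $\epsilon_j$ by $\epsilon$ inside the square is only valid while $p^*_j + \tilde{p}_j - \epsilon \ge 0$; note that the paper's own proof silently makes the same unjustified step when it replaces $|p^*_j - \tilde{p}_j|$ by $\epsilon$. However, your second proposed remedy --- that in the bad regime the contribution $(p^*_j+\tilde{p}_j-\epsilon)^2/4$ is ``small'' and the inequality ``can still be arranged to hold'' --- does not work: in that regime the claimed per-edge bound genuinely fails. For instance, with $p^*_j = \tilde{p}_j = 10^{-3}$ and $\epsilon = 1$ the right-hand term is roughly $1/4$ while $p^*_j\tilde{p}_j = 10^{-6}$, and taking $p^*,\tilde{p}$ uniform over many edges with $\epsilon=1$ shows the summed statement fails as well. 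The clean fixes are either to restrict to $\epsilon \le \min_j (p^*_j+\tilde{p}_j)$ or to state the bound with the positive part, i.e.\ $\bigl(\max(p^*_j+\tilde{p}_j-\epsilon,\,0)\bigr)^2/4$. Your instinct to flag the regime is sound --- just commit to a fix that actually works, and note that it repairs the paper's version of the argument as well.
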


\begin{proof} We prove the above lemma in two parts.

\paragraph{Part 1: Universal approximation of probability distribution over edges.}
The Universal Approximation Theorem~\cite{cybenko1989approximation,augustine2024survey} states that a feed-forward neural network with at least one hidden layer and a finite number of neurons can approximate any continuous function $f: \mathbb{R}^n \rightarrow \mathbb{R}$ on a compact subset of $\mathbb{R}^n$, given a suitable choice of weights and activation functions. 

In our case, $p^* = f$ is the true edge probability distribution for the downstream task, $\tilde{p} = f_{\text{MLP},\phi}$ is the learned approximate distribution and $\vx_e$ is a vector of edge features, for instance, $\vx_e =  ((\vh_u - \vh_v) \oplus (\vh_u \odot \vh_v))$ as used in equation~\ref{eq:w_uv}. The following universal approximation property holds for the module I component of \sgs,
\begin{equation}
\label{eq:uapp}
\sup_{e \in \mathcal{E}} \|\tilde{p}(\vx_e) - p^*(\vx_e)\|_1 \leq \epsilon.
\end{equation}
 Here, we have two underlying assumptions: (i) the optimal distribution $p^*$ is a function of node features $\mX$ and (ii) $\mX$ is a compact subset (bounded and closed) of Euclidean space $\mathbb{R}^n$. The first assumption is made to simplify the problem. The second assumption is quite practical since the node features are typically normalized. Hence, we can show that the embeddings $\vh_u,\vh_v$, which are continuous images of $\mX$, are also compact due to the extreme value theorem. As a result, the edge features $\vx_e$ which, in a sense, \emph{lifts} the end-point node features into higher-dimensional Euclidean space are also compact. The approximation error $\epsilon$ can be made arbitrarily small by increasing the capacity of the MLP, e.g., adding more neurons or layers. 

\paragraph{Part 2: Common edges wrt. optimal subgraph.}

The event $\mathcal{E}_i^* = \mathcal{\tilde{E}}_i$ means that both $\mathcal{E}_i^*$ and $\mathcal{\tilde{E}}_i$ contain the same edge. But there are $|\mathcal{E}|$ such candidates. Hence, the probability of this event is given by,

\begin{align*}
    \mathbf{Pr}(\mathcal{E}_i^* = \mathcal{\tilde{E}}_i) &= \sum_{j=1}^{|\mathcal{E}|} \mathbf{Pr}(\mathcal{E}_i^* = \mathcal{E}_j \land \mathcal{\tilde{E}}_i = \mathcal{E}_j), \\
    &= \sum_{j=1}^{|\mathcal{E}|} \mathbf{Pr}(\mathcal{E}_i^* = \mathcal{E}_j) \cdot \mathbf{Pr}(\mathcal{\tilde{E}}_i = \mathcal{E}_j), \\
    & = \sum_{j=1}^{|\mathcal{E}|} p^*_j \cdot \tilde{p}_j, \\
    &\geq \sum_{j=1}^{|\mathcal{E}|} \frac{(p^*_j + \tilde{p}_j - |p^*_j - \tilde{p}_j|)^2}{4}, \\
    & \geq \sum_{j=1}^{|\mathcal{E}|} \frac{(p^*_j + \tilde{p}_j - \epsilon)^2}{4}.
\end{align*}
The second line follows since the optimal sampler is a different algorithm independent from the sampler used in \sgs. The last line follows because $\|p^*_j - \tilde{p}_j\|_1 \leq \epsilon \implies |p^*_j - \tilde{p}_j| \leq \epsilon$ (from eq.~\ref{eq:uapp}). 
\end{proof}


We have the following theorem that lower-bounds the number of common edges with respect to the optimal sampler $|\mathcal{E} ^* \cap \mathcal{\tilde{E}}|$: 
\begin{theorem}[Lower-bound]
\begin{equation}
\mathbb{E}[|\mathcal{E}^* \cap \mathcal{\tilde{E}}|] \geq k \sum_{j=1}^{|\mathcal{E}|} \frac{(p^*_j + \tilde{p}_j - \epsilon)^2}{4},
\end{equation}
where $k = \floor{q|\mathcal{E}|/100}$ and $0 \leq q \leq 100$ is a user-specified parameter.
\end{theorem}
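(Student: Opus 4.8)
The plan is to reduce the theorem to the per-position bound already established in Lemma~\ref{lem:singleedge} by means of an indicator-variable decomposition together with linearity of expectation. First I would introduce, for each sampling position $i \in \{1,\dots,k\}$, the indicator random variable $X_i \triangleq \mathbb{I}(\mathcal{E}_i^* = \tilde{\mathcal{E}}_i)$, which equals $1$ exactly when the $i$-th edge drawn by the idealized ORACLE coincides with the $i$-th edge drawn by \sgs. Since both algorithms sample $k$ edges with replacement and independently of one another, the event governing each $X_i$ is precisely the one analyzed in the lemma, so $\mathbf{Pr}(X_i = 1) \geq \sum_{j=1}^{|\mathcal{E}|} \frac{(p^*_j + \tilde{p}_j - \epsilon)^2}{4}$ for every $i$.

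Next I would write the count of common edges as $|\mathcal{E}^* \cap \tilde{\mathcal{E}}| = \sum_{i=1}^{k} X_i$, viewing both samples as ordered $k$-tuples so that each positional coincidence contributes one shared edge. Applying linearity of expectation then gives $\mathbb{E}[|\mathcal{E}^* \cap \tilde{\mathcal{E}}|] = \sum_{i=1}^{k} \mathbf{Pr}(\mathcal{E}_i^* = \tilde{\mathcal{E}}_i)$. Substituting the uniform lower bound from Lemma~\ref{lem:singleedge} into each of the $k$ identical terms yields $\mathbb{E}[|\mathcal{E}^* \cap \tilde{\mathcal{E}}|] \geq k \sum_{j=1}^{|\mathcal{E}|} \frac{(p^*_j + \tilde{p}_j - \epsilon)^2}{4}$, which is exactly the claimed bound. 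No additional inequalities beyond the lemma are needed once this decomposition is in place.

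The main obstacle is making the combinatorial identity $|\mathcal{E}^* \cap \tilde{\mathcal{E}}| = \sum_{i} X_i$ precise, because with-replacement sampling produces multisets, and a naive set-intersection reading could under- or over-count edges that recur at several positions. I would resolve this by fixing the ordered, positional interpretation of the intersection, under which each matched index contributes exactly one common edge and the identity holds with equality; alternatively, one can observe that positional matches always furnish a valid lower bound on the number of genuinely shared edges, which already suffices for the desired $\geq$ direction. Once this bookkeeping is settled, the remaining steps are routine applications of linearity of expectation and the lemma, and the proof closes immediately.
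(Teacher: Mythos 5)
Your proof is correct and follows essentially the same route as the paper's: decompose $|\mathcal{E}^* \cap \tilde{\mathcal{E}}|$ into positional indicator variables $\mathbb{I}(\mathcal{E}_i^* = \tilde{\mathcal{E}}_i)$, apply linearity of expectation, and invoke Lemma~\ref{lem:singleedge} for each of the $k$ identical terms. Your added care about the with-replacement/multiset bookkeeping is a point the paper glosses over (it simply treats both samples as ordered collections and counts positional matches), and your observation that positional matches lower-bound the number of genuinely shared edges correctly secures the $\geq$ direction under either reading.
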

\begin{proof}
    Since we are drawing $k$ edges independently at random, the theorem follows by applying the linearity of expectation on the following:
\begin{align*}
\mathbb{E}[|\mathcal{E}^* \cap \mathcal{\tilde{E}}|] = \mathbb{E}[\sum_{i=1}^k \mathbb{I}(\mathcal{E}_i^* = \mathcal{\tilde{E}}_i)] &= \sum_{i=1}^k \mathbf{Pr}(\mathcal{E}_i^* = \mathcal{\tilde{E}}_i) \\
& = k\cdot \mathbf{Pr}(\mathcal{E}_i^* = \mathcal{\tilde{E}}_i)\\
& \geq k \sum_{j=1}^{|\mathcal{E}|} \frac{(p^*_j + \tilde{p}_j - \epsilon)^2}{4}
\end{align*}
\end{proof}
This theorem shows that the expected number of common edges between the sample subgraph obtained by \sgs $\mathcal{\tilde{G}}$ and the true optimal sample subgraph $\mathcal{G}^*$ is non-trivial. 



\begin{theorem}[Upper-bound]
\begin{equation}
\mathbb{E}[|\mathcal{E}^* \cap \mathcal{\tilde{E}}|] \leq k (1 - \frac{\|p^* - \tilde{p}\|_1}{2}), 
\end{equation}
where $k = \floor{q|\mathcal{E}|/100}$ and $0 \leq q \leq 100$ is a user-specified parameter.
\end{theorem}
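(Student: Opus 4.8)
The plan is to mirror the decomposition already used for the lower bound and then replace the AM--GM step with a sharper per-coordinate inequality. First I would reuse the fact that the $k$ edges are drawn independently with replacement, so that by linearity of expectation
\begin{equation*}
\mathbb{E}[|\mathcal{E}^* \cap \mathcal{\tilde{E}}|] = \sum_{i=1}^{k} \mathbf{Pr}(\mathcal{E}_i^* = \mathcal{\tilde{E}}_i) = k \sum_{j=1}^{|\mathcal{E}|} p^*_j \tilde{p}_j,
\end{equation*}
where the final equality is exactly the computation carried out in the proof of Lemma~\ref{lem:singleedge}, taken \emph{before} the AM--GM lower bound is applied. Thus it suffices to establish the purely analytic bound $\sum_j p^*_j \tilde{p}_j \leq 1 - \tfrac{1}{2}\|p^* - \tilde{p}\|_1$ on the two probability vectors, after which the theorem follows by multiplying through by $k$.

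The key step is a coordinatewise comparison against the minimum. Since $p^*$ and $\tilde{p}$ are both probability distributions, every entry lies in $[0,1]$, so for each $j$ I would argue $p^*_j \tilde{p}_j \leq \min(p^*_j, \tilde{p}_j)$: assuming without loss of generality $\tilde{p}_j \le p^*_j$, the product satisfies $p^*_j \tilde{p}_j \le \tilde{p}_j = \min(p^*_j,\tilde{p}_j)$ precisely because $p^*_j \le 1$. Summing over $j$ then gives $\sum_j p^*_j \tilde{p}_j \le \sum_j \min(p^*_j,\tilde{p}_j)$.

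Finally I would invoke the standard total-variation identity. Writing $|a-b| = a + b - 2\min(a,b)$ coordinatewise and summing, and using that both vectors sum to $1$, one obtains $\|p^*-\tilde{p}\|_1 = \sum_j |p^*_j - \tilde{p}_j| = 2 - 2\sum_j \min(p^*_j,\tilde{p}_j)$, which rearranges to $\sum_j \min(p^*_j,\tilde{p}_j) = 1 - \tfrac{1}{2}\|p^*-\tilde{p}\|_1$. Chaining the two bounds proves the claim. I do not anticipate a genuine obstacle here; the only subtlety worth flagging is that the per-coordinate bound $p^*_j\tilde{p}_j \le \min(p^*_j,\tilde{p}_j)$ relies crucially on the entries being bounded by $1$ (it fails for arbitrary nonnegative vectors), so the argument really uses that these are \emph{normalized} distributions, consistent with the sum/softmax normalization of Module~I.
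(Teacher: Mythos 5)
Your proof is correct and takes essentially the same route as the paper: linearity of expectation over the $k$ independent draws reduces the claim to bounding $\sum_j p^*_j \tilde{p}_j \le \sum_j \min(p^*_j,\tilde{p}_j) = 1 - \tfrac{1}{2}\|p^* - \tilde{p}\|_1$, which is exactly the paper's chain of inequalities. The only difference is one of self-containment: you explicitly justify the per-coordinate step $p^*_j\tilde{p}_j \le \min(p^*_j,\tilde{p}_j)$ (correctly noting it needs entries bounded by $1$) and derive the total-variation identity from $|a-b| = a+b-2\min(a,b)$, whereas the paper asserts the former and cites a reference for the latter.
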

\begin{proof}
\begin{align*}
    \mathbf{Pr}(\mathcal{E}_i^* = \mathcal{\tilde{E}}_i) &= \sum_{j=1}^{|\mathcal{E}|} p^*_j \cdot \tilde{p}_j \\
    & \leq \sum_{j=1}^{|\mathcal{E}|} \min(p^*_j,\tilde{p}_j) \\
    &= 1 - d_{TV}(p^*,\tilde{p}) \\
    &= 1 - \frac{1}{2} \|p^* - \tilde{p}\|_1    
\end{align*}
\end{proof}
Here $d_{TV}$ is the total variation distance. The result used in the last line regarding $d_{TV}$ can be found in~\citet{xie2024distributionally}. 

\paragraph{The implication of the upper-bound.} 
When $\tilde{p} \rightarrow p^*$, the norm $\|p^* - \tilde{p}\|_1 \rightarrow 0$; therefore, the number of common edges could be close to $k$.

\subsection{Upper-bounding the error in the learned Adjacency matrix} 
With the bound proven earlier on the \#common edges by the sparse subgraph of \sgs with that by a learning ORACLE, in this section, we want to obtain an upper-bound on the error in terms of the norm of the Adjacency matrices. As adjacency matrices are used by GNNs for computing node embeddings, such result is important for obtaining error bound on the embeddings later on.

Let $\mA_{\tilde{\gG}}$ and $\mA_{\gG^*}$ be the corresponding adjacency matrices of the learned sparse graph $\tilde{\gG}$ and true optimal sparse graph $\gG^*$. The dimension of these matrices is the same as the input adjacency matrix $\mA_{\mathcal{G}}$ except that $\mA_{\mathcal{G}}$ is denser. Let us also denote the Frobenius norm of a matrix $\mA$ as $\|\mA\|_F$ and the spectral norm of $\mA$ as $\|\mA\|_2$. The Frobenius norm of $\mA$ is defined as $\sqrt{\sum_{ij} \mA^2_{ij}}$, whereas the spectral norm of $\mA$ is the largest singular value $\sigma_{max}(\mA)$ of $\mA$.

Since \sgs do not know the true probability distribution $p^*$, error is introduced in the learned adjacency matrix $\mA_{\tilde{\gG}}$ of the downstream sparse subgraph. We are interested in analyzing the expected error introduced in $\mA_{\tilde{\gG}}$ in terms of the spectral norm, to be precise, $\mathbb{E}[\|\mA_{\tilde{\gG}} - \mA_{\gG^*}\|_2]$. To this end, we will exploit the lower bound derived in Theorem 1 and the fact that $\|\mA\|_2 \leq \|\mA\|_F$. 

\begin{lemma}[Error in Adjacency matrix approximation] Let $\mA_{\tilde{\gG}}$ and $\mA_{\gG^*}$ be the corresponding adjacency matrices of the learned sparse graph $\tilde{\gG}$ and true optimal sparse graph $\gG^*$. If the downstream sampler sampled $k$ edges independently at random (with replacement) to construct those matrices following their respective distributions $\tilde{p}$ and $p^*$, then 
    \[
    \mathbb{E}[\|\mA_{\tilde{\gG}} - \mA_{\gG^*}\|_2] \leq \sqrt{2k(1-\sum_{j=1}^{\abs{\mathcal{E}}} \frac{(p^*_j + \tilde{p}_j - \epsilon)^2}{4})},
    \]
    where $k = \floor{q|\mathcal{E}|/100}$ and $0 \leq q \leq 100$ is a user-specified parameter.
\end{lemma}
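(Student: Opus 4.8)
The plan is to reduce the spectral-norm error to a scalar second-moment quantity, translate that quantity into a count of edges on which $\tilde{\gG}$ and $\gG^*$ disagree, and then feed in the expected number of common edges already established in the Lower-bound theorem proven above. Write $\mathbf{M} = \mA_{\tilde{\gG}} - \mA_{\gG^*}$ and abbreviate $S = \sum_{j=1}^{|\mathcal{E}|}\frac{(p^*_j+\tilde{p}_j-\epsilon)^2}{4}$. First I would use the matrix-norm inequality $\|\mathbf{M}\|_2 \le \|\mathbf{M}\|_F$ noted earlier in this section, and then apply Jensen's inequality (the square root is concave) to get $\mathbb{E}[\|\mathbf{M}\|_2] \le \mathbb{E}[\|\mathbf{M}\|_F] \le \sqrt{\mathbb{E}[\|\mathbf{M}\|_F^2]}$. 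This reduces the whole lemma to controlling the single scalar $\mathbb{E}[\|\mathbf{M}\|_F^2]$.

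The second step is combinatorial. Since $\mathbf{M}$ has entries in $\{-1,0,1\}$, every undirected edge lying in exactly one of $\tilde{\gG}, \gG^*$ contributes a fixed amount to $\|\mathbf{M}\|_F^2$, while every common edge contributes nothing; hence $\|\mathbf{M}\|_F^2$ is (up to a constant) the size of the symmetric difference of the two sampled edge collections. I would upper-bound this by $2\big(k - |\mathcal{E}^* \cap \tilde{\mathcal{E}}|\big)$, using that both the oracle and \sgs draw exactly $k$ edges with replacement, so the only slots that can contribute are the mismatched ones, of which there are $k - |\mathcal{E}^* \cap \tilde{\mathcal{E}}|$. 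Concretely, each distinct disagreeing edge can be mapped injectively to a mismatched slot in which it appears, which keeps the constant consistent with the factor inside the square root of the claimed bound.

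The final step takes expectations and substitutes the Lower-bound theorem. By linearity of expectation and $\mathbb{E}[|\mathcal{E}^* \cap \tilde{\mathcal{E}}|] \ge kS$, I obtain $\mathbb{E}[\|\mathbf{M}\|_F^2] \le 2\big(k - \mathbb{E}[|\mathcal{E}^* \cap \tilde{\mathcal{E}}|]\big) \le 2(k - kS) = 2k(1-S)$. Plugging this into the Jensen bound from the first step yields $\mathbb{E}[\|\mathbf{M}\|_2] \le \sqrt{2k(1-S)}$, which is exactly the stated inequality once $S$ is expanded.

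I expect the main obstacle to be the bookkeeping in the combinatorial step, namely pinning down the constant in $\|\mathbf{M}\|_F^2 \le 2\big(k - |\mathcal{E}^* \cap \tilde{\mathcal{E}}|\big)$ under with-replacement sampling. The subtlety is that the Lower-bound theorem counts common edges slotwise, as $\sum_i \mathbb{I}(\mathcal{E}^*_i = \tilde{\mathcal{E}}_i)$, whereas the Frobenius norm only sees the \emph{set} of distinct disagreeing edges; reconciling multiplicities (a repeated draw adds no new nonzero entry) and the symmetric $(u,v)/(v,u)$ structure of an undirected adjacency matrix is precisely what fixes the constant. Handling this carefully with the injective slot-to-edge assignment is the one place where looseness could creep in, so that is where I would spend the most care.
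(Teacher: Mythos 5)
Your proposal is correct and follows essentially the same route as the paper's proof: bound the spectral norm by the Frobenius norm, apply Jensen's inequality to reduce everything to $\mathbb{E}[\|\mathbf{M}\|_F^2]$, identify that quantity with the size of the symmetric difference of the two sampled edge collections, and substitute the Lower-bound theorem $\mathbb{E}[|\mathcal{E}^*\cap\tilde{\mathcal{E}}|]\geq kS$. The only substantive difference is in the bookkeeping step, where you are in fact more careful than the paper (which writes the symmetric-difference count as the exact identity $2k-2|\tilde{\mathcal{E}}\cap\mathcal{E}^*|$, silently treating the with-replacement draws as $k$ distinct edges and conflating set intersection with the slotwise match count): your inequality $\|\mathbf{M}\|_F^2\leq 2\bigl(k-|\mathcal{E}^*\cap\tilde{\mathcal{E}}|\bigr)$ is the right fix, but note that the slot-to-edge correspondence justifying it is two-to-one rather than injective --- a single mismatched slot can host one disagreeing edge from $\tilde{\mathcal{E}}$ \emph{and} one from $\mathcal{E}^*$ --- and it is exactly this two-to-one map that produces the factor $2$ inside the square root.
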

\begin{proof}
Since the entries in adjacency matrices are either $0$ or $1$, the difference $\mA_{\tilde{\gG}}(i,j) - \mA_{\gG^*}(i,j)$ are in $\{-1,0,1\}$ for all $i,j$. The following holds by definition of Frobenus norm,

\[
\|\mA_{\tilde{G}} - \mA_{G^*}\|^2_F = \sum_{ij}(\mA_{\tilde{\gG}}(i,j) - \mA_{\gG^*}(i,j))^2.
\] 
As a result, only the non-zero entries in $\mA_{\tilde{\gG}} - \mA_{\gG^*}$ contribute to the square of Frobenius norm $\|\mA_{\tilde{G}} - \mA_{G^*}\|^2_F$.
The expected number of non-zero entries in $\|\mA_{\tilde{\gG}} - \mA_{\gG^*}\|^2_F$ corresponds to the expected cardinality $\abs{(\mathcal{\tilde{E}} \setminus \mathcal{E}^*) \cup (\mathcal{E}^* \setminus \mathcal{\tilde{E}})}$. Thus

\begin{align*}
    \mathbb{E}[\|\mA_{\tilde{\gG}} - \mA_{\gG^*}\|^2_F] &= \mathbb{E}[\abs{(\mathcal{\tilde{E}} \setminus \mathcal{E}^*) \cup (\mathcal{\tilde{E}} \setminus \mathcal{E}^*)}] \\ 
    &= \mathbb{E}[\abs{\mathcal{\tilde{E}}} + \abs{\mathcal{E}^*} - 2 \abs{\mathcal{\tilde{E}} \cap \mathcal{E}^*}] \\
    &= 2k - 2\mathbb{E}[\abs{\mathcal{\tilde{E}} \cap \mathcal{E}^*}] \\
    &\leq 2k - 2k \sum_{j=1}^{|\mathcal{E}|} \frac{(p^*_j + \tilde{p}_j - \epsilon)^2}{4} \\
    &= 2k (1 - \sum_{j=1}^{|\mathcal{E}|} \frac{(p^*_j + \tilde{p}_j - \epsilon)^2}{4}).
\end{align*}
Applying Jensen's inequality for convex functions, in particular, applying $(\mathbb{E}[\rX])^2 \leq \mathbb{E}[\rX^2])$ yields,
\begin{align*}
     (\mathbb{E}[\|\mA_{\tilde{\gG}} - \mA_{\gG^*}\|_F])^2 &\leq  \mathbb{E}[\|\mA_{\tilde{\gG}} - \mA_{\gG^*}\|^2_F] \\
     &\leq 2k (1 - \sum_{j=1}^{|\mathcal{E}|} \frac{(p^*_j + \tilde{p}_j - \epsilon)^2}{4}).
\end{align*}
Taking square-root on both sides yields,
\[
 \mathbb{E}[\|\mA_{\tilde{\gG}} - \mA_{\gG^*}\|_F] \leq \sqrt{2k (1 - \sum_{j=1}^{|\mathcal{E}|} \frac{(p^*_j + \tilde{p}_j - \epsilon)^2}{4})}.
\]
We obtain the theorem using the following relation between the Frobenius and spectral norms.
\begin{align*}
    \|\mA_{\tilde{\gG}} - \mA_{\gG^*}\|_2 &\leq \|\mA_{\tilde{\gG}} - \mA_{\gG^*}\|_F \\
    \implies \mathbb{E}[\|\mA_{\tilde{\gG}} - \mA_{\gG^*}\|_2] &\leq \mathbb{E}[\|\mA_{\tilde{\gG}} - \mA_{\gG^*}\|_F] \\
    &= \sqrt{2k (1 - \sum_{j=1}^{|\mathcal{E}|} \frac{(p^*_j + \tilde{p}_j - \epsilon)^2}{4})}
\end{align*}
\end{proof}
\subsection{Upper-bounding the error in the predicted node embeddings}
\label{theo:gcnembed}
We consider vanilla GCN as proof of concept to understand how the changes in the sparse subgraph affect the node embeddings produced by a trained GCN. Our goal is to analyze the respective encodings produced by an $L$-layer GCN when the input subgraphs are $\gG^*$ (corresponding to $\mA_{\gG^*}$) and $\tilde{\gG}$ (corresponding to $\mA_{\tilde{\gG}}$) respectively. For simplicity, we will shorten the matrices $\mA_{\gG^*}$ as $\mA^*$ and $\mA_{\tilde{\gG}}$ as $\tmA$. 

A single GCN layer is defined as,

\[
\mH^{(l+1)} = \sigma(\hat{\mA}\mH^{(l)}\mW^{(l)}),
\]
where $\hat{\mA} = \mD^{-1/2}\mA\mD^{-1/2}$ is the normalized adjacency matrix, $\mH^{(l)}$ is the input to the $l$-th layer with $\mH^{(0)} = \mX$, $\mW^{(l)}$ is the learnable weight matrix for $l$-th layer and $\sigma$ is non-linear activation function. Let us suppose an $L$-layer GCN produces embeddings $\tmH^{(L)}$ and $\mH^{*(L)}$ when it takes sparse matrices $\tmA$ and $\mA^*$ as input. We want to upper-bound,
\[
\mathbb{E}[\normLtwo{\tmH^{(L)} - \mH^{*(L)}}],
\]
in other words, the loss in the downstream node encodings is due to using our learned subgraph. 

\paragraph{Assumptions.} We assume that for all $l$, $\normLtwo{\mW} \leq \alpha < 1$ where $\alpha$ is a constant no more than 1. This is reasonable since each $\mW^{(l)}$ is typically controlled during training using regularization techniques, e.g., weight decay. Assuming that the input features in $\mX$ are bounded, we can also assume that there exists a constant $\beta$ such that $\forall l$, $\normLtwo{H}^{(l)} \leq \beta$. We also assume that $\sigma$ is \emph{Lipschitz continuous} with \emph{Lipschitz constant} $L_\sigma$; for instance,  activation functions such as \relu, sigmoid, or tanH are Lipschitz continuous. In particular, we assume \relu activation for our theoretical analysis because \relu has \emph{Lipschitz constant} $L_\sigma = 1$, which simplifies our analysis.

Under these assumptions, we have the following theorem,
\begin{theorem}[Error in GCN encodings]
For sufficiently deep L-layer GCN (large L), the error 
{
\[
\mathbb{E}[\lim_{L \to \infty} \normLtwo{\tmH^{(L)} - \mH^{*(L)}}] < \frac{\beta}{1-\alpha}\sqrt{2k (1 - \sum_{j=1}^{|\mathcal{E}|} \frac{(p^*_j + \tilde{p}_j - \epsilon)^2}{4})}.
\]
}
\end{theorem}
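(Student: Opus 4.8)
The plan is to track the discrepancy between the two forward passes layer by layer and to show that it obeys a contractive affine recurrence whose limit yields the claimed bound. I would define the layer-wise error $\mE^{(l)} \triangleq \tmH^{(l)} - \mH^{*(l)}$ and note $\mE^{(0)} = \vzero$, since both networks share the same input features $\mX$. The key algebraic step is to insert a cross term into the difference of the two pre-activation aggregations,
\[
\hat{\tmA}\tmH^{(l)}\mW^{(l)} - \hat{\mA}^*\mH^{*(l)}\mW^{(l)} = (\hat{\tmA} - \hat{\mA}^*)\tmH^{(l)}\mW^{(l)} + \hat{\mA}^*\mE^{(l)}\mW^{(l)},
\]
so that one summand isolates the adjacency-matrix perturbation while the other isolates the propagated error. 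Applying the $1$-Lipschitz property of \relu, sub-multiplicativity of the spectral norm, the assumptions $\normLtwo{\mW^{(l)}} \leq \alpha$ and $\normLtwo{\tmH^{(l)}} \leq \beta$, together with $\normLtwo{\hat{\mA}^*} \leq 1$ (symmetric normalization has spectral radius at most one), gives
\[
\normLtwo{\mE^{(l+1)}} \leq \beta\,\normLtwo{\hat{\tmA} - \hat{\mA}^*} + \alpha\,\normLtwo{\mE^{(l)}},
\]
where the leading weight factor $\alpha<1$ has been absorbed via $\alpha\beta \le \beta$ to match the stated constant.

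Next I would unroll this affine recurrence. With $\normLtwo{\mE^{(0)}} = 0$ it telescopes into a geometric series, $\normLtwo{\mE^{(L)}} \leq \beta\,\normLtwo{\hat{\tmA} - \hat{\mA}^*}\sum_{l=0}^{L-1}\alpha^l = \beta\,\normLtwo{\hat{\tmA} - \hat{\mA}^*}\,\tfrac{1-\alpha^L}{1-\alpha}$. Because $\alpha < 1$, letting $L \to \infty$ drives $\alpha^L \to 0$ and yields the deterministic bound $\lim_{L\to\infty}\normLtwo{\mE^{(L)}} \leq \tfrac{\beta}{1-\alpha}\,\normLtwo{\hat{\tmA} - \hat{\mA}^*}$. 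This is precisely where the contractivity hypothesis pays off: $\alpha<1$ both guarantees that the deep network converges and produces the closed-form factor $\tfrac{1}{1-\alpha}$.

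Finally I would pass to expectations. Since the bound $\normLtwo{\mE^{(L)}} \le \tfrac{\beta}{1-\alpha}\normLtwo{\hat{\tmA} - \hat{\mA}^*}$ holds uniformly in $L$ (as $1-\alpha^L \le 1$), it holds pathwise in the limit, and monotonicity of expectation gives $\mathbb{E}[\lim_{L\to\infty}\normLtwo{\mE^{(L)}}] \leq \tfrac{\beta}{1-\alpha}\,\mathbb{E}[\normLtwo{\hat{\tmA} - \hat{\mA}^*}]$; I then invoke the adjacency-error lemma of the previous subsection, which bounds $\mathbb{E}[\normLtwo{\tmA - \mA^*}] \leq \sqrt{2k(1 - \sum_{j}(p^*_j + \tilde{p}_j - \epsilon)^2/4)}$. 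The main obstacle — and the step I would scrutinize most — is relating the normalized difference $\normLtwo{\hat{\tmA} - \hat{\mA}^*}$ to the unnormalized difference $\normLtwo{\tmA - \mA^*}$ that the lemma actually controls, because the two sparse graphs have different degree matrices and hence the maps $\mD^{-1/2}(\cdot)\mD^{-1/2}$ are not the same operator on each side. I would address this by arguing that symmetric normalization is non-expansive in spectral norm (or, as the paper implicitly does, folding the normalization into the constants), so that $\normLtwo{\hat{\tmA} - \hat{\mA}^*} \leq \normLtwo{\tmA - \mA^*}$; substituting the lemma's bound then produces the stated inequality.
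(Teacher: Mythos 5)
Your proof follows essentially the same route as the paper's: the same cross-term decomposition $(\hat{\tmA}-\hat{\mA}^*)\tmH^{(l)}\mW^{(l)} + \hat{\mA}^*\mE^{(l)}\mW^{(l)}$, the same use of the $1$-Lipschitz property of \relu, sub-multiplicativity, $\normLtwo{\hat{\mA}^*}\le 1$, the same contractive recurrence unrolled into a geometric series from $\mE^{(0)}=\vzero$, and the same final substitution of the adjacency-error lemma after taking expectations. The one step you rightly single out as delicate --- passing from the normalized difference $\normLtwo{\hat{\tmA}-\hat{\mA}^*}$ to the unnormalized difference $\normLtwo{\tmA-\mA^*}$ that the lemma actually controls --- is precisely the step the paper also takes, silently and without justification, so your treatment (which at least names the issue) is no weaker than the paper's own proof.
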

\begin{proof}
{
\[
\tmH^{(L)} - \mH^{*(L)} = \sigma(\hat{\tmA}\tmH^{(L-1)}\mW^{(L-1)}) - \sigma(\hat{\mA}^*\mH^{*(L-1)}\mW^{(L-1)})
\]
}
Since $\sigma$ is a Lipschitz continuous function, we have
{
\begin{align*}
\normLtwo{\tmH^{(L)} - \mH^{*(L)}} \leq L_\sigma\normLtwo{\hat{\tmA}\tmH^{(L-1)}\mW^{(L-1)} - \hat{\mA}^*\mH^{*(L-1)}\mW^{(L-1)}} \\
= \normLtwo{\hat{\tmA}\tmH^{(L-1)}\mW^{(L-1)} - \hat{\mA}^*\mH^{*(L-1)}\mW^{(L-1)}}\\
= \normLtwo{(\hat{\tmA} -\hat{\mA}^*) \tmH^{(L-1)}\mW^{(L-1)} + \hat{\mA}^*(\tmH^{(L-1)}- \mH^{*(L-1)})\mW^{(L-1)}}
\end{align*}
}
For notational convenience, let us suppose $D^{(L)} = \normLtwo{\tmH^{(L)} - \mH^{*(L)}}$. Applying the sub-multiplicative property of the spectral norm and triangle inequality, we obtain the following recurrence relation
{
\begin{align*}
    D^{(L)} &\leq \normLtwo{(\hat{\tmA} -\hat{\mA}^*)}\normLtwo{\tmH^{(L-1)}}\normLtwo{\mW^{(L-1)}} +   \normLtwo{\hat{\mA}^*}D^{(L-1)}\normLtwo{\mW^{(L-1)}} \\
    &\leq \normLtwo{(\hat{\tmA} -\hat{\mA}^*)} \beta\alpha + \normLtwo{\hat{\mA}^*}D^{(L-1)}\alpha \\
    &\leq \normLtwo{(\hat{\tmA} -\hat{\mA}^*)} \beta\alpha + D^{(L-1)}\alpha 
\end{align*}
}
The last inequality holds because normalized adjacency matrix satisfies $\normLtwo{\hat{\mA}^*} \leq 1$. This is because $\hat{\mA}^*$ is symmetric, row-stochastic matrix. Thus the singular values of $\hat{\mA}^*$ is the absolute values of eigenvalues of $\hat{\mA}^*$ and the largest singular value of $\hat{\mA}^*$ is the largest eigenvalue of $\hat{\mA}^*$. But $\hat{\mA}^*$ being row-stochastic, its largest eigenvalue is at most 1 hence $\normLtwo{\hat{\mA}^*} = \sigma_{max}(\hat{\mA}^*) \leq 1$.

By unrolling the recursion from earlier inequality:
\begin{align*}
     D^{(L)} &\leq \normLtwo{(\hat{\tmA} -\hat{\mA}^*)} \beta \alpha\sum_{l=0}^{L-1} \alpha^{l} + D^{(0)}\alpha^L
\end{align*}
$D^{(0)} = \normLtwo{\tmH^{(0)} - \mH^{*(0)}} = \normLtwo{\mX - \mX} = 0$. Since $\alpha < 1$, The geometric series simplifies to:
\begin{align*}
\sum_{l=0}^{L-1} \alpha^{l} = \frac{1-\alpha^L}{1-\alpha} \\
\lim_{L \to \infty} \sum_{l=0}^{L-1} \alpha^{l} = \frac{1}{1-\alpha}
\end{align*}
Thus our earlier inequality becomes:
\[
\lim_{L \to \infty} D^{(L)} \leq \frac{\beta\alpha}{1-\alpha}\normLtwo{(\hat{\tmA} -\hat{\mA}^*)} < \frac{\beta}{1-\alpha}\normLtwo{(\hat{\tmA} -\hat{\mA}^*)}
\]
Taking expectation on both sides gives us our desired result:
\small{
\begin{align*}
    \mathbb{E}[\lim_{L \to \infty} \normLtwo{\tmH^{(L)} - \mH^{*(L)}}] = \mathbb{E}[D^{(L}] < \frac{\beta}{1-\alpha}\mathbb{E}[\normLtwo{(\hat{\tmA} -\hat{\mA}^*)}] \\
    < \frac{\beta}{1-\alpha}\mathbb{E}[\normLtwo{(\hat{\mA} - \mA^*)}] \\
    = \frac{\beta}{1-\alpha}\sqrt{2k (1 - \sum_{j=1}^{|\mathcal{E}|} \frac{(p^*_j + \tilde{p}_j - \epsilon)^2}{4})}
\end{align*}
}
\end{proof}
    




\FloatBarrier

\clearpage
\section{Analyzing the Effectiveness of \sgs with a Synthetic Graph}
\label{app:toymoon}
In this section, we demonstrate and analyze the effectiveness of \sgs with a synthetically generated heterophilic graph.

\paragraph{Synthetic Graph: Moon.}
The moon dataset has the following properties: number of nodes $|\gV|=150$, number of edges $|\gE|=870$, average degree $d=5.8$, node homophily $\gH_n=0.2$, edge homophily $\gH_e = 0.32$, training/test split = $30\%/70\%$, and 2D coordinates of the points representing the nodes are the node features $\mX$.
The dataset comprises two half-moons representing two communities with $68\%$ edges connecting them as bridge edges.


\paragraph{Explaining the Effectiveness of \sgs on Heterophilic graph.} Fig.~\ref{fig:moongraph} juxtaposes the input moon graph (Fig.~\ref{fig:moongraph}, left) and the sparsified moon graph by \sgs (Fig.~\ref{fig:moongraph}, right). \sgs removes a significant portion of bridge edges, causing an increase in edge homophily from $0.32$ to $1.0$. As a result, the accuracy of vanilla GCN increased from $80\%$ on the full graph to $100\%$ on the sparsified graph. Since heterophilous edges significantly hinder the node representation learning, \sgs identifies them during training and learns to put less probability mass on such edges for downstream node classification. 
Due to this learning dynamics, \sgs is more effective on heterophilic graphs such as the Moon graph.

\begin{figure}[!htbp]
\centering
\includegraphics[width=\linewidth]{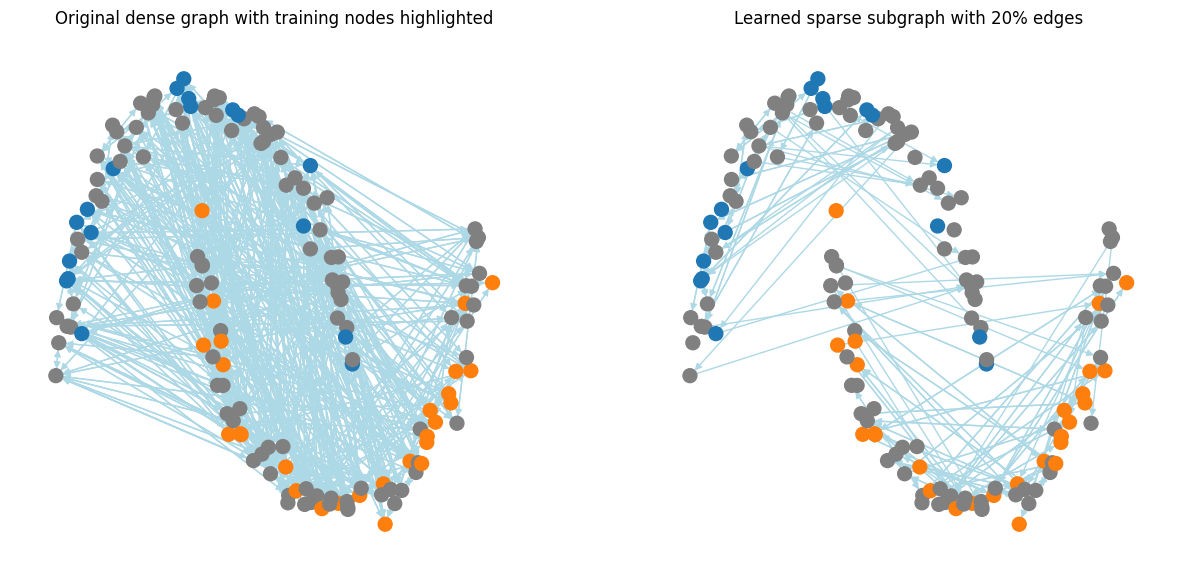}
\caption{Toy example with two half moon demonstrates the effectiveness of \sgs. The original graph has $68\%$ edges with different node labels; in contrast, the learned sparse subgraph from \sgs contains no such bridge edges.}
\label{fig:moongraph}
\end{figure}


\clearpage
\section{Additional algorithmic details of \sgs }
\label{app:algorithm}

\paragraph{Conditional update of \edgemlp.}
Backward propagation is often the most computationally intensive part of training, so we employ a conditional mechanism to update \edgemlp selectively. 
We evaluate the learned sparse subgraph (line 9, Alg.~\ref{alg:sgstrainingpriorfull}) against a subgraph from the prior probability distribution $p_\mathrm{prior}$ (line 11, Alg.~\ref{alg:sgstrainingpriorfull}). If the training F1-score from the learned sparse subgraph is better than the baseline, parameters of \edgemlp are updated (line 19, Alg.~\ref{alg:sgstrainingpriorfull}). Otherwise, the update to \edgemlp is skipped (line 22, Alg.~\ref{alg:sgstrainingpriorfull}). 

The detailed algorithm for \sgs with conditional updates is in Alg.~\ref{alg:sgstrainingpriorfull}.

\begin{algorithm}[!htbp]
\caption{\sgs Training with conditional updates}
\begin{algorithmic}[1] 
\STATE \textbf{Input:} $\gG (\gV, \gE, \mX)$, sample percent $q$, $\mathrm{hops}$, METIS Parts, $n$
\STATE \textbf{Output:} \texttt{EdgeMLP}, \texttt{GNN}
\STATE Compute $p_\mathrm{prior}(u,v) \gets \frac{1/d_u + 1/d_v}{\sum_{i,j\in \gE} (1/d_i + 1/d_j)}$

\STATE $\gG_\mathrm{parts}=\{\gG_1,\gG_2,\cdots,\gG_n\}\gets \mathrm{METIS} (\gG(\gV,\gE, p), n)$

\FOR{$\mathrm{epochs}$ in $\mathrm{max\_epochs}$}

    \FOR {$\gG_i(\gV_i,\gE_i,\mX_i,p^i_\mathrm{prior}) \in \gG_\mathrm{parts}$}
        \STATE $\tilde{p},\vw \gets \edgemlp(\gE_i,p^i_\mathrm{prior}, \mX_i,\mathrm{hops})$        
        \STATE $\tilde{p}_a \gets \lambda \tilde{p}+(1-\lambda)p^i_\mathrm{prior}$
        \STATE $\tilde{\gE},\tilde{\vw} \gets \mathrm{Sample}(\tilde{p}_a,\vw,\floor{\frac{q|\gE_i|}{100}})$ \COMMENT{\textbf{Learned sparse subgraph}}
        
        \STATE $\hat{\mY}, \tilde{\mH} \gets \mathrm{GNN}_\theta(\tilde{\gE},\tilde{\vw},\mX)$
        
        \STATE $\gE_\mathrm{prior} \gets \mathrm{Sample}({p_i},\floor{\frac{q|\gE_i|}{100}})$ \COMMENT{\textbf{Sparse subgraph from prior}}

        \STATE $\hat{\mY}_\mathrm{prior}  \gets \mathrm{GNN}_\theta(\gE_\mathrm{prior},\mX)$

        \IF {Evaluate$(\hat{\mY}) \ge $ Evaluate$(\hat{\mY}_\mathrm{prior})$}
            \STATE $\gL_\mathrm{CE} \gets \mathrm{CrossEntropy} (\mY_{\gV_L}, \hat{\mY}_{\gV_L})$

            \STATE $\forall_{(u,v)\in \gE_i} u \in \gV_L \land v \in \gV_L : \mathrm{mask[u,v]} \gets \text{True}$
        
            \STATE $\gL_\mathrm{assor} \gets \mathrm{CrossEntropy}(\gE \mathrm{[mask]},\vw \mathrm{[mask]})$
        
            \STATE $\gL_\mathrm{cons} \gets \mathrm{Sim} (\vw, \mathrm{Cosine}(\vh_u,\vh_v): \forall_{(u,v)\in \gE})$ 
        
            \STATE $\gL \gets \alpha_1\cdot \gL_\mathrm{CE}+ \alpha_2\cdot \gL_\mathrm{assor}+ \alpha_3\cdot \gL_\mathrm{cons}$
            \STATE Backward Propagate through $\gL$ and optimize EdgeMLP$_\phi, \mathrm{GNN}_\theta$.
        
        \ELSE
            \STATE $\gL_\mathrm{CE} \gets \mathrm{CrossEntropy} (\mY_{\gV_L}, \hat{\mY}_{\gV_L})$
            \STATE Backward Propagate through $\gL_\mathrm{CE}$ and optimize $\mathrm{GNN}_\theta$.            
        \ENDIF
    
    \ENDFOR
    
\ENDFOR
\STATE \textbf{Return} \texttt{EdgeMLP}, \texttt{GNN} 
\end{algorithmic}
\label{alg:sgstrainingpriorfull}
\end{algorithm}

\clearpage
\paragraph{Inference.} 
During inference, we use the learned probability distribution from \edgemlp. We keep track of the best temperature $T$ that gave the best validation accuracy and use that to sample an ensemble of sparse subgraphs. Then, we mean-aggregate their representations to produce the final
prediction on a test node. 

The reason we consider ensemble of subgraphs is because there are variability in the edges of the sample subgraphs even if they are all sampled from the same distribution. Thus mean-aggregation of node embeddings is an effective way to improve the robustness of the learned node embeddings. 

The inference pseudocode is provided in Algorithm~\ref{alg:sgsinference}.

\begin{algorithm}[!htbp]
\caption{\sgs Inference}
\begin{algorithmic}[1] 
\STATE \textbf{Input:} Graph $\gG (\gV, \gE, \mX)$, sample \% $q$, Ensemble size, $R$.
\STATE \textbf{Output:} Prediction, $\hat{\mY}$

    \STATE $\vw, \tilde{p} = \edgemlp(\gE, \mX, T_\mathrm{best})$ \COMMENT{\textbf{Use $T$ that gave best validation accuracy}.}   
    \STATE $S_y \gets \emptyset$ \COMMENT{Predictions}
    
    \FOR {$i$ in $R$}
        \STATE $\tilde{\gE}, \tilde{\vw} \gets \mathrm{Sample}(\tilde{p},\floor{\frac{q|\gE|}{100}})$        
        \STATE $\hat{\mY}_i \gets \mathrm{GNN}_\theta(\tilde{\gE},\tilde{\vw},\mX)$
        \STATE $S_y \gets S_y \cup \hat{\mY}_i$
    \ENDFOR

    \STATE Predict, $\hat{\mY} \gets \mathrm{Mean} (S_y)$
    
\STATE \textbf{Return} $\hat{\mY}$
\end{algorithmic}
\label{alg:sgsinference}
\end{algorithm}
\clearpage

\section{Dataset Description}
\label{app:dataset}
\begin{table}[!htbp]
\caption{Additional details of the dataset are provided. $\gH_\mathrm{adj}$ refers to adjusted homophily. $d$ corresponds to the average degree, $C$ number of classes, and $F$ is the feature dimension. \textit{Tr.} is the training label rate.}
\label{tab:datasetdescription}
\centering
\begin{sc}
\resizebox{1.0\linewidth}{!}
{
\def\arraystretch{1.0}
\begin{tabular}{@{}crrrccrcccl@{}}
\toprule
\textbf{Dataset} &
  $|\gV|$ &
  $|\gE|$ &
  \textbf{$d$} &
  \textbf{$\gH_\mathrm{adj}$} &
  \textbf{$C$} &
  \textbf{$F$} &
  \textbf{Tr.} &
  \textbf{Self-Loop} &
  \textbf{Isolated} &
  \textbf{Context} \\ \midrule
Cornell        & 183       & 557         & 3.04   & -0.42 & 5  & 1703 & 0.48 & TRUE  & FALSE & Web Pages           \\
Texas          & 183       & 574         & 3.14   & -0.26 & 5  & 1703 & 0.48 & TRUE  & FALSE & Web Pages           \\
Wisconsin      & 251       & 916         & 3.65   & -0.20 & 5  & 1703 & 0.48 & TRUE  & FALSE & Web Pages           \\
reed98         & 962       & 37,624      & 39.11  & -0.10 & 3  & 1001 & 0.6  & FALSE & FALSE & Social Network      \\
amherst41      & 2,235     & 181,908     & 81.39  & -0.07 & 3  & 1193 & 0.6  & FALSE & FALSE & Social Network      \\
penn94         & 41,554    & 2,724,458   & 65.56  & -0.06 & 2  & 4814 & 0.47 & FALSE & FALSE & Social Network      \\
Roman-empire   & 22,662    & 65,854      & 2.91   & -0.05 & 18 & 300  & 0.5  & FALSE & FALSE & Wikipedia           \\
cornell5       & 18,660    & 1,581,554   & 84.76  & -0.04 & 3  & 4735 & 0.6  & FALSE & FALSE & Web pages           \\
Squirrel       & 5,201     & 396,846     & 76.30  & -0.01 & 5  & 2345 & 0.48 & TRUE  & FALSE & Wikipedia           \\
johnshopkins55 & 5,180     & 373,172     & 72.04  & 0.00  & 3  & 2406 & 0.6  & FALSE & FALSE & Web Pages           \\
Actor          & 7,600     & 53,411      & 7.03   & 0.01  & 5  & 932  & 0.48 & TRUE  & FALSE & Actors in Movies    \\
Minesweeper    & 10,000    & 78,804      & 7.88   & 0.01  & 2  & 7    & 0.5  & FALSE & FALSE & Synthetic           \\
Questions      & 48,921    & 307,080     & 6.28   & 0.02  & 2  & 301  & 0.5  & FALSE & FALSE & Yandex Q            \\
Chameleon      & 2,277     & 62,792      & 27.58  & 0.03  & 5  & 2581 & 0.48 & TRUE  & FALSE & Wiki Pages          \\
Tolokers       & 11,758    & 1,038,000   & 88.28  & 0.09  & 2  & 10   & 0.5  & FALSE & FALSE & Toloka Platform     \\
Amazon-ratings & 24,492    & 186,100     & 7.60   & 0.14  & 5  & 556  & 0.5  & FALSE & FALSE & Co-purchase network \\
genius         & 421,961   & 1,845,736   & 4.37   & 0.17  & 2  & 12   & 0.6  & FALSE & TRUE  & Social Network      \\
pokec          & 1,632,803 & 44,603,928  & 27.32  & 0.42  & 3  & 65   & 0.6  & FALSE & FALSE & Social Network      \\
arxiv-year     & 169,343   & 2,315,598   & 13.67  & 0.26  & 5  & 128  & 0.6  & FALSE & FALSE & Citation            \\
snap-patents   & 2,923,922 & 27,945,092  & 9.56   & 0.21  & 5  & 269  & 0.6  & TRUE  & TRUE  & Citation            \\
ogbn-proteins  & 132,534   & 79,122,504  & 597.00 & 0.05  & 94 & 8    & 0.2  & FALSE & FALSE & Protein Network     \\\midrule \midrule
Cora           & 19,793    & 126,842     & 6.41   & 0.56  & 70 & 8710 & 0.2  & FALSE & FALSE & Citation Network    \\
DBLP           & 17,716    & 105,734     & 5.97   & 0.68  & 4  & 1639 & 0.2  & FALSE & FALSE & Citation Network    \\
Computers      & 13,752    & 491,722     & 35.76  & 0.68  & 10 & 767  & 0.6  & FALSE & TRUE  & Co-purchase Network \\
PubMed         & 19,717    & 88,648      & 4.50   & 0.69  & 3  & 500  & 0.2  & FALSE & FALSE & Social Network      \\
Cora\_ML        & 2,995     & 16,316      & 5.45   & 0.75  & 7  & 2879 & 0.2  & FALSE & FALSE & Citation Network    \\
SmallCora      & 2,708     & 10,556      & 3.90   & 0.77  & 7  & 1433 & 0.05 & FALSE & FALSE & Citation Network    \\
CS             & 18,333    & 163,788     & 8.93   & 0.78  & 15 & 6805 & 0.2  & FALSE & FALSE & Co-author Network   \\
Photo          & 7,650     & 238,162     & 31.13  & 0.79  & 8  & 745  & 0.2  & FALSE & TRUE  & Co-purchase Network \\
Physics        & 34,493    & 495,924     & 14.38  & 0.87  & 5  & 8415 & 0.2  & FALSE & FALSE & Co-author Network   \\
CiteSeer       & 4,230     & 10,674      & 2.52   & 0.94  & 6  & 602  & 0.2  & FALSE & FALSE & Citation Network    \\
wiki           & 11,701    & 431,726     & 36.90  & 0.58  & 10 & 300  & 0.99 & TRUE  & TRUE  & Wikipedia           \\
Reddit         & 232,965   & 114,615,892 & 491.99 & 0.74  & 41 & 602  & 0.66 & FALSE & FALSE & Social Network      \\ \bottomrule
\end{tabular}
}
\end{sc}
\end{table}
Table~\ref{tab:datasetdescription} shows the details of the characteristics of the graph datasets, including the splits used throughout the experimentation.

Along with synthetic dataset, for heterophily, we used, 
\textit{Cornell, Texas}, \textit{Wisconsin} from the \textit{WebKB}~\cite{pei2020geom}; \textit{Chameleon}, \textit{Squirrel} ~\cite{rozemberczki2021multi}; \textit{Actor} ~\cite{pei2020geom}; \textit{Wiki, ArXiv-year, Snap-Patents, Penn94, Pokec, Genius, reed98, amherst41, cornell5}, and \textit{Yelp}~\cite{lim2021large}. 
We also experiment on some recent benchmark datasets, \textit{Roman-empire, Amazon-ratings, Minesweeper, Tolokers}, and \textit{Questions} from~\cite{platonov2023critical}.

For homophily, we used
\textit{Cora}~\cite{sen2008collective}; \textit{Citeseer}~\cite{giles1998citeseer}; \textit{pubmed} \cite{namata2012query}; \textit{Coauthor-cs}, \textit{Coauthor-physics}~\cite{shchur2018pitfalls}; \textit{Amazon-computers},  \textit{Amazon-photo} ~\cite{shchur2018pitfalls}; \textit{Reddit}~\cite{hamilton2017inductive}; and, \textit{DBLP}~\cite{fu2020magnn}. 

\noindent\textbf{Heterophily Characterization.} The term \emph{homophily} in a graph describes the likelihood that nodes with the same labels are neighbors. Although there are several ways to measure homophily, three commonly used measures are {\em homophily of the nodes} ($\gH_{n}$), {\em homophily of the edges} ($\gH_{e}$), and {\em adjusted homophily} ($\gH_\mathrm{adj}$).
The {\em node homophily}~\cite{pei2020geom} is defined as,  
\begin{align}
\gH_{n} & = \frac{1}{|\gV|} \sum_{u\in \gV}\frac{| \{v\in \gN(u) : y_v = y_u\}|}{|\gN(u)|}.
\end{align}
The {\em edge homophily}~\cite{zhu2020beyond} of a graph is,

\begin{equation}
    \gH_{e} = \frac{|\{(u,v)\in \gE : y_u = y_v\}|}{|\gE|}. 
\end{equation}

The 
{\em adjusted homophily}~\cite{platonov2022characterizing} is defined as,
\begin{equation}
    \gH_\mathrm{adj} = \frac{\gH_{e}-\sum_{k=1}^{c} D_k^2/(2|\gE|^2)}{1-\sum_{k=1}^c D_k^2/2|\gE|^2}. 
\end{equation}

Here, $D_k = \sum_{v:y_v=k}d_v$ denote the sum of degrees of the nodes belonging to class $k$. 

The values of the node homophily and the edge homophily range from $0$ to $1$, and the adjusted homophily ranges from $-\frac{1}{3}$ to $+1$ (Proposition 1 in~\cite{platonov2022characterizing}). 
Among these measures, adjusted homophily considers the class imbalance. Thus, this work classifies graphs with adjusted homophily, $\gH_\mathrm {adj} \le 0.50$ as heterophilic.


\clearpage

\section{Runtime Comparison}
\label{app:runtime}

\subsection{Impact of Conditional Updates on Runtime}
Table.~\ref{tab:largescaleruntime} compares the runtime of \sgs with and without conditional updates for large-scale graphs (with $|\gE| \ge 1M$). The results indicate that conditional updates are similar to our standard training algorithm in terms of computational efficiency while providing improvements in F1-score under identical conditions. The additional computational costs of evaluation with prior get compensated by fewer updates of \edgemlp.

\begin{table}[!htbp]
\caption{Comparison of runtime of \sgs with and without conditional updates on large-scale graphs (with $|\gE| \ge 1M$). Here, \textit{Runtime (s)} refers to the mean training time per epoch. The terms \edgemlp/\gnn represent the proportion of time the \edgemlp module is updated relative to the \gnn. The results indicate that conditional updates are not significantly slower than our standard training algorithm, yet provide performance improvements to \sgs under similar conditions.}
\label{tab:largescaleruntime}
\centering
\begin{sc}
\resizebox{1.0\columnwidth}{!}
{
\def\arraystretch{1.0}
\begin{tabular}{@{}crrr|cc|cc|c@{}}
\toprule
\multirow{2}{*}{\textbf{Dataset}} & \multirow{2}{*}{\textbf{Node}} & \multirow{2}{*}{\textbf{Edges}} & \multirow{2}{*}{\textbf{Degree}} & \multicolumn{2}{c|}{\textbf{\sgs Runtime (s)}} & \multicolumn{2}{c|}{\textbf{\sgs F1-Score}} & \multirow{2}{*}{\textbf{\#EdgeMLP/\#GNN}} \\
 &  &  &  & \textbf{w/o. cond} & \textbf{w. cond} & \textbf{w/o. cond} & \textbf{w. cond} &  \\\midrule
cornell5 & 18,660 & 1,581,554 & 84.76 & \textbf{0.3625} & 0.3795 & 69.02 $\pm$ 0.09 & \textbf{69.12 $\pm$ 0.20} & 0.94 \\
Tolokers & 11,758 & 1,038,000 & 88.28 & 0.1743 & \textbf{0.1630} & 78.12 $\pm$ 0.13 & \textbf{78.13 $\pm$ 0.17} & 0.42 \\
genius & 421,961 & 1,845,736 & 4.37 & \textbf{0.3884} & 0.4799 & 79.92 $\pm$ 0.08 & \textbf{80.07 $\pm$ 0.11} & 0.43 \\
pokec & 1,632,803 & 44,603,928 & 27.32 & 6.7984 & \textbf{6.4885} & 62.05 $\pm$ 0.33 & \textbf{62.20 $\pm$ 0.10} & 0.75 \\
arxiv-year & 169,343 & 2,315,598 & 13.67 & \textbf{0.4571} & 0.4580 & \textbf{36.99 $\pm$ 0.11} & 36.98 $\pm$ 0.13 & 0.23 \\
snap-patents & 2,923,922 & 27,945,092 & 9.56 & \textbf{6.3470} & 7.1236 & 34.86 $\pm$ 0.15 & \textbf{34.95 $\pm$ 0.16} & 0.84 \\
Reddit & 232,965 & 114,615,892 & 491.99 & \textbf{8.0892} & 8.2960 & \textbf{91.45 $\pm$ 0.07} & 91.43 $\pm$ 0.02 & 0.44\\\bottomrule
\end{tabular}
}
\end{sc}
\end{table}

\subsection{Comparison with Baseline GNN based Sparsifiers}
\label{app:runtimerelated}
Table~\ref{tab:runtimerelated} shows related algorithms' mean training time (s). Although \sgs is slower than the unsupervised sparsification-based GNNs, it is significantly faster than supervised sparsifiers.
\begin{table}[!htbp]
\caption{Mean training time (s) per epoch of related methods. OOM refers to out-of-memory.}
\label{tab:runtimerelated}
\centering
\begin{sc}
\resizebox{1.0\columnwidth}{!}
{
\def\arraystretch{1.0}
\begin{tabular}{@{}c|ccccccc@{}}
\toprule
\textbf{Method} & \textbf{ClusterGCN} & \textbf{GraphSAINT} & \textbf{DropEdge} & \textbf{MOG} & \textbf{SparseGAT} & \textbf{Neural Sparse} & \textbf{SGS-GNN} \\ \midrule
CS & 0.0095 & 0.0089 & 0.0146 & OOM & 0.1009 & 0.1515 & 0.0221 \\
Questions & 0.0082 & 0.0072 & 0.0290 & 0.1263 & 0.0236 & 0.1221 & 0.0261 \\
Amazon-ratings & 0.0068 & 0.0062 & 0.0169 & 0.1054 & 0.0152 & 0.0499 & 0.0178 \\
johnshopkins55 & 0.0071 & 0.0061 & 0.0207 & OOM & 0.0102 & 0.1234 & 0.0244 \\
amherst41 & 0.0062 & 0.0058 & 0.0101 & OOM & 0.0053 & 0.0368 & 0.0162 \\ \bottomrule
\end{tabular}
}
\end{sc}
\end{table}
\clearpage

\section{Ablation Studies}
\label{app:ablationstudy}

This section investigates how different components of \sgs behave and contribute to overall performance. We organize this section as follows,

\begin{enumerate}
    \item Section~\ref{subsec:ab_edgemlpgnn} investigates $\gL_\mathrm{assor}, \gL_{cons}$, \edgemlp, \gnn, and Conditional Updates mechanism. We also compare its runtime against standard \sgs training vs \sgs with conditional updates. We also show \sgs can be used with other GNNs in Sec~\ref{app:othergnn}.

    \item Section~\ref{app:parameters} explores parameter settings with/without prior, different normalization and sampling methods, and inference with/without an ensemble of subgraphs.

    \item Section~\ref{app:gridsearch} shows ideal settings for regularizer coefficients $\alpha_1, \alpha_2, \alpha_3$. We also show the impact of $\lambda$ for augmenting the learned probability distribution $p$ using $p_\mathrm{prior}$.
\end{enumerate}

\subsection{$\gL_\mathrm{assor}, \gL_{cons}$, \edgemlp, \gnn, and Conditional Updates}
\label{subsec:ab_edgemlpgnn}

Table~\ref{tab:ablationgnn} illustrates the performance of \sgs with various combinations of regularizers, embedding layers in \edgemlp, and convolutional layers in \gnn. 

\begin{enumerate}
    \item $\gL_\mathrm{assor}$: Case 1, 2 shows improvement in results when $\gL_\mathrm{assor}$ is used.

    \item $\gL_\mathrm{cons}$: From cases 4, 6, 8 shows $L_\mathrm{cons}$ improves results when $\texttt{GCN}$ module is used in the GNN.

    \item \edgemlp: In general, we found that the \texttt{GCN} layers for \edgemlp encodings performs best (cases 5-6, 11-12). 
    
    \item \gnn: Both \texttt{GCN} and  \texttt{GAT} modules yielded overall the best results (case 6, 11).
    
    \item Conditional updates: Case 3 shows that conditional updates can benefit some graphs.     
    
    We also investigated the runtime and quality of \sgs with and without conditional updates for large-scale graphs. We found both have similar runtime as the condition check expense gets compensated by fewer updates of \edgemlp. Detailed comparisons of conditional updates in large graphs ($|\gE|\ge 1M$) are included in the Table~\ref{tab:largescaleruntime}.   
\end{enumerate}

\begin{table}[!htbp]
\caption{Combination of \edgemlp, \gnn, Conditional update and $L_\mathrm{cons}.$}
\label{tab:ablationgnn}
\centering
\begin{sc}
\resizebox{0.9\linewidth}{!}
{
\def\arraystretch{1.0}
\begin{tabular}{cccccc|ccc}
\toprule
\textbf{} & {$\mathbf{L_\mathrm{assor}}$} & {$\mathbf{L_\mathrm{cons}}$} & \textbf{\edgemlp} & \textbf{\gnn} & \textbf{Cond.} & {\textbf{SmallCora}} & {\textbf{CoraFull}} & {\textbf{johnshopkin}} \\ \midrule
1 & N & N & \cellcolor[HTML]{F4CCCC}MLP & \cellcolor[HTML]{FFF2CC}GCN & N & 73.80 $\pm$ 0.67 & 61.78 $\pm$ 0.20 & 66.12 $\pm$ 1.38 \\
2 & Y & N & \cellcolor[HTML]{F4CCCC}MLP & \cellcolor[HTML]{FFF2CC}GCN & N & 74.88 $\pm$ 0.15 & 63.99 $\pm$ 0.24 & 66.18 $\pm$ 1.05 \\
3 & Y & N & \cellcolor[HTML]{F4CCCC}MLP & \cellcolor[HTML]{FFF2CC}GCN & Y & 75.82 $\pm$ 0.46 & 64.07 $\pm$ 0.31 & 66.87 $\pm$ 0.93 \\
4 & Y & Y & \cellcolor[HTML]{F4CCCC}MLP & \cellcolor[HTML]{FFF2CC}GCN & Y & 76.58 $\pm$ 0.47 & 65.33 $\pm$ 0.28 & 69.25 $\pm$ 0.76 \\
5 & Y & N & \cellcolor[HTML]{D0E0E3}GCN & \cellcolor[HTML]{FFF2CC}GCN & Y & 75.80 $\pm$ 0.77 & 65.66 $\pm$ 0.14 & 71.06 $\pm$ 0.32 \\
\rowcolor[HTML]{D9D9D9} 
6 & Y & Y & \cellcolor[HTML]{D0E0E3}GCN & \cellcolor[HTML]{FFF2CC}GCN & Y & 77.50 $\pm$ 0.62 & \textbf{66.56 $\pm$ 0.22} & 70.79 $\pm$ 0.18 \\
7 & Y & N & \cellcolor[HTML]{CFE2F3}GSAGE & \cellcolor[HTML]{FFF2CC}GCN & Y & 75.82 $\pm$ 0.44 & 63.70 $\pm$ 0.09 & 67.53 $\pm$ 0.80 \\
8 & Y & Y & \cellcolor[HTML]{CFE2F3}GSAGE & \cellcolor[HTML]{FFF2CC}GCN & Y & 77.48 $\pm$ 0.61 & 65.12 $\pm$ 0.11 & 68.63 $\pm$ 0.66 \\
9 & Y & N & \cellcolor[HTML]{F4CCCC}MLP & \cellcolor[HTML]{D9EAD3}GAT & Y & 77.72 $\pm$ 1.63 & 66.40 $\pm$ 0.08 & 67.92 $\pm$ 0.73 \\
10 & Y & Y & \cellcolor[HTML]{F4CCCC}MLP & \cellcolor[HTML]{D9EAD3}GAT & Y & 75.78 $\pm$ 3.22 & 66.46 $\pm$ 0.16 & 68.17 $\pm$ 0.33 \\
\rowcolor[HTML]{D9D9D9} 
11 & Y & N & \cellcolor[HTML]{D0E0E3}GCN & \cellcolor[HTML]{D9EAD3}GAT & Y & \textbf{78.18 $\pm$ 0.74} & 66.33 $\pm$ 0.20 & \textbf{71.97 $\pm$ 0.59} \\
12 & Y & Y & \cellcolor[HTML]{D0E0E3}GCN & \cellcolor[HTML]{D9EAD3}GAT & Y & 76.94 $\pm$ 2.76 & 66.39 $\pm$ 0.18 & 71.00 $\pm$ 0.96 \\
13 & Y & N & \cellcolor[HTML]{CFE2F3}GSAGE & \cellcolor[HTML]{D9EAD3}GAT & Y & 77.98 $\pm$ 0.79 & 66.38 $\pm$ 0.23 & 69.29 $\pm$ 1.56 \\
14 & Y & Y & \cellcolor[HTML]{CFE2F3}GSAGE & \cellcolor[HTML]{D9EAD3}GAT & Y & 75.74 $\pm$ 2.02 & 66.41 $\pm$ 0.25 & 68.82 $\pm$ 0.24\\\bottomrule
\end{tabular}
}
\end{sc}
\end{table}

\subsubsection{\sgs with other GNN modules}
\label{app:othergnn}
The sampled sparse subgraphs from \edgemlp can be fed into any downstream GNNs and demonstrate a couple of variants of \sgs. Chebnet from Chebyshev~\cite{he2022convolutional}, Graph Attention Network (GAT)~\cite{velivckovic2017graph}, Graph Isomorphic Network (GIN)~\cite{xu2018powerful}, Graph Convolutional Network (GCN)~\cite{kipf2016semi} are some of the GNNs used for demonstration. 

Fig.~\ref{fig:sparsityvsgnn} shows the performance of these GNNs on homophilic and heterophilic datasets. \texttt{SGS-GCN} and \texttt{SGS-GAT} are two best performing models.

\begin{figure}[!htbp]
\centering
\includegraphics[width=0.6\linewidth]{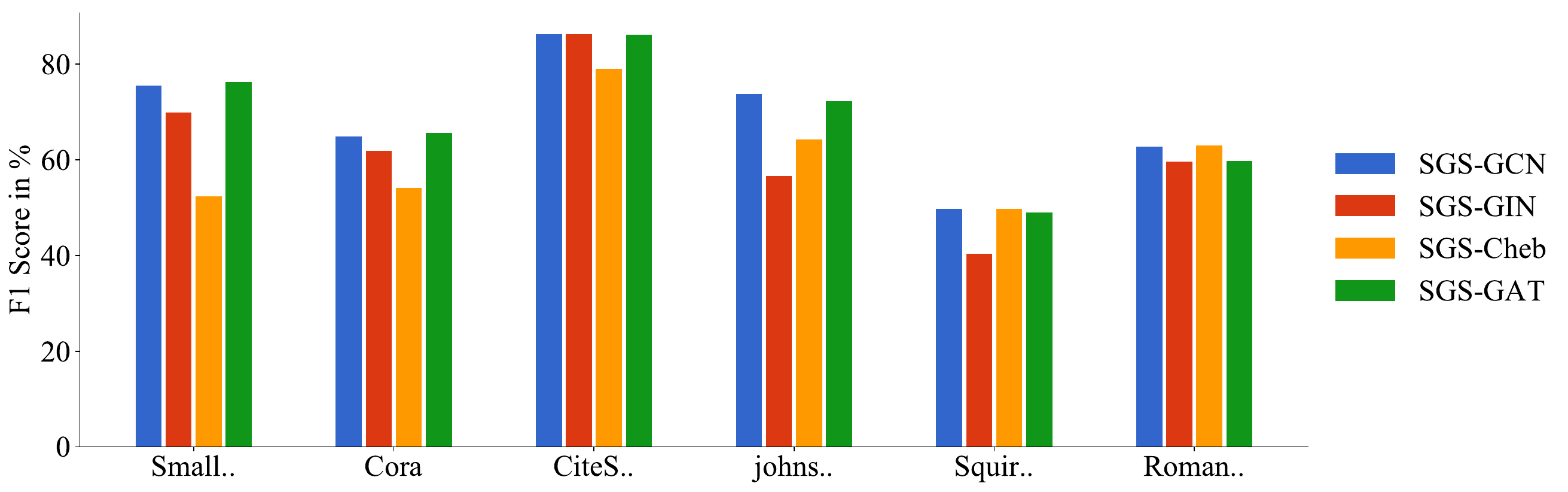}

\caption{Performance of \sgs with different GNN modules using $20\%$ edges.}
\label{fig:sparsityvsgnn}
\end{figure}


\subsection{Impact of  $p_\mathrm{prior}$, Normalization \& Sampling schemes, and Ensembling on \sgs}
\label{app:parameters}

Table~\ref{tab:ablation} highlights the impact of the following components: 

\begin{table}[t]
\caption{Ablation Studies different components of \sgs.}
\label{tab:ablation}
\centering
\begin{sc}
\resizebox{1.0\linewidth}{!}
{
\def\arraystretch{1.0}
\begin{tabular}{@{}cccccc|cccccc@{}}
\toprule
\textbf{Case} & \textbf{Prior} & \textbf{Norm.} & \textbf{Sampl.}  & \multicolumn{1}{c|}{\textbf{Ensem.}} & \textbf{SmallCora} & \textbf{Cora\_ML} & \textbf{CiteSeer} & \textbf{Squirrel} & \textbf{johnshopkins55} & \textbf{Roman-empire} \\ \midrule
1 & N  & Sum & Mult  & \multicolumn{1}{c|}{N} & 69.30 $\pm$ 1.20 & 81.05 $\pm$ 0.74 & 82.84 $\pm$ 0.47 & 48.90 $\pm$ 1.06 & 63.86 $\pm$ 0.58 & 63.27 $\pm$ 0.31 \\
2 & N  & Sum & Mult  & \multicolumn{1}{c|}{Y} & 72.84 $\pm$ 0.91 & 82.92 $\pm$ 0.73 & \textbf{87.42 $\pm$ 0.42} & 46.30 $\pm$ 1.18 & 65.14 $\pm$ 1.14 & \textbf{64.31 $\pm$ 0.13} \\
3 & Y  & Sum & Mult & \multicolumn{1}{c|}{Y} & 75.54 $\pm$ 0.41 & \textbf{83.87 $\pm$ 0.69} & 86.31 $\pm$ 0.26 & 47.97 $\pm$ 0.60 & 72.68 $\pm$ 0.51 & 62.88 $\pm$ 0.19 \\
4 & Y & Softmax & Mult & \multicolumn{1}{c|}{Y} & 75.44 $\pm$ 0.51 & 83.81 $\pm$ 0.72 & 86.31 $\pm$ 0.26 & 47.90 $\pm$ 0.42 & \textbf{72.97 $\pm$ 0.20} & 62.98 $\pm$ 0.16 \\
5 & Y & Gumbel & TopK & \multicolumn{1}{c|}{Y} & \textbf{76.24 $\pm$ 0.43} & 83.36 $\pm$ 0.34 & 86.44 $\pm$ 0.16 & \textbf{51.49 $\pm$ 0.72} & 71.83 $\pm$ 1.00 & 63.00 $\pm$ 0.11 \\ \midrule
\multicolumn{11}{l}{\textbf{Prior:} Use of prior, \textbf{Sum:} Sum-Normalization, \textbf{Softmax:} \textit{Softmax} with temperature annealing}\\
\multicolumn{11}{l}{\textbf{Mult:} \textit{Multinonmial} Sampling, \textbf{Gumbel:} \textit{Gumbel-Softmax} with \textit{TopK}}\\
\end{tabular}
}
\end{sc}
\end{table}

\begin{enumerate}
    \item Prior $p_\mathrm{prior}$: Cases 2-3 show that augmenting the learned probability distribution $\tilde{p}$ with prior $p_\mathrm{prior}$ can benefit some datasets. We have also conducted an in-depth comparison between the distributions $\tilde{p}$ and augmented distribution $\tilde{p}_a$. Figure~\ref{fig:augment_p} shows that there $\tilde{p}_a$ is left skewed whereas $\tilde{p}$ is not. Since rare edges still get some negligible mass, it is possible for $\tilde{\gG}$ constructed from $\tilde{p}_a$ to retain some bridge edges from these tails, if there are any. 
    
\begin{figure}[!htbp]
    \centering
    \subfigure{\includegraphics[width=0.4\linewidth]{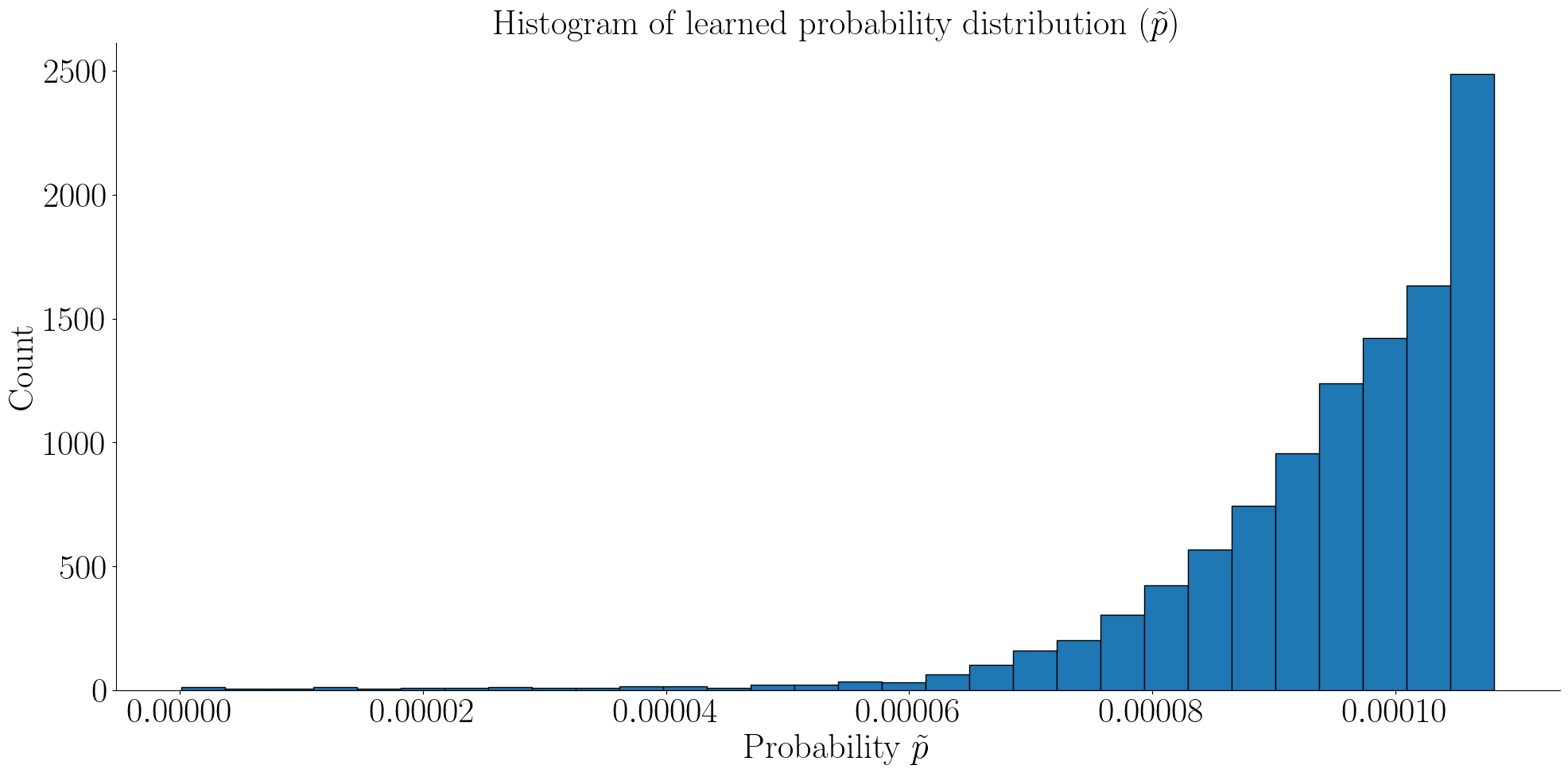}
    \label{subfig:learnedp}} 
     \subfigure{\includegraphics[width=0.4\linewidth]{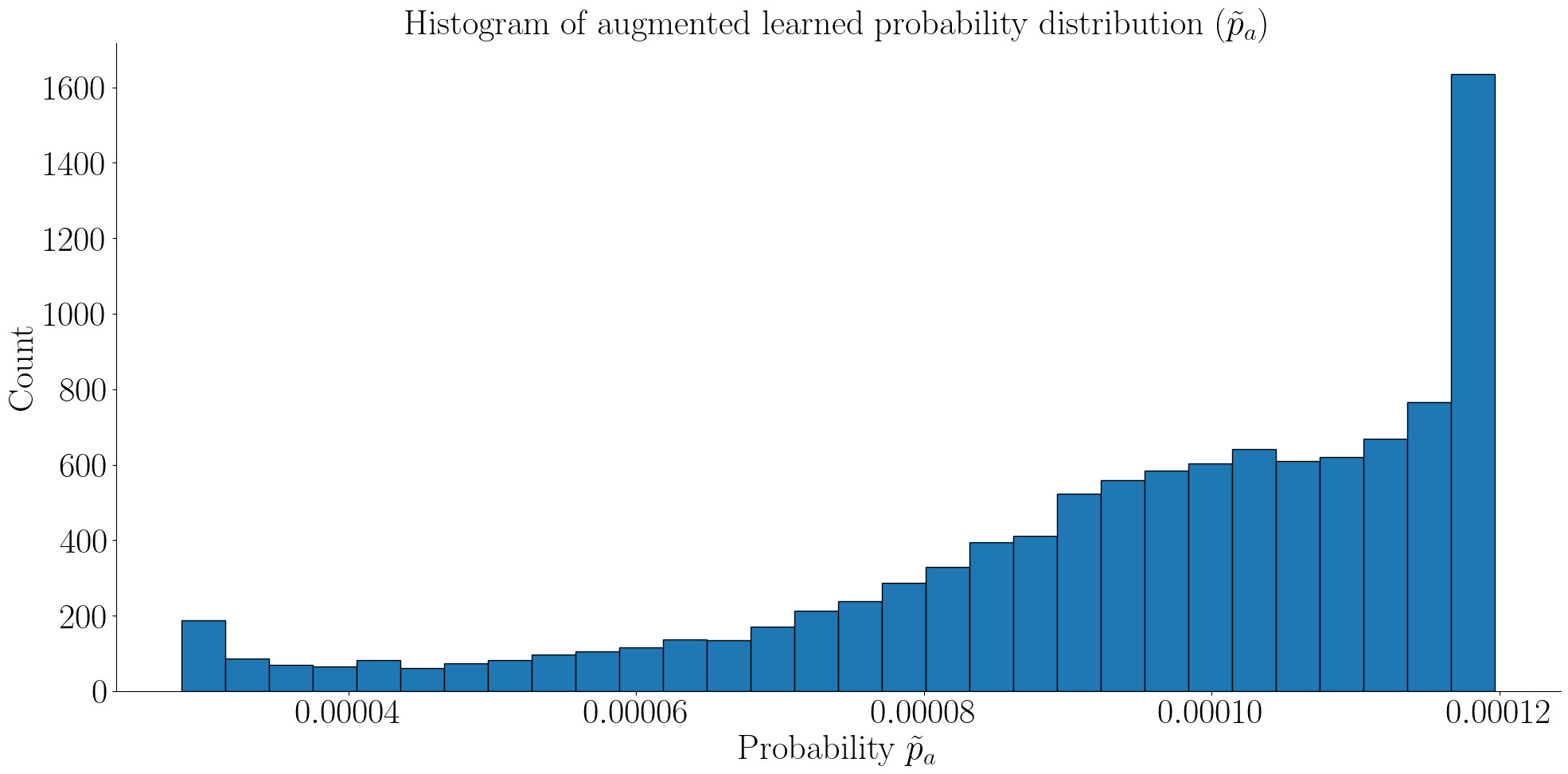}
     \label{subfig:priorpa}} 
     \subfigure{\includegraphics[width=0.4\linewidth]{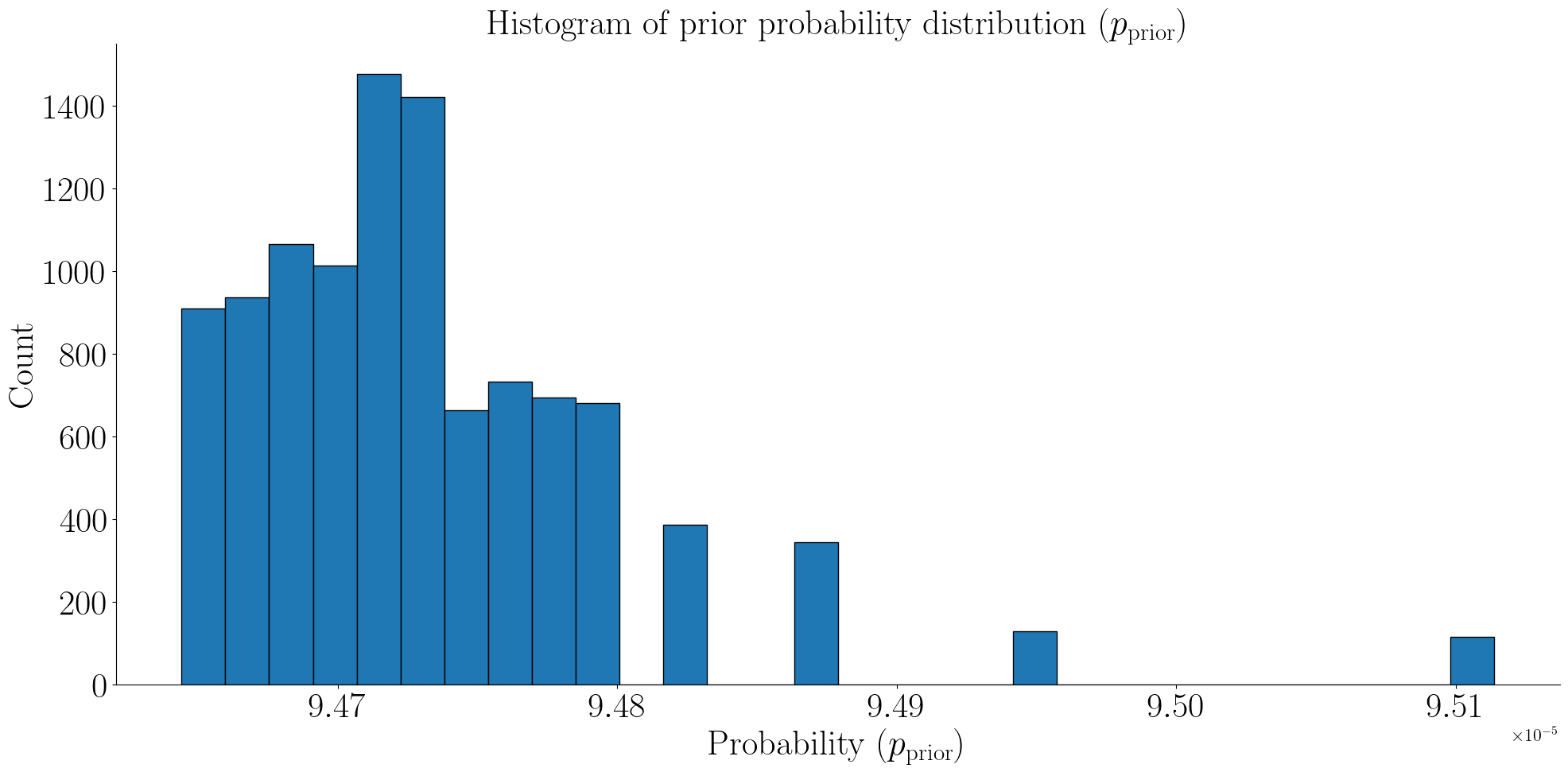}
     \label{subfig:priorp}}    
    \caption{The learned probability distribution $\tilde{p}$ (top-left), augmented distribution $\tilde{p}_a$(top-right) and fixed prior $p_\mathrm{prior}$ (bottom). Augmentation puts negligible mass on some rare yet critical edges in the left tail of $\tilde{p}_a$.}
    \label{fig:augment_p}
\end{figure}
    \item Normalization and Sampling: We considered three normalization and sampling techniques. i) sum-normalization with multinomial sampling, ii) softmax-normalization with temperature with multinomial sampling, and iii) Gumbel softmax normalization with Topk selection. Cases 3-5 show that each of these techniques can improve results in certain datasets, and thus, it is difficult to nominate a single one as best. However, in our experiments, we opted for multinomial sampling with softmax temperature annealing for training to encourage exploration in early iterations.

    \item Ensemble subgraphs during inference: Case 2 demonstrates that using multiple subgraphs for ensemble prediction yields better results than a single subgraph (Case 1).    
\end{enumerate}



\clearpage
\subsection{Choosing Values for Regularizer coefficient \(\alpha_3\) and Parameter \(\lambda\)}
\label{app:gridsearch}
Recall that \sgs computes the total loss at each epoch as
\[
\gL = \alpha_1\gL_{CE}+\alpha_2\gL_\mathrm{assor}+\alpha_3\gL_\mathrm{cons},
\]
where $0 \leq \alpha_1,\alpha_2,\alpha_3 \leq 1$ are regularizer coefficients corresponding to the cross-entropy loss $\gL_{CE}$, assortativity loss $L_\mathrm{assor}$ and  consistency loss $\gL_\mathrm{cons}$ respectively. 

Also recall that, when we use a prior probability distribution, the learned distribution values of $\tilde{p}$ are weighted through $\lambda$ in
$\Tilde{p} = \lambda\Tilde{p}+(1-\lambda) p_\mathrm{prior}$

To avoid numerous combinations of values of three coefficients + the parameter $\lambda$, we have fixed $\alpha_1 = 1$, and $\alpha_2 = 1$. In the following, we investigate the performance of \sgs with different values for $\alpha_3$ and $\lambda$.

Fig.~\ref{fig:consbias} shows a grid search for different combinations of $\lambda$ and $\alpha_3$. As per our observation, the recommended values are $\lambda \in [0.3, 0.7], \alpha_3=0.5$.
\begin{figure}[h]
\centering
\includegraphics[width=0.4\linewidth]{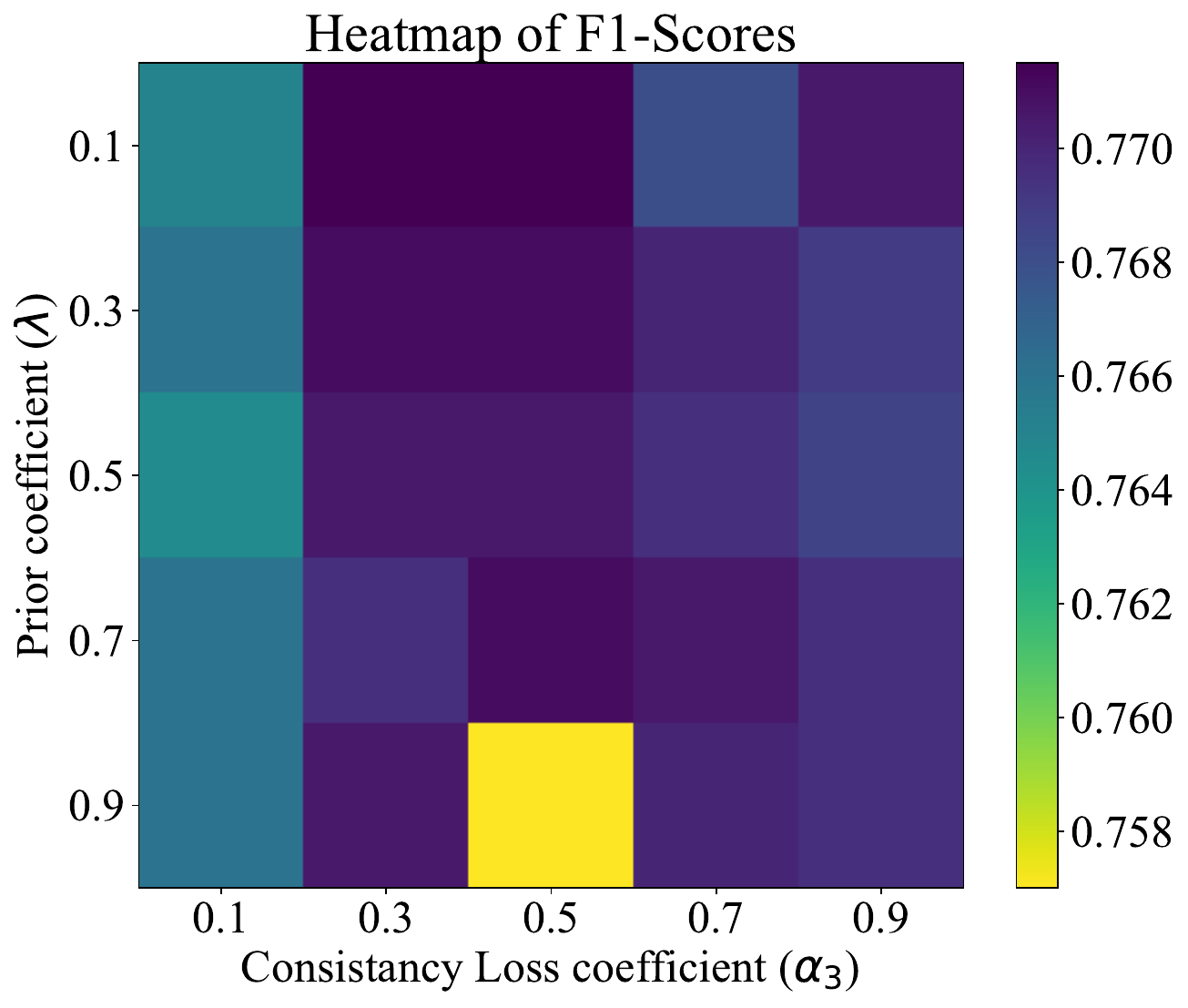}
\caption{Grid search for the parameter $\lambda$ for prior, and consistency loss, $\alpha_3$ (Cora dataset).}
\label{fig:consbias}
\end{figure}

\end{document}